\documentclass{article} %
\usepackage{arxiv,times}
\usepackage{url}
\usepackage{microtype}
\usepackage{graphicx}
\usepackage{subfigure}
\usepackage{booktabs}
\usepackage[colorlinks=true,allcolors=blue]{hyperref}

\usepackage{amsmath}
\usepackage{amssymb}
\usepackage{mathtools}
\usepackage{amsthm}
\usepackage{listings}%
\lstset{
  basicstyle=\ttfamily,
  mathescape
}
\usepackage{amsfonts}       %
\usepackage[utf8]{inputenc} %
\usepackage[T1]{fontenc}    %
\usepackage{url}            %
\usepackage{nicefrac}      
\usepackage{microtype}      %
\usepackage{xcolor}         %
\usepackage{natbib}
\usepackage{braket}
\usepackage{thmtools}
\usepackage{comment}
\usepackage{caption}

\usepackage[shortlabels]{enumitem}
\setlist[enumerate,1]{leftmargin=0.5cm}
\setlist[itemize,1]{leftmargin=0.5cm}

\usepackage[capitalize,noabbrev]{cleveref}
\newcommand{\TransformerBlock}{f}
\newcommand{\RNNBlock}{f}
\newcommand{\Transformer}{T}
\newcommand{\Poly}{\text{Poly}}
\newcommand{\poly}{\text{poly}}
\newcommand{\GLU}{\mathrm{LU}}
\newcommand{\FFN}{g}

\newcommand{\R}{\mathbb{R}}
\newcommand{\argmax}{\arg \max}
\newcommand{\argmin}{\arg \min}

\newcommand{\Attention}{\mathcal{A}}
\newcommand{\Softmax}{\mathrm{softmax}}
\newcommand{\Key}{W^{(K)}}
\newcommand{\Query}{W^{(Q)}}
\newcommand{\Value}{W^{(V)}}
\newcommand{\HeadKey}{W^{(K,h)}}
\newcommand{\HeadQuery}{W^{(Q, h)}}
\newcommand{\HeadValue}{W^{(V, h)}}
\newcommand{\CausalMask}{C}
\newcommand{\Embedding}{\textrm{Emb}}
\newcommand{\WordEmbedding}{W^{(E)}}
\newcommand{\PositionEmbedding}{W^{(P)}}

\newcommand{\Vocab}{V}
\newcommand{\VocabAnswer}{V_{\mathrm{A}}}
\newcommand{\VocabSize}{|V|}
\newcommand{\Prb}{\mathbb{P}}
\newcommand{\width}{w}
\newcommand{\dimension}{d}
\newcommand{\hiddendimension}{\Lambda}
\newcommand{\state}{s}
\newcommand{\TuringState}{\mathrm{State}}
\newcommand{\Direction}{\mathrm{Direction}}
\newcommand{\StateTransition}{\mathbf{t}}
\newcommand{\StateOutput}{\mathbf{o}}
\newcommand{\RNN}{R}
\newcommand{\IsTree}{\mathrm{IsTree}}
\newcommand{\YES}{\mathrm{YES}}
\newcommand{\NO}{\mathrm{NO}}

\newcommand{\ReLU}{\mathrm{ReLU}}
\newcommand{\one}[1]{\mathbf{1}{\left[#1\right]}}
\newcommand{\ROUND}{\mathrm{ROUND}}
\newcommand{\COPY}{\mathrm{COPY}}
\newcommand{\COUNT}{\mathrm{COUNT}}

\newcommand{\RETRIEVE}{\mathrm{RETRIEVE}}
\newcommand{\verticenumber}{n}
\newcommand{\edgenumber}{m}
\newcommand{\graphset}{\mathcal{G}}
\newcommand{\graph}{G}
\newcommand{\Edge}{\mathrm{E}}
\newcommand{\Tokenize}{\mathrm{Tokenize}}
\newcommand{\precision}{p}
\newcommand{\maxposition}{L}
\newcommand{\position}{l}
\newcommand{\onehot}{w}
\newcommand{\numspecial}{n_S}
\newcommand{\hiddenstate}{X}
\newcommand{\VEmbed}[1]{#1 \onehot_1} 
\newcommand{\TokenSequence}{\mathcal{S}}
\newcommand{\InputTokenSequence}{\mathcal{S}_{\mathrm{in}}}

\newcommand{\ParameterSize}{\mathrm{P}}

\newcommand{\MemorySize}{\mathrm{M}}
\newcommand{\CircuitSize}{\mathrm{C}}
\newcommand{\model}{M}
\newcommand{\inputlength}{l_0}
\newcommand{\outputlength}{T}
\newcommand{\StartSearch}{\texttt{<}\mathrm{StartSearch}\texttt{>}}
\newcommand{\EndSearch}{\texttt{<}\mathrm{EndSearch}\texttt{>}}
\newcommand{\NextToken}{s^{\mathrm{next}}}
\newcommand{\StartPos}{l_s}
\newcommand{\EndPos}{l_e}
\newcommand{\iterpos}{k}

\newcommand{\StartSentence}{\texttt{<s>}}
\newcommand{\Integer}{\mathbb{Z}_{\precision}}
\newcommand{\Match}{\mathrm{Match}}
\newcommand{\MatchNext}{\mathrm{MatchNext}}
\newcommand{\nextnum}{\mathrm{next}}
\newcommand{\appnext}{\overline{\mathrm{next}}}
\newcommand{\algo}{\mathcal{A}}
\newcommand{\Z}{\mathbb{Z}}
\newcommand{\layer}{\ell}
\newcommand{\headnumber}{H}
\newcommand{\Time}{\mathrm{TIME}}
\newcommand{\tape}{\mathrm{TAPE}}
\newcommand{\Tape}{\tape}
\newcommand{\pointer}{\mathrm{POINTER}}
\newcommand{\Pointer}{\pointer}
\newcommand{\SearchResult}{\mathrm{SearchResult}}
\newcommand{\Failed}{\mathrm{FAILED}}

\newcommand{\Detokenize}{\mathrm{Detokenize}}

\newcommand{\regexparam}[1]{{\color{blue}{#1}}}
\newcommand{\result}[1]{{\color{red}{{#1}}}}
\newcommand{\Hybrid}{\mathcal{H}}
\newcommand{\MatchClose}{\mathrm{MatchClose}}
\newcommand{\MatchNearest}{\mathrm{MatchNearest}}
\newcommand{\ICRA}{\text{In-context RAG}}
\newcommand{\ICR}{\text{In-context Retrieval}}
\newcommand{\specialtoken}{\kappa}

\theoremstyle{plain}
\newtheorem{theorem}{Theorem}[section]
\newtheorem{proposition}[theorem]{Proposition}
\newtheorem{lemma}[theorem]{Lemma}

\theoremstyle{definition}
\newtheorem{definition}[theorem]{Definition}

\theoremstyle{remark}

\usepackage[textsize=tiny]{todonotes}

\title{RNNs are not Transformers (Yet): \\ The Key Bottleneck on $\ICR$}
\date{}

\author{
    Kaiyue Wen$^1$\thanks{Equal contribution} \quad Xingyu Dang$^1$\footnotemark[1] \quad
    Kaifeng Lyu$^2$\thanks{Corresponding author} \\
    $^1$Institute for Interdisciplinary Information Sciences, Tsinghua University\\
    $^2$Department of Computer Science \& Princeton Language and Intelligence, Princeton University \\
    \texttt{\{wenky20,dangxy20\}@mails.tsinghua.edu.cn} \\
    \texttt{klyu@cs.princeton.edu}
}

\usepackage{tikz}
\usetikzlibrary{arrows.meta, positioning, calc, shapes.geometric, decorations.pathreplacing, fit, arrows.meta}

\begin{document}

\maketitle

\begin{abstract}
This paper investigates the gap in representation powers of Recurrent Neural Networks (RNNs) and Transformers in the context of solving algorithmic problems. We focus on understanding whether RNNs, known for their memory efficiency in handling long sequences, can match the performance of Transformers, particularly when enhanced with Chain-of-Thought (CoT) prompting. Our theoretical analysis reveals that CoT improves RNNs but is insufficient to close the gap with Transformers. A key bottleneck lies in the inability of RNNs to perfectly retrieve information from the context, even with CoT: 
for several tasks that explicitly or implicitly require this capability, such as associative recall and determining if a graph is a tree, we prove that RNNs are not expressive enough to solve the tasks while Transformers can solve them with ease.
Conversely, we prove that adopting techniques to enhance the in-context retrieval capability of RNNs, including Retrieval-Augmented Generation (RAG) and adding a single Transformer layer, can elevate RNNs to be capable of solving all polynomial-time solvable problems with CoT, hence closing the representation gap with Transformers.\footnote{Code is available at \url{https://github.com/dangxingyu/rnn-icrag}}
\end{abstract}

\section{Introduction}
\label{sec:intro}
Transformers~\citep{vaswani2017attention} have become the dominant choice of the backbone for Large Language Models (LLMs). The core component of Transformers is self-attention modules, which allow the model to route information densely across the entire sequence. However, 
this design leads to high inference costs for long sequences, including a memory cost linear in the sequence length to maintain intermediate attention keys and values for each token, and a time cost quadratic in the sequence length to compute the attention score for each pair of tokens.

Recently, Recurrent Neural Networks (RNNs) have become an increasingly popular choice in sequence modeling tasks due to their ability to maintain a memory size constant in sequence length during inference, thus being more memory efficient than Transformers.
\citet{katharopoulos2020transformers} showed that Linear Transformers (Transformers with linear attention) can be expressed as RNNs. 
\citet{gu2022efficiently} took a different path to design RNNs by structuring latent states as State Space Models (SSMs) from control theory. These ideas have led to a series of development of modern RNNs, including RWKV~\citep{peng2023rwkv}, RetNet~\citep{sun2023retentive}, and Mamba~\citep{gu2023mamba}. Most notably, Mamba can achieve competitive performance with Transformers on several sequence modeling tasks with linear time and constant memory in sequence length.

{\em Can RNNs replace Transformers yet?}
The rise of these modern RNNs has led to an interest in understanding their limitations.
A recent work by~\citet{arora2023zoology} showed that a broad family of RNNs, input-independent gating SSMs, are empirically inferior to Transformers in a task that has a long history in artificial intelligence, \textit{Associative Recall} (AR) \citep{willshaw1969non,hopfield1982neural,hinton2014parallel}:
Given a series of key-value pairs as a string, the model is required to recall the value given a key.
On the theory side, \citet{sanford2023representational} and \citet{jelassi2024repeat} demonstrated that constant-memory RNNs do not have sufficient representation power to solve the tasks of averaging a given subset of input vectors ($q$-sparse averaging) and repeating the input sequence (copying), respectively, while there exist shallow Transformers that can solve these tasks.

However, the above results do not exclude the possibility that enhancing RNNs with additional prompting techniques or minor architectural changes could close the gap with Transformers.
In fact, Transformers themselves are not perfect either and sometimes need additional techniques at inference time to perform well. 
For instance, Transformers may struggle with mathematical and algorithmic reasoning problems if they are forced to produce the correct answer immediately after processing the input sequence.
But with {\em Chain-of-Thought} (CoT) prompting applied, a prompting technique that guides the model to generate a series of intermediate tokens before arriving at the final answer, their performance can be significantly improved.
\cite{feng2023towards,li2024chain} explained the success of CoT from the perspective of representation power: Transformers alone do not have sufficient representation power to solve problems beyond a certain circuit complexity class ($\mathrm{TC}^0$), but with CoT, they can even simulate any polynomial-time Turing machine. 

The effectiveness of CoT on Transformers naturally leads to the following question:
\begin{center}
    {\em Can similar enhancements, such as adopting CoT, improve RNNs to be on par with Transformers?}
\end{center}

\begin{figure}[t]
    \vspace{-0.15in}
    \centering
        \begin{tikzpicture}
            \definecolor{softBlue}{RGB}{135, 206, 235} %
            \definecolor{softGreen}{RGB}{144, 238, 144} %
            \definecolor{softRed}{RGB}{240, 128, 128}   %
  
            \draw[fill=softGreen] (0, 0) rectangle (6.1, 3.8);
            \node at (4.2, 3.2) [align=center] {Transformer + CoT  $\supset \mathrm{P}$};

            \node[] at (3, -0.5) [align=center] {Transformer};

            \draw[fill=softBlue] (0.6, 0.36) rectangle (4.8, 2.84);
            \node at (2.7, 1.6) [align=center] {Vanilla Transformer};

            \draw[fill=softRed] (7, 0) rectangle (13.1, 3.8);
            \node at (10.9, 3.2) [align=center] {RNN + $\ICRA$ $\supset \mathrm{P}$};
        
            \draw[fill=softGreen] (7.6, 0.36) rectangle (11.8, 2.84);
            \node at (10.5, 2.4) [align=center] {RNN + CoT $\not \supset \mathrm{P}$};
        
            \draw[fill=softBlue] (7.9, 0.54) rectangle (10.6, 2.13);
            \node at (9.25, 1.34) [align=center] {Vanilla RNN};

            \node[] at (10, -0.5) [align=center] {RNN (This work)};
        \end{tikzpicture}

        \centering
        \caption{\textbf{Hierarchy of Representation Power}. While RNN with chain-of-thought (CoT) with $O(\log n)$ bit memory provably has strictly stronger representation power than RNN without CoT under mild complexity assumptions (\Cref{thm:rnncot}), it is still exponentially weaker than Transformer with CoT in representing solutions to algorithmic problems (\Cref{thm:rnn_trans_istree}). We proceed to show that the incapability of RNNs in $\ICR$  is the root cause of the gap and propose two forms of $\ICR$ Augmented Generation ($\ICRA$) to close the gap by illustrating their power to simulate any polynomial-time Turing machines (\Cref{thm:rnn_turing,thm:hybrid_turing}).}
\end{figure}
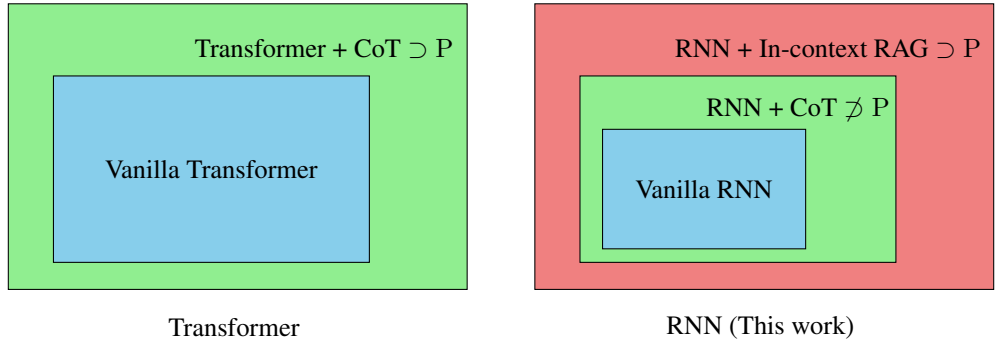

\textbf{Our Contributions.} This paper examines potential ways to close the gap in the representation powers of RNNs and Transformers (with softmax attention) on algorithmic problems.
Through a series of lower and upper bound results, we show that CoT improves the representation power of RNNs, but to close the gap with Transformers, CoT alone is not enough
to overcome a key bottleneck of RNNs: their inability to retrieve information from the context, which we call {\em in-context retrieval} for short.

We further illustrate that addressing this in-context retrieval bottleneck is sufficient to close this gap: RNNs can solve all polynomial-time solvable problems if adopting techniques to enhance the in-context retrieval capability, including involving {\em Retrieval-Augmented Generation} (RAG) and using {\em hybrid architectures}, such as appending a single Transformer layer.

Our main contributions are listed as follows:
\begin{enumerate}[leftmargin=*]
    \item \textbf{CoT improves RNNs but cannot close the representation gap with Transformers.} (\Cref{sec:curse})
        \begin{itemize}[leftmargin=*]
            \item On the positive side, we prove that CoT makes RNNs strictly more expressive under mild assumptions from circuit complexity ($\mathrm{PSPACE} \not\subset \mathrm{P/poly}$).
            \item On the negative side, we show that adopting CoT is not enough to close the representation gap between RNNs and Transformers: the memory efficiency of RNNs fundamentally limits their ability to perform in-context retrieval, even with CoT.
            This point is made concrete by proving that RNNs with CoT cannot solve a set of fundamental algorithmic problems that directly ask for in-context retrieval, including associative recall.
            \item We further exemplify that in-context retrieval can be implicitly required in tasks that appear unrelated, by proving the inability of RNNs to solve the classic problem of determining whether a graph is a tree ($\IsTree$).
            \item On the other hand, we prove that Transformers have the representation power to solve many of the above tasks with ease, including $\IsTree$. Moreover, Transformers with CoT can even simulate RNNs with CoT efficiently, with only a small multiplicative factor in the number of parameters.
        \end{itemize}
        Our negative results hold for a wide range of RNN architectures, including the aforementioned Mamba, RWKV, and even Linear Transformers. Technically, this is because RNNs are so memory efficient that they can trigger streaming lower bounds~\citep{sanford2023representational}, especially for problems that require in-context retrieval.
    \item \textbf{Enhancing the in-context retrieval capability of RNNs can close the gap.} (\Cref{sec:fix})
        \begin{itemize}[leftmargin=*]
            \item We prove that allowing RNNs to invoke function calls to perform a primitive of in-context retrieval based on regular expression is sufficient to boost their representation power to solve all polynomial-time solvable problems with CoT, hence closing the representation gap.
            \item %
            As one layer of the Transformer is sufficient to perform many in-context retrieval operations, mixing some Transformer layers in RNNs should also narrow the representation gap. We prove that a minimal possible change in the RNN architecture can just work: adding one Transformer layer at the end of the RNN architecture is sufficient to close the representation gap.
        \end{itemize}
        Our positive results showing that enhancing in-context retrieval can improve RNNs' representation power hold for vanilla Linear RNNs, and can be easily extended to more complex architectures.
        The intuition behind these results is that RNN can focus on the local reasoning steps and use the in-context retrieval module to adaptively fetch the relevant information from the context.
\end{enumerate}

We validate our theoretical findings through synthetic and natural language experiments on IsTree and HotPot-QA, confirming that while CoT alone cannot close the performance gap between RNNs and Transformers, the proposed two solutions effectively narrow this gap. (\Cref{sec:validation})

\textbf{Implications.} We believe these results could provide valuable insights into architecture designs of LLMs: RNNs alone can suffer from many fundamental limitations in representation power, even with CoT; on the other hand, it is promising to explore strategies to enhance the in-context retrieval capability of RNNs with little overhead, such as using a hybrid architecture that mixes in one or more Transformer layers~\citep{ren2024sambasimplehybridstate,waleffe2024empiricalstudymambabasedlanguage,lieber2024jambahybridtransformermambalanguage}.

\begin{figure}[t] 
\centering
\includegraphics[width=\textwidth]{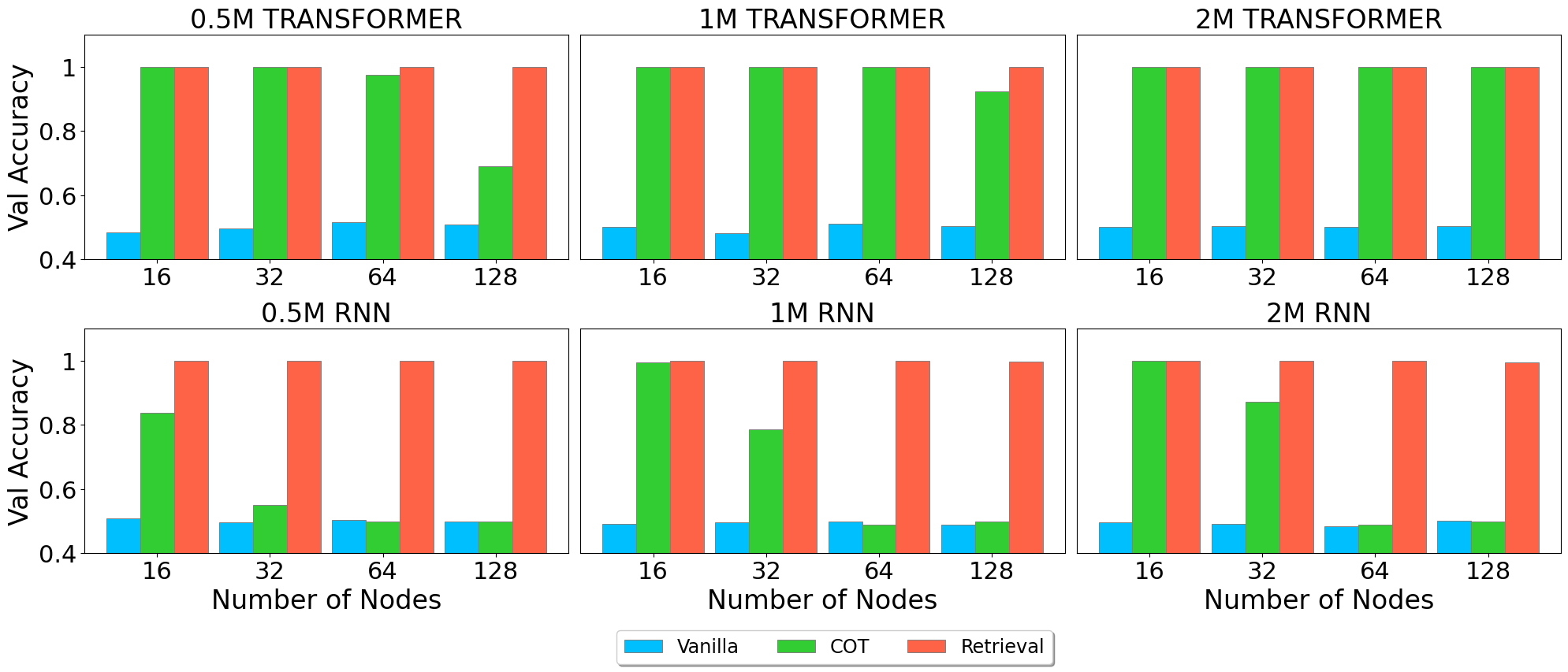} 
\caption{We train RNNs (Mamba) and Transformers (LLaMA 2~\cite{touvron2023llama}) with a frozen word embedding and decoding head of three different model sizes (0.5M, 1M, 2M) on IsTree with three different sizes of graph (16, 32, 64) under three different setups. \textbf{{\textcolor[RGB]{0, 191, 255}{Vanilla}}} means the model directly predicts the label. \textbf{{\textcolor[RGB]{50, 205, 50}{COT}}} means the model will generate a chain-of-thought process based on DFS (see~\Cref{alg:istree}) before prediction.   {\textbf{\textcolor[RGB]{255, 99, 71}{Retrieval}}} means the model will generate the chain of search queries and reasoning before prediction (see~\Cref{alg:istreeretrieval}). We observe that (1) Both Transformer and RNNs can't solve the IsTree question without a chain of thought; (2) RNNs' performance with chain-of-thought decays quickly when the number of nodes increase, which is consistent with our theory; (3) All models reach almost perfect accuracy when enhanced with retrieval.}
\label{fig:performance_comparison}
\end{figure}

\section{Related Works}
\label{sec:related}

\textbf{State Space Machines and Linear Transformers.} 
There has been a recent surge of interest in state space machines and (kernalized) linear transformers~\citep{gu2022efficiently,katharopoulos2020transformers,peng2023rwkv,sun2023retentive,gu2023mamba,fu2023hungry,poli2023hyena,luo2021stable,peng2021random,wang2020linformer}, which are a class of models that combine the parallelizability of the Transformer with the memory efficiency of the RNN. These models can process both a sequential and a recurrent form, and can use the former for fast parallelizable training and the latter for memory-efficient inference. 
However, these models are still empirically inferior to the Transformer in terms of performance. Our work investigates the reasons behind this gap and proposes to close it by enhancing the in-context retrieval capability.

\textbf{Chain of Thought (CoT).} Chain of thought \citep{wei2023chainofthought,nye2021work,kojima2023large,wang2024chainofthought} is an augmentation to the Transformer, that allows it to solve more complex reasoning tasks by generating a reasoning process before outputting the answer. It has been shown that Transformers with CoT provably have more expressive power than the original Transformer without CoT~\citep{feng2023towards,li2024chain}. However, the expressive power of RNNs with CoT has not yet been systematically studied. Theorem F.1 in~\citet{feng2023towards} shows that RNN cannot output a particular format of CoT for evaluating arithmetic expressions and solving linear equations while Transformers with the same amount of parameters can. Concurrent work~\citep{yang2024efficient} discovers that linear Transformers, a special class of RNNs, are not able to solve some dynamic programming problems with CoT, unless the number of parameters grows with the length of the input. One high-level message our work conveys is similar to theirs: RNNs have limited representation power to perform reasoning with CoT.
However, we show that such limitation is not specific to the output format or architecture and apply tools from streaming complexity to prove lower bounds on a broader range of tasks and memory-efficient architectures.

\textbf{Streaming Algorithms.} Our lower bound leverages the technique in streaming algorithms. Streaming algorithms are algorithms that take constant (typically just 1) pass over the input and use sublinear space, hence including RNNs with fixed state space as a special case. Works in streaming algorithms date back to the 1980s~\citep{munro1980selection} and have been formalized and popularized in the 1990s~\citep{alon1996space} due to the need to process large data streams. {{
The study of streaming algorithm is closely connected with the concept of 
\textbf{communication complexity}. The communication complexity of an algorithm is defined by the amount of communication cost required when the algorithm is distributed, and all streaming algorithms can be viewed as distributed algorithms with sublinear communication complexity, whose communication content is the internal state of the algorithm. The lower bound in our work is a direct application of this observation to the study of RNNs and we mainly consider the lower bounds for (1) indexing the input~\citep{munro1980selection} and (2) determining whether the input is a tree~\citep{henzinger1998computing}.}}

\textbf{Retrieval Augmented Generation.} Our work proposes to use retrieval augmentation to close the representation gap between RNNs and Transformers. This is consistent with the recent trend of retrieval augmented generation~\citep{guu2020realm,borgeaud2022improving,rubin2023longrange}. Empirically, retrieval augmented generation has been shown to improve the performance of recurrent models in various tasks~\citep{kuratov2024search,akyürek2024incontext} and our work provides a theoretical foundation for this phenomenon. Our work also shows that an attention layer can be used to simulate the retrieval process, which is consistent with the finding that attention can improve the performance of RNNs~\citep{vaswani2017attention,arora2023zoology,park2024mamba,peng2023rwkv,hao-etal-2019-modeling}. It has also been shown empirically that attention can be used to simulate complex retrieval process~\citep{jiang2022retrieval}.

\textbf{Comparison Between Transformers and RNNs (Without CoT).} %
A line of works focused on the comparison between RNNs and Transformers in terms of recognizing or generating formal languages~\citep{bhattamishra2020ability,Hahn_2020,merrill2022saturated}. These works show that the lack of recurrent structure in Transformers makes them fail to recognize some formal languages that RNNs can recognize. However,~\citet{liu2023transformers,yao2023selfattention,hao2022formal} show that such limitation can be mitigated when we consider bounded length of input or bounded grammar depth. Our work differs from these works in that we consider the expressive power of RNNs and Transformers with CoT and show that in this case, the gap between RNNs and Transformers is one-sided~(\Cref{thm:trans_beat_rnn}).

Prior work~\citep{arora2023zoology} has shown that input-independent gating SSMs are inferior to Transformers in the task called \textit{associative recall} \citep{willshaw1969non,hopfield1982neural,hinton2014parallel}. The task requires the model to recall a previously seen pattern given a partial input. They show that input-dependent gating SSMs have better performance in associative recall and also propose a hybrid architecture that combines input-independent state space machines with attention to achieve better performance. Our work differs from this work in the following ways: (1) Our work studies associative recall from a theoretical perspective and proves formal lower bounds on the memory size of RNNs necessary for solving associative recall and other retrieval tasks; (2) We also study hybrid architectures but we provide a proof that appending a single Transformer layer to RNNs can make them expressive enough; (3) Our theory applies to not only input-independent gating SSMs but also all RNNs with $o(n)$-bit memory.

Prior work~\citep{jelassi2024repeat} proves a representation gap between RNNs and Transformers in repeating a long sequence, which can be seen as a retrieval task. They show that RNNs have difficulty performing the task due to their limited memory.
Our work further proves that RNNs are limited in solving many other retrieval tasks, even with CoT. Technically, a key ingredient in their proof is a counting argument on the output sequence to show a limited memory size is not enough to produce too many different output sequences, but our proof can handle retrieval tasks that only require outputting a single token.

Notably,~\citet{sanford2023representational} apply communication complexity to prove circuit size or memory size lower bounds for RNNs and Transformers on the task of sparse averaging. 
\citet{sanford2024transformers} extend this technique to another task called hop$_k$, a generalization of the associative recall task. Our technique is similar to theirs since our proof is also based on communication complexity. But we consider a broader range of tasks including seemingly irrelevant reasoning tasks such as IsTree, and further explore various ways to close the representation gap.

\textbf{Representation Theory of RNNs.} 
Another line of works~\citep{li2021on,li2022on,alberti2023sumformer} studies the universal approximation power of RNNs. They show that the upper bound of the approximation power of linear RNNs will be constrained by the dimension of the hidden states. Their works on the high level are consistent with our findings but are not directly comparable because we are considering finite precision compute models with the assistance of CoT or $\ICRA$. 
\section{Preliminaries}

We introduce the definitions that are necessary for understanding our results
and defer other definitions to~\Cref{sec:def}.

\textbf{Vocabulary and Embeddings.}
A vocabulary $\Vocab$ is a finite set of tokens.
A word embedding for a token $v \in \Vocab$ is a vector $\WordEmbedding_v \in \R^{\dimension}$ that represents the token, and a position embedding for the $k$-th token in a sequence is a vector $\PositionEmbedding_k \in \R^{\dimension}$ that represents the position.
Given a sequence of tokens $\TokenSequence$, an embedding function $\Embedding(\TokenSequence)$ maps each token to a vector in $\R^{\dimension}$ by mixing word and position embeddings, resulting in a sequence of vectors.
To ease our comparison between RNNs and Transformers, in this paper, we assume fixed word and position embeddings and do not learn them during training.
See~\Cref{sec:def-models} for the formal definitions. This is common practice and has been used in many previous works studying the theory of Transformers~\citep{li2023transformers,tian2023scan}.

Many of the problems we study involve natural numbers up to $n$, where the input sequence length is linear in $n$. 
For simplicity, we assume the vocabulary contains $[n] = \{1, \dots, n\}$ and the word embedding for $i$ is defined as $\WordEmbedding_i = i \onehot_1$, where $\onehot_1$ is the first coordinate vector.
But in practice, the vocabulary size does not increase with $n$ and numbers may be tokenized into a few tokens according to their decimal representations.
We note that our results can be easily extended to this more practical case since our lower bounds do not rely on the specific form of the vocabulary and embeddings and for the upper bounds, our embedding can be easily represented by a few RNN or Transformer layers.

\textbf{Numerical Precision.}
We will consider computation models with fixed numerical precision in this paper. We will use $\precision$ to denote the precision of the number of bits to represent real numbers and use $\R_\precision$ to denote the set of all real numbers that can be represented by $\precision$-bit floating point numbers. We defer the details to~\Cref{sec:precision}. We will assume $\precision = O(\log n)$ in this paper and state the constant explicitly when necessary. This is a common assumption when studying the finite precision neural networks~\citep{feng2023towards,merrill2023parallelism}. 

\textbf{Language Modeling.}
We use $\Vocab^*$ and $\Vocab^+$ to denote the set of all finite sequences and all non-empty finite sequences of tokens in $\Vocab$, respectively.
We study language models that can predict the next token given a prefix of tokens.
For this, we define a language model (LM) as a function $M: \Vocab^+ \to \Prb_{\Vocab}$ that maps a non-empty sequence to a probability distribution over the next token, where $\Prb_{\Vocab}$ is the probability simplex over $\Vocab$.
We specifically study the case where the language model is realized by deep neural networks: first map the input sequence $\TokenSequence$ into a sequence of embeddings $\Embedding(\TokenSequence)$, and then apply a neural network, such as a Transformer or RNN, to process the embeddings and output the probability distribution. We would call a series of parameterized models with increasing input size a \emph{family} of models.

\textbf{Transformer.}
We will first define the Transformer architecture used in the theoretical analysis in this paper.

\begin{definition}[Transformer Block] \label{def:transformer_block}
    Let $\hiddenstate \in \R^{\dimension \times \position}$ be the input matrix, where $\position$ is the sequence length. The output of a Transformer block $\TransformerBlock$ is defined as:
    \begin{align}
        \TransformerBlock(\hiddenstate) &= 
        \hiddenstate + \Attention(\hiddenstate)+ \FFN(\hiddenstate + \Attention(\hiddenstate)) \nonumber, \\
        \Attention(\hiddenstate) &= \sum_{h = 1}^{\headnumber} \HeadValue \hiddenstate \Softmax\left(\frac{\left(\HeadKey \hiddenstate \right)^\top \HeadQuery \hiddenstate }{\sqrt{d}} + \CausalMask \right),
    \end{align}
    where $\FFN$ is a column-wise ReGLU feed-forward network \footnote{ReGLU means $\sigma(x) = \ReLU(W_1x + b_1) \otimes (W_2x + b_2)$, this is a surrogate for the commonly used SwiGLU activation and allows the model to perform multiplication of two coordinates.} with width $\width$ and output dimension $\dimension$, $\Attention$ is the scaled dot-product attention, $\Softmax$ is the column-wise softmax function, $\HeadKey$, $\HeadQuery$, $\HeadValue$ are the learnable parameters and $\headnumber$ is the number of heads, and $\CausalMask = \begin{bmatrix} 0 & 0 & \ldots &0  \\ -\infty & 0 & \ldots & 0 \\ \vdots&\vdots&\vdots& \vdots \\ -\infty & -\infty & \ldots & 0\end{bmatrix} \in \R^{\position \times \position}$ is a mask to prevent the attention from attending to future tokens.
\end{definition}

In the context of language modeling, given a sequence of tokens $\TokenSequence$,
a Transformer $T(\TokenSequence)$ is defined as:
\begin{definition}[Transformer]
    \label{def:transformer}
        Let $\TokenSequence \in \VocabSize^{\position}$ be the tokenized input sequence, the output of a Transformer is defined as: 
        \begin{align}
            \Transformer(\TokenSequence) = \Softmax \left(\WordEmbedding \left(\TransformerBlock_L\left( \ldots \TransformerBlock_1\left(\Embedding\left(\TokenSequence\right)\right)\right)\right)\right)_{:, \position}.
        \end{align}
        where $\Softmax$ is the column-wise softmax function, $f_i$ is the $i$-th Transformer block. We will call the $i$-th Transformer block the $i$-th layer of the Transformer and denote its feed-forward layer and attention layer as $\FFN_i$ and $\Attention_i$ respectively.
    \end{definition}

\textbf{Recurrent Neural Networks}
Recently there has been a lot of interest in the linear-time Transformer, which replaces the full-attention calculation with linear-time alternatives. These variants are mostly special forms of recurrent neural networks (RNNs) that are parallelizable. 
Here we define a general form of RNNs containing all the common variants to the best of our knowledge,
including Mamba~\citep{gu2023mamba}, RWKV~\citep{peng2023rwkv}, RetNet~\citep{sun2023retentive}, StreamingLLM~\citep{xiao2023efficient}, RMT~\citep{bulatov2022recurrent}, TOVA~\citep{oren2024transformers}, xLSTM~\citep{beck2024xlstm} etc. 
An RNN maintains a state $s_t \in \R_{\precision}^{\hiddendimension}$ storing $\hiddendimension$ $p$-bit floating point numbers.

\begin{definition}[RNN]
\label{def:rnn}
An RNN architecture is characterized by two functions: state transition function $\StateTransition: \varTheta \to \left( \R_{\precision}^{\hiddendimension} \times \R_{\precision}^{\dimension} \to \R_{\precision}^\hiddendimension \right)$ and output function $\StateOutput: \varTheta  \to \left(\R_{\precision}^\hiddendimension \to \R_{\precision}^{\dimension}\right)$, where $\hiddendimension$ is the dimension of the state and $\varTheta$ is the parameter space. Let $\TokenSequence \in \VocabSize^{\position}$ be the input sequence, the output of a recurrent neural network with parameter $\theta \in \varTheta $ is defined as:
\begin{align*}
    \RNN_\theta(\TokenSequence) &= \Softmax\left(\WordEmbedding \StateOutput_\theta(\state_{\position})\right), \\
    \forall k \in [{\position}], \state_{k} &= \StateTransition_\theta(\state_{k - 1}, \Embedding(\TokenSequence)_{:,k}),
\end{align*}
where $\state_0 \in \R_{\precision}^{\hiddendimension}$ is a vector determined by $\theta$ and $\WordEmbedding$ is the word embedding matrix. We will omit the subscript $\theta$ when it is clear from the context. 
\end{definition}

We can characterize the complexity of an RNN architecture with the following three measures,
\begin{enumerate}
    \item Parameter size $\ParameterSize$: the number of bit of parameters determining $\StateTransition$ and $\StateOutput$.
    \item State memory size $\MemorySize$: the number of bits to record the state of the RNN, in this case, is $\hiddendimension \times \precision$.
    \item Circuit size $\CircuitSize$: the number of bit-wise arithmetic operations needed to calculate $\StateTransition$ and $\StateOutput$.
\end{enumerate}

A particularly interesting class of RNNs is constant-size RNNs, where the dimension of state and number of parameters are fixed and do not depend on $n$, i.e.,
$\hiddendimension = O(1)$, $p = O(\log n)$, $\ParameterSize = O(\log n)$, $\MemorySize = O(\log n)$, $C = O(\log n)$.

We do not assume a specific structure of the RNN when we want to prove impossibility results for RNNs, i.e., what RNNs cannot represent. But when we want to showcase what RNNs can represent, we focus on Linear RNNs, one of the simplest form of RNNs, so that our results are more likely to be generalizable to more complex RNNs.

\begin{definition}[RNN block]
\label{def:minimal_rnn_block}
A Linear RNN block is defined as follows:
\begin{align*}
    \RNNBlock(\hiddenstate) = \hiddenstate + \GLU(\hiddenstate) + \FFN(\hiddenstate + \GLU(\hiddenstate)),
\end{align*}
where $\FFN$ is a column-wise ReGLU feed-forward network with width $\width$ and output dimension $\dimension$ and $\GLU$ is a linear unit, defined as 
\begin{align*}
    h_0 &= 0, h_{:,t} = A h_{:, t-1} +  B \hiddenstate_{:,t},        \GLU(\hiddenstate_{:, 1:\position}) = h_{:, 1:\position}.
\end{align*}
\end{definition}

\begin{definition}[Linear RNN]
\label{def:minimal_rnn}
A Linear RNN is a recurrent neural network
\begin{align}
    \RNN(\TokenSequence) = \Softmax \left(\WordEmbedding \left(\RNNBlock_L\left( \ldots \RNNBlock_1\left(\Embedding\left(\TokenSequence\right)\right)\right)\right)\right)_{:, \position}.
\end{align}
where $\Softmax$ is the column-wise softmax function, $f_i$ is the $i$-th Linear RNN block. We will call the $i$-th Linear RNN  block the $i$-th layer of the Linear RNN and denote its feed-forward layer and linear unit layer as $\FFN_i$ and $\GLU_i$ respectively.
\end{definition}

\textbf{Language Models for Algorithmic Problems.}
An algorithmic problem is a problem that may be solved by an algorithm.
In this paper, we focus on algorithmic problems $f: \Vocab^+ \to \VocabAnswer$ that 
asks for computing $f(\InputTokenSequence)$ given a sequence of tokens $\InputTokenSequence$ as the input, where $\VocabAnswer$ is the set of possible answers.
We say that an LM $M$ can (directly) solve an algorithmic task $f$ if, given the sequence $\InputTokenSequence$, the probability distribution $M(\InputTokenSequence)$ for the next token is peaked at the correct output token $f(\InputTokenSequence)$, i.e., $\argmax_{j \in V} M(\InputTokenSequence)[j] = f(\InputTokenSequence)$.

\textbf{Chain-of-Thought Reasoning.}
{\em Chain-of-Thought} (CoT) reasoning allows the LM to produce intermediate steps before the final output.
Following~\citet{feng2023towards,li2024chain}, our paper studies the effectiveness of CoT reasoning in improving the expressiveness of LMs, and we allow the intermediate steps to be an arbitrary sequence of tokens. 
We say that an LM $M$ can solve an algorithmic problem $f$ with CoT if the following process terminates with a sequence ended with $f(\InputTokenSequence)$.
First, let $\TokenSequence_0 = \InputTokenSequence$.
For all $i \ge 0$, decode the next token $\NextToken_i = \argmax_{j \in \Vocab} \model(\TokenSequence_{i})[j]$ from $\model$, and append it to the current sequence $\TokenSequence_{i + 1} = \TokenSequence_{i} \oplus \NextToken_i$.
If $\NextToken_i \in \VocabAnswer$, then the process terminates with $\TokenSequence_{i+1}$ with $i$ steps of CoT; otherwise the process continues.

It is evident that 
if an LM can solve an algorithm problem with $0$ steps of CoT,
then an LM $\model$ can (directly) solve the problem. In this case, we also say that the LM can solve the problem without CoT.

\section{Can CoT improve the Representation Power of RNNs?} 
\label{sec:curse}

In this section, we aim to understand the representation power of RNNs with CoT. We first show the positive result that RNNs with CoT can solve tasks that are impossible for RNNs without CoT fixing the state size. We then proceed to understand whether CoT can make RNNs as expressive as Transformers. We show that, even with CoT, RNNs still struggle to solve problems that explicitly require in-context retrieval and this representation gap propagates to seemingly retrieval-irrelevant reasoning tasks such as IsTree. Finally, we show that this gap is indeed one-sided: there only exist tasks where Transformers require exponentially less parameters than RNNs, but not the other way around.

\subsection{CoT Strictly Improves RNNs}
\label{sec:cot-improves-representation}
On the positive side, we show that CoT broadens the representation power of RNNs under mild complexity assumption.

\begin{restatable}[]{theorem}{thmrnncot}
\label{thm:rnncot}
Assuming $\mathrm{PSPACE} \not\subset \mathrm{P/poly}$, there exists an algorithmic problem such that (1) there exist constant-size Linear RNNs that can solve the problem with polynomial length CoT; and (2) any constant-size regular RNNs cannot solve the problem without CoT.
\end{restatable}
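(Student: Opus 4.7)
The plan is to prove the two halves separately and bind them through a single carefully chosen language $L^*$.

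For the negative direction (2), I would unroll any constant-size RNN's single forward pass into a non-uniform Boolean circuit. Each of the $n$ transition steps is implemented by a circuit of size $C = O(\log n)$ operating on $\MemorySize = O(\log n)$ state bits, so the entire forward pass plus the readout compiles into a circuit of $\poly(n)$ gates. Hence the class of problems decidable by a family of constant-size RNNs (without CoT) lies within $\mathrm{P/poly}$. Invoking the hypothesis $\mathrm{PSPACE} \not\subset \mathrm{P/poly}$ then produces a language $L^* \in \mathrm{PSPACE}$ that is not in $\mathrm{P/poly}$, and such an $L^*$ cannot be solved by any constant-size RNN without CoT.

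For the positive direction (1), I would construct a constant-size Linear RNN that uses polynomial-length CoT to decide the same $L^*$ by a step-by-step Turing-machine simulation in which the CoT tokens serve as the extended memory tape. The constant-dimensional hidden state holds the TM's control state, head position, and currently-read symbol, while each CoT ``super-step'' writes out the updated tape configuration (or a local delta) and advances the TM by one transition. The ReGLU feed-forward layer realizes the TM's transition function, and the linear unit propagates the head cursor across the growing CoT. For this to decide $L^*$ in polynomial CoT length, the simulated TM must itself run in polynomial time on $L^*$'s inputs; this pushes us toward choosing $L^*$ as a padded variant of a PSPACE-hard language, so that the underlying PSPACE computation fits within polynomial time on the padded input.

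The main obstacle is aligning both halves through a single $L^*$: it must lie outside $\mathrm{P/poly}$ (to activate the hypothesis in the negative half) yet be decidable in polynomial time by a TM amenable to step-by-step Linear RNN simulation with polynomial CoT. A naive padding may either destroy the $\mathrm{P/poly}$ lower bound (by letting a polynomial-size circuit table-lookup over a short true input) or fail to compress the PSPACE computation into polynomial time. The delicate step is to pad just enough so that (i) circuits on the padded input can be converted back to circuits on the unpadded input of comparable size, preserving the non-uniform lower bound, while (ii) the padded language admits a polynomial-time iterative decomposition that a constant-size Linear RNN can emulate CoT-step by CoT-step. Verifying both properties simultaneously for the engineered $L^*$ is the creative core of the proof.
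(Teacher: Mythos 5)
You correctly identify the overall shape of the argument — a circuit upper bound on CoT-free RNNs for the negative half, a Turing-machine simulation for the positive half, and a padded PSPACE-hard language $L^*$ to make both work at once — and you even articulate the exact tension at the end. But the negative half, as you state it, does not close, and the tension you flag is not a delicate bookkeeping issue but a genuine hole that your unrolling argument cannot fill.

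Your negative direction only establishes that a constant-size RNN without CoT is simulable by a circuit of size $\poly(n)$ in the \emph{padded} input length $n$. Once you pad a language $L$ on $m$-bit inputs out to length $n = 2^m$, the padded language \emph{is} in $\mathrm{P/poly}$: a circuit of size $\poly(n) = \poly(2^m)$ can simply hard-wire the full $2^m$-entry truth table of $L$ on $m$-bit inputs. So ``RNN $\subseteq \mathrm{P/poly}$ on inputs of length $n$'' yields no contradiction with $L \notin \mathrm{P/poly}$ (which is a lower bound against $\poly(m)$-size circuits), and your item (i) — converting a $\poly(n)$-size circuit on the padded input back to a $\poly(m)$-size circuit on $x$ — is impossible in general for exactly this reason. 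The missing observation, which is what actually carries the paper's proof, is that a constant-size RNN without CoT gives a circuit of size $\poly(m)$ — not merely $\poly(n)$ — as a function of the $m$-bit payload $x$: the $n-m$ padding tokens are constants, so after absorbing them the state is a fixed value independent of $x$, and then only $m$ transitions remain, each of circuit size $O(\log n) = O(m)$, for a total of $O(m^2)$ gates from $x$ to the output. This $\poly(m)$ bound, not the $\poly(n)$ one, is what contradicts the circuit lower bound. The paper packages the required lower bound as a lemma: under $\mathrm{PSPACE}\not\subset\mathrm{P/poly}$ there is a \emph{linear-space} Turing machine $M$ that no polynomial-size circuit family can simulate (obtained by a padding argument inside the lemma itself), and then defines the task on length-$n$ inputs as $0^{n-m}x \mapsto M(x)$ with $m=\lfloor\log_2 n\rfloor$.

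A secondary difference: for the positive half you put the TM's tape into the CoT transcript. The paper instead keeps the entire tape inside the RNN's state, which is possible precisely because $M$ is linear-space on inputs of length $m=O(\log n)$, so the tape is $O(\log n)$ bits and fits in a constant-dimensional state with $O(\log n)$-bit precision; the CoT is used only to buy $\poly(n)$ time steps (since $M$ may run for $2^{O(m)}=\poly(n)$ steps, exceeding the $n$ steps available without CoT), with the RNN writing the local update to the context and reading it back via a constant-size recency window. Putting the tape into CoT instead forces the RNN to retrieve arbitrary cells from the transcript, which is exactly the in-context-retrieval operation the rest of the paper proves constant-size RNNs cannot do, so your construction would need more care to avoid being self-undermining.
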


See~\Cref{sec:rnncot} for the proof.
The key insight is that the representation power of RNNs without CoT is limited to shallow circuits of size $\poly(\log n)$, but RNNs with CoT can simulate $O(\log n)$-space Turing machines perfectly with $\poly(n)$ steps.
This result is consistent with previous works on the benefit of CoT for Transformers~\citep{feng2023towards,li2024chain}, which also prove based on mild complexity assumptions that Transformers with CoT have the representation power to simulate polynomial-size circuits to solve all the problems in $\mathrm{P}$ but Transformers without CoT cannot.
Here we rigorously prove that a similar benefit of CoT also applies to RNNs, but a key difference is that CoT cannot boost the representation power of RNNs to simulate every polynomial-size circuit family.

\subsection{CoT Cannot Close the Representation Gap with Transformers} 
\label{sec:representation-gap}

Now we proceed to understand whether CoT can make RNNs as expressive as Transformers. The answer turns out to be negative:
RNNs, even with CoT, struggle to solve very simple algorithmic problems that require the capability of retrieving information from the current context, which we call {\em $\ICR$} for short.
This limitation is directly related to the memory efficiency of RNNs. For a model with at most $o(n)$ bits in memory, we can involve techniques from streaming complexity to prove impossibility results for problems requiring $\ICR$.

\subsubsection{Simple Problems on In-Context Retrieval}
\label{sec:simple-icr}

Here we list several simple algorithmic problems that directly test the in-context retrieval capability of the model,
which turn out to be a good test-bed for understanding the limitations of RNNs compared to Transformers.

\begin{definition}[Index] \label{def:index}
    Index is a problem that given a sequence of tokens with length $n$  and a query token $i \in [n]$, requires the model to output the type of the $i$-th token. 
\end{definition}

\begin{definition}[Associative Recall] \label{def:ar}
Associative Recall (AR) is a problem that given a sequence of tokens with length $n$ consisting of tokens in $[n]$ and a query token $q \in [n]$, requires the model to output the next token of $q$ in the sequence.
\end{definition}

\begin{definition}[$c$-gram Retrieval] \label{def:ngram}
An $c$-gram is a contiguous subsequence of $c$ tokens in a sequence of tokens. $c$-gram retrieval is a problem that given a sequence of tokens with length $n$ and a query $(c-1)$-gram that is the prefix of a $c$-gram in the sequence, requires the model to output the last token of that $c$-gram.
\end{definition}

\begin{definition}[Counting] \label{def:counting}
Counting is a problem that given a sequence of tokens with length $n$, a query token $q \in [n]$, and a query number $t \in \mathbb{N}$, requires the model to output $0$ or $1$ to indicate whether the number of occurrences of $q$ in the sequence is greater than $t$.
\end{definition}

Here, Index and AR are perhaps the most basic problems in retrieval, where Index asks for retrieving a token from the input sequence viewed as a linear array of tokens, and AR asks for retrieving a token from the input sequence viewed as an associative array.
These two problems have been studied extensively by different communities. Index is a classic problem in streaming and communication complexity~\citep{munro1980selection}, known to be impossible to solve with $o(n)$ bits of memory for streaming algorithms. 
AR has been regarded as a fundamental problem that an artificial intelligence system should be able to solve~\citep{willshaw1969non,hopfield1982neural,hinton2014parallel,graves2014neural,ba2016using}. In the context of LLMs, AR has been observed to correlate with in-context learning performance~\citep{elhage2021mathematical} and has also been used extensively as synthetic surrogate tasks for pretraining performance~\citep{fu2023hungry,poli2023hyena,lutati2023focus}.
Besides Index and AR, $c$-gram retrieval is a natural extension of AR to the case where the query key can contain multiple tokens: instead of retrieving a token given a single-token key, $c$-gram retrieval asks for retrieving a token when the given key is a $(c-1)$-gram. This task has been studied empirically, but not theoretically in~\cite{jelassi2024repeat}.
Counting is a problem that asks for the number of occurrences of a token, thereby testing the model's capability to retrieve some statistics of relevant information from the input sequence.

The following theorems show that constant-size RNNs cannot solve any of the four tasks when the context length is long, while constant-size Transformers can solve them perfectly.
\begin{restatable}[]{theorem}{thmindex}
\label{thm:index}
For task $T \in \{\text{Index, AR, c-gram retrieval, Counting}\}$, there exist constant-size Transformers that can solve $T$.
On the other hand, any RNN with $o(n)$-bit memory cannot solve $T$ of size $n$ with any length of CoT for large enough $n$.
\end{restatable}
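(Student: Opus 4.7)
I would exhibit explicit constant-size constructions, one per task. For Index, a single attention head whose query encodes the requested position $i$ and whose keys encode positions suffices: with inverse temperature $\Theta(\log n)$, which is affordable under the $\precision = O(\log n)$ budget, the softmax concentrates on the $i$-th key and the value head returns the $i$-th token. AR and $c$-gram retrieval follow the same template but with content-matching keys, an extra shift so that the token immediately after the matched key is output, and $O(1)$ heads to match each component of a $(c-1)$-gram prefix. Counting uses a uniform-attention head to compute the fraction of positions equal to the query $q$, followed by a ReGLU feed-forward layer that compares this fraction against $t/n$ and outputs the indicator. Each construction uses $O(1)$ layers, heads, and hidden dimension.

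\textbf{RNN lower bound via reduction from INDEX.} The plan is to invoke the classical one-way communication lower bound for the INDEX$_n$ problem (Alice holds $x \in \{0,1\}^n$, Bob holds $j \in [n]$, Bob must output $x_j$), which requires $\Omega(n)$ bits of communication, even with shared randomness. I would then reduce INDEX$_n$ to each target task using only $O(n)$ tokens. For Index, Alice's tokens are $x$ and Bob's query is $j$. For AR, Alice emits key--value pairs $(1,x_1),\dots,(n,x_n)$ and Bob queries the key $j$. For $c$-gram retrieval, Alice interleaves $c-1$ distinct positional markers around each $x_j$ so that Bob can form a valid $(c-1)$-gram query uniquely identifying position $j$. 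For Counting, Alice writes the token $j$ whenever $x_j = 1$ (padding with a fresh dummy token), and Bob asks whether the count of $j$ exceeds $0$, which equals $x_j$.

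\textbf{Incorporating CoT.} The main obstacle is handling arbitrary-length CoT, since a priori an RNN might ``think longer'' to compensate for its small memory. The plan is to use the fact that the RNN is fully deterministic: once Alice's portion of the input has been consumed, the RNN state $s$ lies in a set of size at most $2^{o(n)}$, and every subsequent CoT token together with the final answer is a deterministic function of $s$ and of Bob's portion of the input. This yields a one-way protocol for INDEX$_n$ in which Alice sends $s$ ($o(n)$ bits) and Bob simulates the RNN on his query tokens and on the entire CoT to read off the answer. The resulting $o(n)$-bit protocol contradicts the $\Omega(n)$ lower bound, giving the impossibility. Two minor points to verify are that the reductions preserve sequence length up to a constant (which the four constructions above do) and that Bob's query tokens are computable without knowing Alice's state (which is immediate in each case).
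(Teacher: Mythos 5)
Your proposal is correct and follows essentially the same approach as the paper: reduce each task to the one-way INDEX communication lower bound, treat the RNN hidden state after Alice's portion as the ($o(n)$-bit) message and let Bob simulate the remaining input plus the entire CoT, and exhibit $O(1)$-size Transformers built from positional and content-matching attention heads (the paper's $\Match$/$\COPY$/$\COUNT$ primitives) plus an FFN comparison for Counting. The only cosmetic differences are in the Transformer constructions (the paper's $\COUNT$ head produces $1/(m+1)$ and multiplies by $t+1$ rather than comparing a uniform-attention average to $t/n$), and you spell out the CoT-handling step that the paper's proof leaves implicit.
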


We note that \Cref{thm:index} does \textbf{not} imply that RNNs are incapable of in-context retrieval at all. Instead, it states that the maximal context length that RNNs can effectively retrieve from is linear in its state size. Although for a short context window, RNNs can be trained to perform in-context retrieval (see e.g. \cite{arora2023zoology,gu2023mamba}), this limitation in retrieval capabilities has been empirically observed: for example in \cite{waleffe2024empiricalstudymambabasedlanguage}, both pretrained Mamba and Mamba-2 7B models are shown to have significantly worse Phonebook-retrieval capabilities on 1K context length than Transformers with the same size and trained on the same data.

\textbf{Proof Sketch.}
The key idea of the lower bound of the proof is to put RNNs into the framework of communication complexity and use information-theoretic arguments to prove a lower bound. RNNs have the following property if party $\mathrm{A}$ simulates the RNN on the first part of the input and sends the state to party $\mathrm{B}$, then party $\mathrm{B}$ can simulate the RNN on the second part of the input with the state received from party $\mathrm{A}$ to recover the output of the RNN perfectly. Hence, in the above two theorems, if the RNN can solve the problem with $o(n)$ input size, then the information about the input can be compressed to $o(n)$ bit to produce the output, which contradicts the information-theoretic lower bound. 

For the upper bound, we show that the Transformer can solve the problem by utilizing an attention mechanism called $\Match$ that takes the query token and attends to previous keys that match the query token on certain predefined coordinates. This mechanism allows the Transformer to read its context window like a key-value dictionary and hence can solve the problems perfectly. For the counting problem, we additionally use a $\COUNT$ attention mechanism that counts the number of occurrences of the queried token by attending evenly to each appearance of the queried token. \qed

\subsubsection{Understanding the Representation Power of RNNs Beyond Simple $\ICR$ Problems}
\label{sec:istree}
A natural question would be if an algorithmic problem does not directly test the in-context retrieval capability, can we hope that RNNs would have the representation power to solve it?
Do RNNs and Transformers have the same representation power in this case? We show that the limited memory size in RNNs can still be a bottleneck in solving algorithmic problems.
Even if the retrieval capability is not explicitly tested in an algorithmic problem, it may still be required implicitly for reasoning about the answer.

We demonstrate this gap on a minimal example of algorithmic problems, called $\IsTree$: given an undirected graph $G$ of $n$ nodes, determine whether $G$ is a tree, i.e., whether every pair of nodes is connected by exactly one simple path. A classical solution to IsTree is running Depth First Search (DFS), which takes $O(n)$ time.

In the context of language modeling, we can write the graph $G$ as a sequence of tokens, and then the task of IsTree is to determine whether $G$ is a tree by predicting a $\YES/\NO$ token with or without CoT. We use the following tokenization for the graph $G$:
\begin{align}
    \Tokenize(\graph) = \{\StartSentence, u_1, \sim, v_1, u_2, \sim v_2, \ldots, u_\edgenumber, \sim, v_\edgenumber\},
\end{align}
where $\StartSentence$ and $\sim$ are two special tokens representing the start of the sentence and an edge, and $u_i, v_i$ are numbers denoting the nodes of the graph.

Our result states that RNN with $o(n)$ bit memory cannot solve IsTree, even with an arbitrary choice of chain of thought. On the other hand, there exists a Transformer with constant size and $O(\log n)$ precision that can generate a chain-of-thought of length $n$ following DFS and perfectly solve the same question.
\begin{restatable}[]{theorem}{thmistree}
    \label{thm:rnn_trans_istree}%
    There exist constant-size Transformers that can solve IsTree with CoT of length $O(n)$. On the other hand, any RNN with $o(n)$-bit memory cannot solve IsTree with any length of CoT.
\end{restatable}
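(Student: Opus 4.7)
For the Transformer direction, the plan is to simulate Depth-First Search (DFS) with a CoT trace of length $O(n)$. A constant-size Transformer first uses a $\COUNT$-style attention head to check whether the number of edges equals $n-1$; if not, it emits $\NO$ immediately, since a tree on $n$ vertices has exactly $n-1$ edges. Otherwise it emits a CoT encoding the DFS traversal rooted at node $1$: at each step the CoT records the currently active DFS node together with a parent pointer. Producing the next CoT token requires three primitive operations that can each be realized with a constant number of attention heads and ReGLU feed-forward layers as already used in \Cref{thm:index}: (i) locate the active node (the deepest unfinished node on the DFS stack) by scanning the emitted CoT, (ii) use a $\Match$ head to retrieve an edge of the input incident to that node, and (iii) determine, again by $\Match$ against the CoT, whether the other endpoint has already been visited; if it has and it is not the current parent, output $\NO$ (cycle detected). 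After at most $O(n)$ CoT steps the traversal terminates, at which point the Transformer checks via $\COUNT$ whether every vertex has appeared in the CoT (connectivity) and emits $\YES$ or $\NO$ accordingly.

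\textbf{Lower bound: one-way communication complexity.} For the RNN direction, the plan is to push the communication-complexity argument used in \Cref{thm:index} through the CoT setting. Assume toward contradiction that some RNN with memory $\MemorySize = o(n)$ solves $\IsTree$ with CoT of arbitrary length. I split the token sequence $\Tokenize(G)$ at the midpoint and view the two halves as private inputs of two parties Alice and Bob. Alice simulates the RNN on her prefix and transmits the resulting state (only $o(n)$ bits) to Bob, who then completes the forward pass on his suffix and runs the entire CoT generation. The key observation that disposes of CoT is that each CoT token is a deterministic function of the current RNN state, and the updated state after ingesting that token is again a deterministic function of the previous state; therefore the whole CoT phase uses no additional communication and Bob's final answer is a deterministic function of (Alice's message, Bob's input). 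So any such RNN yields a one-way protocol for $\IsTree$ on edge streams with $o(n)$ bits of communication.

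\textbf{The reduction and the main obstacle.} It remains to produce an $\Omega(n)$ one-way communication lower bound for deciding tree-ness when Alice holds a prefix of the edge list and Bob holds a suffix; this can be obtained by reducing from Set Disjointness on $\Theta(n)$-bit inputs, for which an $\Omega(n)$ one-way lower bound is classical, in the spirit of the streaming lower bound of \citet{henzinger1998computing}. Given instances $S_A, S_B \subseteq [n]$, the reduction builds a graph on $\Theta(n)$ vertices in which Alice's $S_A$-dependent edges form a fixed acyclic scaffold and Bob's $S_B$-dependent edges are chosen so that the combined graph has exactly $n-1$ edges and is connected (hence a tree) when $S_A \cap S_B = \emptyset$, but contains a redundant edge creating a cycle (hence not a tree) whenever $S_A \cap S_B \neq \emptyset$. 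The main technical obstacle is engineering this gadget so that the tree / non-tree dichotomy exactly mirrors disjointness while the total edge count stays $O(n)$ (so that the IsTree instance size matches the parameter $n$ of the RNN memory bound) and the edge ordering required by the reduction is realizable by the linear tokenization $\Tokenize(G)$. The CoT aspect, in contrast, is essentially free thanks to the determinism observation above.
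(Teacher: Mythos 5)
Your decomposition into an upper bound (DFS simulated by a constant-size Transformer with \(\Match\)/\(\COUNT\) heads) and a lower bound via a one-way communication argument mirrors the paper, and the key observation that makes the lower bound go through---that once Bob has the RNN state he can deterministically replay the entire CoT with no further communication---is exactly the paper's argument. On the upper bound your sketch also lines up with the paper's, with one gloss worth noting: you say the Transformer ``locates the active node by scanning the emitted CoT,'' but a constant-depth network cannot maintain an arbitrary DFS stack this way. The paper sidesteps this by having the CoT emit the \emph{Euler tour} of the DFS tree, so the active node is simply the last CoT token and the parent is retrieved with a single \(\Match\) against the first occurrence of that token; the early edge-count check you add is fine but the paper instead folds it into a final length check on the tour. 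None of this changes the substance, so I'd call the upper bound essentially the same proof.

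Where the proposal has a genuine gap is the reduction used for the \(\Omega(n)\) lower bound. You propose reducing from Set Disjointness and explicitly flag the gadget as ``the main technical obstacle'' without constructing it---and that obstacle is real. A Set Disjointness gadget for \(\IsTree\) must handle intersections of arbitrary size \(\ge 1\), and you also have to decide what a repeated edge in \(\Tokenize(G)\) means (simple graph vs.\ multigraph changes whether the edge count stays \(n-1\)). More importantly, Set Disjointness is overkill here: since Alice sends a single message to Bob, all you need is a \emph{one-way} lower bound, and the natural source is \(\text{INDEX}\), the exact problem already used in \Cref{thm:index}. The paper does precisely this: it first observes (via prefix sums) that deciding whether \(x_k = x_{k-1}\) is one-way equivalent to \(\text{INDEX}\), and then gives a completely explicit gadget---a graph on \(n+2\) vertices where vertex \(i\) is joined to vertex \(n+1\) or \(n+2\) according to \(x_i\), and the single extra edge \(\{k-1,k\}\) comes from Bob, so the graph is a tree iff \(x_{k-1}\neq x_k\). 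This reduction is unconditional, trivially ``realizable by the linear tokenization'' (Alice's \(n\) edges first, Bob's one edge last), and avoids the intersection-size and multi-edge issues entirely. If you replace your Set Disjointness plan with this \(\text{INDEX}\)-style construction, the lower bound argument closes.
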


\textbf{Proof Sketch.} The main idea of the proof is that the task of IsTree requires the model to reason about the global structure of the graph, which is beyond the capability of RNNs with limited memory. We prove the lower bound by constructing a graph from a binary sequence and showing that RNNs with $o(n)$ memory cannot solve the problem by a reduction to an information-theoretic lower bound. For the upper bound, we show that the Transformer can simulate the DFS algorithm by outputting the Euler tour of the connected components of vertex $1$ and then check the length of the Euler tour with its capability of $\ICR$. %

The key idea of the lower bound of the proof is to again utilize the information-theoretic lower bound. This idea lies in the core of streaming complexity literature and investigation on the IsTree problem dates back to~\cite{henzinger1998computing}. We hereby restate the proof for completeness. %
Given any binary sequence $x$ of length $n - 2$ and an index $k \in [n - 3]$, we can construct a graph as follows: the graph has $n$ nodes, and vertex $a$ is connected to vertex $x_{a} + n - 1$ for any $a \in [n - 2]$. Moreover, vertex $k$ is connected to vertex $k + 1$. The graph is a tree if and only if $x_{k} \neq x_{k + 1}$. 
\begin{figure}[h]
\centering
\begin{tikzpicture}
    \node[circle,draw] (1) at (0,0) {$v_1$};
    \node[circle,draw] (2) at (2,0) {$v_2$};
    \node[circle,draw] (3) at (4,0) {$v_3$};
    \node[circle,draw] (4) at (6,0) {$v_4$};
    \node[circle,draw] (5) at (2,-2) {$v_5$};
    \node[circle,draw] (6) at (4,-2) {$v_6$};

    \draw (1) -- (5);
    \draw (2) -- (6);
    \draw (3) -- (5);
    \draw (4) -- (6);
    \draw (2) -- (3);

    \path (2) -- coordinate[midway] (mid) (3);
    \node[above=1mm of mid] (abovemid) {};

    \path (4) -- coordinate[midway] (rightmid) (6);

    \node[above=2cm of mid] (label) {Binary Message: 0101, Index: $2$};

    \draw[line width=0.2mm, -{Stealth[length=2mm, open]}] (label) -- (abovemid);

    \node[below=2.5cm of mid] (graph) {$\graph$};

    \node[right=-1cm of rightmid] (tokenized) {
        \begin{minipage}{.5\textwidth}
            \begin{align*}
                &\Tokenize(\graph) \\
                =&\StartSentence,2,\sim,6,1,\sim,5, \\
                &3,\sim,5,4,\sim,6,2,\sim,3.
            \end{align*}
        \end{minipage}
    };

    \node[circle,draw, right=9cm of 1] (A) {$v_1$};
    \node[circle,draw] (B) at ([shift={(2,0)}]A) {$v_2$};
    \node[circle,draw] (C) at ([shift={(2,0)}]B) {$v_3$};
    \node[circle,draw] (D) at ([shift={(2,0)}]C) {$v_4$};
    \node[circle,draw] (E) at ([shift={(0,-2)}]B) {$v_5$};
    \node[circle,draw] (F) at ([shift={(0,-2)}]C) {$v_6$};

    \draw[->, red] (A) -- (E) node[pos=0.2, above] {1} node[pos=0.8, below, blue] {10};
    \draw[->, red] (E) -- (C) node[pos=0.1, above] {2} node[pos=0.9, below, right=0.7cm, blue] {9};
    \draw[->, red] (C) -- (B) node[pos=0.2, above] {3} node[pos=0.8, above=0.5cm, blue] {8};
    \draw[->, red] (B) -- (F) node[pos=0.2, above] {4} node[pos=0.8, below, blue] {7};
    \draw[->, red] (F) -- (D) node[pos=0.1, above] {5} node[pos=0.9, below, blue] {6};
    \draw[->, blue] (D) to [out=225, in=315] (F);
    \draw[->, blue] (F) to [out=135, in=225] (B);
    \draw[->, blue] (B) to [out=45, in=135] (C);
    \draw[->, blue] (C) to [out=315, in=45] (E);
    \draw[->, blue] (E) to [out=135, in=225] (A);

    \path (B) -- coordinate[midway] (pathmid) (C);
    \node[below=2.5cm of pathmid] (tourgraph) {DFS on $\graph$};

    \node[above=4mm of pathmid] (abovepathmid) {};

    \node[above=2cm of pathmid] (tourlabel) {Euler Tour: $1, 5, 3, 2, 6, 4, 6, 2, 3, 5, 1$, Is Tree: Yes};

    \draw[line width=0.2mm, -{Stealth[length=2mm, open]}] (abovepathmid) -- (tourlabel);

\end{tikzpicture}
\caption{An example of the graph constructed from the binary sequence $x = 0101$ and the index $k = 2$ and the corresponding DFS tour.}
\label{fig:istree}
\end{figure}
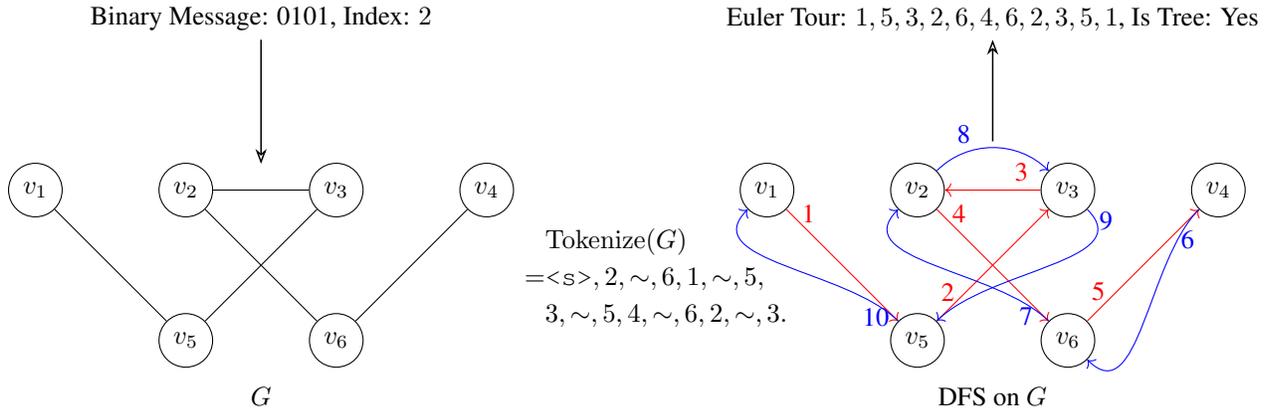
Now assuming there is an RNN with $o(n)$ memory that can solve IsTree, consider two parties $\mathrm{A}$ and $\mathrm{B}$ each holding the sequence $x$ and the index $k$, they can construct two parts of the graph using their information and then $\mathrm{A}$ can simulate RNN on the first part of the graph and send the state to $\mathrm{B}$, and $\mathrm{B}$ can simulate RNN (potentially with CoT) on the second part of the graph to recover the output of the IsTree problem, which is equivalent to whether $x_{k} \neq x_{k + 1}$. However, note that $k$ is never sent to $\mathrm{A}$, and hence actually $\mathrm{B}$ can get whether $x_{k} \neq x_{k + 1}$ for any $k \in [n - 3]$, which contradicts the information-theoretic lower bound.

Now for the upper bound, we will let the Transformer simulate the DFS algorithm by outputting the Euler tour of the connected components of vertex $1$ and then check the length of the Euler tour (see~\Cref{alg:istree}). To simulate the tour, we will implement two functions through the Transformer Block:
\begin{enumerate}[leftmargin=*]
    \item Given a prefix of the tour, find the parent of the last vertex in the tour. This can be implemented by copying each token's predecessor's type to that token and then using the $\Match$ mechanism to match the first occurrence of the current token in the sequence.
    \item Given the tokenized input graph and an edge $(u, v)$, find the next edge after $(u, v)$ containing $v$ in the input graph. We will use another attention mechanism called $\COUNT$ to count, for each edge $e = (a, b)$ in tokenized input graph, the number of occurrences of $a$ and $b$ up to that edge and store $1/(n_{e, a} + 1)$ and $1/(n_{e,b} + 1)$ in the token corresponding to the edge, where $n_{e, a}$ and $n_{e,b}$ are the corresponding counts. Then given the edge $(u, v)$, we can use the $\Match$ mechanism to find $1/(n_{(u,v), v} + 1)$. Then we will use a feed-forward layer with gated relu activation, constant depth, and constant width to approximate $1/(n_{(u,v), v} + 2)$ and then use the $\Match$ mechanism again to find the next edge containing $v$.
\end{enumerate}

Through the above two functions, the Transformer can simulate the DFS algorithm and hence solve the IsTree problem perfectly. \qed

\subsubsection{Transformers are Strictly More Expressive Than RNNs}

The above theorems show the existence of tasks where Transformers require exponentially less memory than RNNs. However, they have not rule out the possibility that there exists a corresponding task where the Transformer will be more redundant and require exponentially more parameters than RNNs. However, the following theorem confirms that such a task doesn't exist for constant-size RNN. 

The theorem is in the same vein as the recent work on the CoT for Transformer~\citep{li2024chain}, which shows the constant size and constant precision Transformer with a polynomial-size position embedding can simulate any polynomial size circuit. The major difference of our theorem is that (1) we consider a Transformer with fixed word and position embedding, hence allowing the parameter number to be logarithmic in the input size, and (2) we consider simulating RNNs, which is a special kind of circuit family and hence we can use more succinct representation utilizing the structural property attached to the recursive process.

\begin{restatable}[]{theorem}{thmtransrnn} 
\label{thm:trans_beat_rnn}
Given input length $n$,
let $R$ is an RNN with word embedding $\WordEmbedding \in \R_{\precision}^{ (n + \numspecial) \times \dimension}$, where $\precision = \Theta(\log n)$ is the precision, the constant $\numspecial$ is the number of special symbols in the vocabulary, the constant $d$ is the embedding dimension.
If each recurrent iteration can be computed by a circuit of size $\CircuitSize(n) \le 2^{\precision / 2}$, and if the RNN produces the final answer after running at most $n^A$ steps of CoT for some constant $A > 0$, then there exist Transformers with $O(\log n)$-bit precision, $O(\CircuitSize(n))$ parameters and word embedding $\left[\WordEmbedding~~\mathbf{0}^{(n + \numspecial) \times \dimension}\right]$
that can produce the same final answer after running $(\CircuitSize(n) + 1)n^A$ steps of CoT.
\end{restatable}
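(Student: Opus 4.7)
The plan is to construct a Transformer $T$ that simulates $\RNN$ step by step, unrolling each RNN iteration into a block of $\CircuitSize(n)+1$ CoT tokens. The $k$-th block encodes, in order, the $\CircuitSize(n)$ intermediate gate values of the circuit that computes $\state_k$ (and the output signals) from $\state_{k-1}$ and the $k$-th input (or previous CoT) token, followed by one token that reproduces the RNN's output at step $k$. Because the RNN terminates after at most $n^A$ steps, the Transformer terminates after at most $(\CircuitSize(n)+1)n^A$ steps and its final emitted token coincides with the RNN's final answer.

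\textbf{Encoding the state and inputs.}
I would encode each RNN state $\state_k \in \R_{\precision}^{\hiddendimension}$ as its $\hiddendimension\precision$ bits, using a constant-size set of special tokens from the vocabulary, and maintain the invariant that after the $k$-th block, these bits appear at fixed, identifiable positions within that block (say, as the last $\hiddendimension\precision$ gate-value tokens before the output token). Since the Transformer's word embedding is $\left[\WordEmbedding \;\; \mathbf{0}^{(n+\numspecial) \times \dimension}\right]$, each RNN input token is embedded identically, and the appended zero coordinates serve as scratch space for the Transformer's own computations; the $\numspecial$ special tokens accommodate the bit and control symbols used by the simulation.

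\textbf{Simulating one gate per Transformer token.}
The core construction simulates one gate of the circuit per Transformer forward pass. At position $i$ within the $k$-th block (for $i \le \CircuitSize(n)$), the Transformer first determines its gate index from the position embedding; a feed-forward layer then serves as a lookup table mapping $i$ to the gate's Boolean operation and the positions of its $O(1)$ input wires, which are either bits of $\state_{k-1}$ (in the previous block), the current input/CoT token, or earlier gate outputs in block $k$. Constant-many attention heads fetch the input bits by matching on those positions (in the spirit of the $\Match$ primitive already used in the paper), and a ReGLU block applies the Boolean operation to emit the gate's value as the next token. The $(\CircuitSize(n)+1)$-st token of each block decodes the RNN's output at step $k$ from the gate values corresponding to $\StateOutput(\state_k)$, thereby keeping the Transformer's CoT token stream aligned with the RNN's.

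\textbf{Parameter count and main obstacle.}
The parameter count is dominated by the lookup table, which stores an $O(1)$-sized description per gate (gate type plus $O(1)$ input indices of $O(\precision)$ bits each), for a total of $O(\CircuitSize(n)\cdot \precision)$ bits, i.e., $O(\CircuitSize(n))$ parameters at $\precision$-bit precision; the bound $\CircuitSize(n)\le 2^{\precision/2}$ guarantees that each wire index fits within a single $\precision$-bit number. The main obstacle will be realizing this lookup-and-retrieve primitive within the stated parameter budget and with only constant-many heads. My intended realization packs the entire circuit description into one wide ReGLU FFN whose activations are nonzero only at the current gate index (selected via a sparse position-based address), then uses a constant number of attention heads whose queries are formed from the retrieved wire indices (via the FFN) and whose keys are the position embeddings, so that the correct input bits are read out; a final small ReGLU layer then computes the gate's Boolean function, closing the inductive step.
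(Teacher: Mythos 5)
Your proposal matches the paper's own construction essentially step for step: unroll each RNN iteration into a block of $\CircuitSize(n)+1$ CoT tokens (one token per gate value, one for the RNN's emitted token), store the circuit description in a width-$O(\CircuitSize(n))$ FFN lookup table addressed by the within-block position, use $\Match$-style attention heads to fetch each gate's $O(1)$ input bits, and read $\StateOutput(\state_k)$ off the final gate positions of each block. The only thing you gloss over that the paper spells out is the bit-to-float reconstruction (the paper's Layers 4--5, which reassemble the $\precision$-bit word-embedding coordinates from the output gates with $\precision-1$ heads weighted by powers of two before decoding the emitted token), but your "decodes the RNN's output from the gate values corresponding to $\StateOutput(\state_k)$" is the same idea and the omission is a matter of detail, not a missing ingredient.
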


\textbf{Proof Sketch.}  The key idea is to encode the RNN circuit into the Transformer's weight and simulate the circuit gate by gate, utilizing the $\Match$ mechanism to fetch the input of each gate. In this manner, although the naive circuit to simulate the RNN for $n^A$ steps would require $O(n^{A})$ parameter, the Transformer only needs to store one instance of the RNN circuit in its weight and hence we only need $O(P(n) \log \max\{P(n), n\})$ parameter, which is at most polynomial in the parameter size of the RNN. \qed
\section{Enhancing the $\ICR$ Capability Closes the Representation Gap} \label{sec:fix}

In~\Cref{sec:representation-gap}, we show that RNNs are deficient at $\ICR$, hence leading to a significant representation gap with Transformers.
In this section, we aim to understand: if we enhance the $\ICR$ capability of RNNs, do RNNs remain to have any representation gap with Transformers?
We answer this question by examining two representative approaches to enhance the $\ICR$ capability, one explicit and one implicit, and show that both ways can close the representation gap between RNNs and Transformers in solving algorithmic problems.

\begin{figure}[htbp]
    \centering
    \begin{minipage}{0.59\textwidth}
        \centering
        \begin{tikzpicture}[
    scale=0.2, 
    node distance=0.5cm and 0.25cm,
    mynode/.style={draw,thick,inner sep=0.5cm, fill=blue!20},
    mylargenode/.style={draw,thick,inner sep=0.5cm, fill=green!30},
    myarrow/.style={-Stealth, thick, color=black},
    every node/.style={scale=0.25, font=\Huge}
]

\def\colorslist{{"red", "green", "blue", "orange", "purple", "cyan", "magenta", "yellow"}}

  \node[mynode, fill = cyan] (rnn1) {RNN};
  \node[right=0.25cm of rnn1] (dots1) {\ldots};
  \node[mynode, right=of dots1, fill = cyan] (rnn2) {RNN};
  \node[mynode, right=of rnn2, fill = cyan] (rnn3) {RNN};
  \node[mynode, right=of rnn3, fill = cyan] (rnn4) {RNN};
  \node[right=0.25cm of rnn4] (dots2) {\ldots};
  \node[mynode, right=of dots2, fill = cyan] (rnn5) {RNN};
  \node[mynode, right=of rnn5, fill = cyan] (rnn6) {RNN};
  \node[mynode, right=of rnn6, fill = cyan] (rnn7) {RNN};
  \node[right=0.25cm of rnn7] (dots3) {\ldots};
  \node[mynode, right=of dots3, fill = cyan] (rnn8) {RNN};

  \node[mynode, above=of rnn1] (context1) {};
  \node[mynode, right=of context1, above=of rnn2] (context2) {};
  \node[mynode, right=of context2, above=of rnn3] (context3) {};
  \node[right=of context3, above=0.6cm of rnn4] (context4) {StartSearch};
  \node[mynode, right=of context4, above=of rnn5] (context5) {};
  \node[right=of context5, above=0.6cm of rnn6] (context6) {EndSearch};
  \node[mynode, right=of context3, above=of rnn7] (context7) {};
  \node[mynode, right=of context7, above=of rnn8] (context8) {};

  \node[mynode, below=of rnn2] (output1) {};
  \node[below=of rnn3] (output2) {StartSearch};
  \node[mynode, right=of output2, below=of rnn4] (output3) {};
  \node[below=of rnn5] (output4) {EndSearch};
  \node[mynode, right=of output4, below=of rnn8] (output7) {};

  \draw[myarrow] (context1) -- (rnn1);
  \draw[myarrow] (context2) -- (rnn2);
  \draw[myarrow] (context3) -- (rnn3);
  \draw[myarrow] (context4) -- (rnn4);
  \draw[myarrow] (context5) -- (rnn5);
  \draw[myarrow] (context6) -- (rnn6);
  \draw[myarrow] (context7) -- (rnn7);
  \draw[myarrow] (context8) -- (rnn8);

  \draw[myarrow] (rnn2) -- (output1);
  \draw[myarrow] (rnn3) -- (output2);
  \draw[myarrow] (rnn4) -- (output3);
  \draw[myarrow] (rnn5) -- (output4);
  \draw[myarrow] (rnn8) -- (output7);

  \draw[myarrow] (rnn1) -- (dots1);
  \draw[myarrow] (dots1) -- (rnn2);
  \draw[myarrow] (rnn2) -- (rnn3);
  \draw[myarrow] (rnn3) -- (rnn4);
  \draw[myarrow] (rnn4) -- (dots2);
  \draw[myarrow] (dots2) -- (rnn5);
  \draw[myarrow] (rnn5) -- (rnn6);
  \draw[myarrow] (rnn6) -- (rnn7);
  \draw[myarrow] (rnn7) -- (dots3);
  \draw[myarrow] (dots3) -- (rnn8);

  \draw[dotted, myarrow] (output1) -- (context3);
  \draw[dotted, myarrow] (output2) -- (context4);
  \draw[dotted, myarrow] (output4) -- (context6);

  \draw[decorate, decoration={brace, amplitude=5pt}, thick] 
    ([yshift=2ex]context1.north west) -- ([yshift=2ex]context3.north east) 
    node[midway, above=5pt] (context) {Context};

  \draw[decorate, decoration={brace, amplitude=5pt}, thick, right=of context] 
    ([yshift=2ex]context4.north west) -- ([yshift=2ex]context6.north east) 
    node[midway, above=5pt] (query) {Search Query};

  \draw[decorate, decoration={brace, amplitude=5pt}, thick] 
    ([yshift=2ex]context7.north west) -- ([yshift=2ex]context8.north east) 
    node[midway, above=5pt] (result){Search Result};
    
  \node[mylargenode, above=of query] (retriever) {Retriever};

  \draw[myarrow] (query) -- (retriever);

  \draw[myarrow] (context) -- (context |- retriever.center) -- (retriever.west);
  \draw[myarrow] (retriever.east) -- (result |- retriever.center) -- (result);

\end{tikzpicture} %
        \caption*{\quad \quad \quad The retrieval augmented RNN}
        \label{fig:rnn}
    \end{minipage}%
\begin{minipage}{0.39\textwidth}
    \centering
    \begin{tikzpicture}[
  scale=0.2, 
  node distance=0.73cm and 0.3cm,
  mynode/.style={draw,thick,inner sep=0.5cm, fill=blue!20},
  mylargenode/.style={draw,thick,inner sep=0.9cm, fill=green!30},
  myarrow/.style={-Stealth, thick, color=black},
  every node/.style={scale=0.25, font=\Huge}
]
\def\colorslist{{"red", "green", "blue", "orange", "purple", "cyan", "magenta", "yellow"}}

  \node[mynode, fill = cyan] (rnn1) {RNN};
  \node[right=0.5cm of rnn1] (dots1) {\ldots};
  \node[mynode, right=of dots1, fill = cyan] (rnn2) {RNN};
  \node[mynode, right=of rnn2, fill = cyan] (rnn3) {RNN};
  \node[right=0.5cm of rnn3] (dots2) {\ldots};
  \node[mynode, right=of dots2, fill = cyan] (rnn4) {RNN};
  \node[mynode, right=of rnn4, fill = cyan] (rnn5) {RNN};

  \node[mynode, above=of rnn1] (context1) {};
  \node[mynode, right=of context1, above=of rnn2] (context2) {};
  \node[mynode, right=of context2, above=of rnn3] (context3) {};
  \node[mynode, right=of context3, above=of rnn4] (context4) {};
  \node[mynode, right=of context4, above=of rnn5] (context5) {};

  \node[mynode, below=of rnn1] (fakeoutput0) {};
  \node[mynode, below=of rnn2] (fakeoutput1) {};
  \node[mynode, below=of rnn3] (fakeoutput2) {};
  \node[mynode, below=of rnn4] (fakeoutput3) {};
  \node[mynode, below=of rnn5] (fakeoutput4) {};

  \node[mynode, below=of fakeoutput1] (output1) {};
  \node[mynode, below=of fakeoutput2] (output2) {};
  \node[mynode, below=of fakeoutput3] (output3) {};
  \node[mynode, below=of fakeoutput4] (output4) {};

  \draw[myarrow] (context1) -- (rnn1);
  \draw[myarrow] (context2) -- (rnn2);
  \draw[myarrow] (context3) -- (rnn3);
  \draw[myarrow] (context4) -- (rnn4);
  \draw[myarrow] (context5) -- (rnn5);

  \draw[myarrow] (rnn1) -- (fakeoutput0);
  \draw[myarrow] (rnn2) -- (fakeoutput1);
  \draw[myarrow] (rnn3) -- (fakeoutput2);
  \draw[myarrow] (rnn4) -- (fakeoutput3);
  \draw[myarrow] (rnn5) -- (fakeoutput4);

  \draw[myarrow] (fakeoutput1) -- (output1);
  \draw[myarrow] (fakeoutput2) -- (output2);
  \draw[myarrow] (fakeoutput3) -- (output3);
  \draw[myarrow] (fakeoutput4) -- (output4);

  \draw[myarrow] (rnn1) -- (dots1);
  \draw[myarrow] (dots1) -- (rnn2);
  \draw[myarrow] (rnn2) -- (rnn3);
  \draw[myarrow] (rnn3) -- (dots2);
  \draw[myarrow] (dots2) -- (rnn4);
  \draw[myarrow] (rnn4) -- (rnn5);

  \draw[dotted, myarrow] (output1) -- (context3);
  \draw[dotted, myarrow] (output3) -- (context5);

  \path (rnn1) -- coordinate[midway] (mid) (rnn5);

  \path let \p1 = ($(rnn5.west) - (rnn1.east)$) in
  node[draw, thick, below=of mid, minimum width=\x1, inner sep=0.5cm, label=center:Attention + MLP, fill=orange!30, yshift=-0.8cm] (attention) {};

\end{tikzpicture} %
    \caption*{The hybrid architecture}
    \label{fig:hybrid}
\end{minipage}
\caption{\textbf{$\ICRA$.} 
The retrieval augmented RNN (left) and the hybrid architecture (right) close the representation gap between RNNs and Transformers.}
\end{figure}

\subsection{Explicit Retrieval Through Regular Expression} \label{sec:icrag}

First, we explore the power of RNNs with {\em Retrieval Augmented Generation} (RAG),
which gives an LM the capability to retrieve relevant information to assist generation. In our context, we are specifically interested in allowing LMs to call functions to retrieve information from their context, which we call {\em $\ICR$ Augmented Generation} ($\ICRA$).

We will first show that adding function calls to associative recall is not enough to close the representation gap between RNNs and Transformers. 

\begin{proposition}
\label{prop:ar_not_enough}
For any RNN family with $O(\log n)$ bit memory and $O(\log n)$ parameter with an oracle to receive results for the AR problem (\Cref{def:ar}) for any queries, for large enough $n$, the RNN can't solve the index problem (\Cref{def:index}) with length $n$ in any CoT steps.
\end{proposition}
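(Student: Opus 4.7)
The plan is to restrict to a subfamily of Index instances on which the AR oracle is almost entirely uninformative, and then close with a counting argument on the pair (RNN state, oracle table). First I would consider only inputs whose tokens all belong to $\{1, 2\} \subset [n]$, so every input is a binary string $I \in \{1,2\}^n$. On such an $I$, the AR oracle's answer to any query $q \in \{3, \ldots, n\}$ is a fixed ``token does not appear'' response, while for $q \in \{1, 2\}$ the answer lies in $\{1, 2, \mathrm{undefined}\}$. Whatever convention \Cref{def:ar} adopts for resolving multiple occurrences of $q$, the entire AR table $\tau(I) \colon [n] \to \Vocab$ on binary inputs is thus determined by $O(1)$ bits.

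Next I would show that on any such $I$, the RNN's final answer to $\mathrm{Index}(I, i)$ depends on $I$ only through the pair $(S(I), \tau(I))$, where $S(I)$ is the state after reading the input sequence. The argument is an induction on the CoT step: once the query token $i$ has been consumed, every subsequent CoT token (including every search query the model issues and the corresponding oracle response fed back into the stream) is produced by a deterministic recurrence whose only $I$-dependent inputs are $S(I)$ itself and, through the oracle, $\tau(I)$. Hence the predicted answer can be written as $h(S(I), i, \tau(I))$ for a fixed function $h$ that never directly inspects $I$.

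A pigeonhole argument then finishes the proof. Since $S(I) \in \R_{\precision}^{\hiddendimension}$ with $\hiddendimension\precision = O(\log n)$, the state takes at most $\poly(n)$ distinct values, and combined with the $O(1)$ bits of $\tau(I)$, the number of distinct pairs $(S(I), \tau(I))$ across $I \in \{1,2\}^n$ is at most $\poly(n)$. For $n$ large enough this is exponentially smaller than $2^n$, so there exist $I_1 \neq I_2$ inducing the same pair. Picking any coordinate $i^*$ where $I_1$ and $I_2$ disagree yields $h(S(I_1), i^*, \tau(I_1)) = h(S(I_2), i^*, \tau(I_2))$, so the RNN returns the same token on both $\mathrm{Index}(I_1, i^*)$ and $\mathrm{Index}(I_2, i^*)$ while the correct answers differ, a contradiction.

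The main obstacle is the inductive claim of the second paragraph: one must verify carefully that the adaptive AR queries the model may issue during CoT cannot extract any information about $I$ beyond the fixed table $\tau(I)$. This reduces to the observation that the oracle is a stateless deterministic function of $I$, so its reply to any query $q$ is exactly $\tau(I)[q]$; hence by induction on CoT steps the entire (possibly very long) sequence of queries and responses is a function of the starting state and $\tau(I)$ alone. Once this is established, the counting step is routine, and the conclusion holds for arbitrary CoT length, since additional CoT steps cannot enlarge the effective information beyond the $O(\log n)$ bits already present in $(S(I), \tau(I))$.
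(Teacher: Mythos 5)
Your proof is correct and takes essentially the same approach as the paper: restrict to a subfamily of Index instances on which the AR oracle is nearly uninformative, then conclude via an information-theoretic counting argument. The paper's subfamily places the special token $\specialtoken$ at every even position (so the oracle's only non-trivial answer is the fixed third input token), while you use binary inputs in $\{1,2\}^n$ (so the oracle table is $O(1)$ bits) and run the pigeonhole explicitly over pairs $(S(I),\tau(I))$ rather than reducing to \Cref{thm:index}, but the substance is the same.
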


\begin{proof}
Consider a special type of index problem where every token at the even position of the input sequence is a special token $\specialtoken$ and the rest of the tokens are uniformly random. Then the oracle for the AR problem can be simulated by the RNN by simply outputting the $\specialtoken$ when the query is not $\specialtoken$ and outputting the third token when the query is $\specialtoken$. However, following similar proof of~\Cref{thm:index}, we can show that the RNN can't solve this special form of index problem with length $n$ in any CoT steps.
\end{proof}

In this light, we need to consider a more general form of $\ICR$ capability. We specifically consider a special form of $\ICRA$ that enables an LM to perform regular expression matching because the regular expression is a flexible primitive that can be used to describe a wide range of retrieval tasks and can be implemented efficiently on modern hardware.

Given an LM $\model$ with vocabulary $\Vocab$ (containing two additional special tokens, $\StartSearch$ and $\EndSearch$) and the tokenized input sequence $\InputTokenSequence \in \VocabSize^{\inputlength}$, the LM $\model$ with $\ICRA$ generates following sequence of tokenized sequence:
\begin{align*}
    \TokenSequence_0 &= \InputTokenSequence, 
    \qquad \NextToken_i = \argmax_{j \in \Vocab} \model(\TokenSequence_{i})[j], \\
    \TokenSequence_{i + 1} &= \begin{cases} \TokenSequence_{i} \oplus \NextToken_i,        &\quad \text{if }\NextToken_i \neq \EndSearch \\
    \TokenSequence_{i} \oplus \NextToken_i \oplus \RETRIEVE\left(\TokenSequence_i\right) , &\quad \text{otherwise.}
    \end{cases}
\end{align*}
Here $\RETRIEVE$ looks for the last occurrence of $\StartSearch$ at position $\StartPos$ and $\EndSearch$ in $\TokenSequence$ at position $\EndPos$ and treat $\Detokenize(\TokenSequence_{\StartPos: \EndPos})$ as a regular expression, where $\Detokenize$ maps the tokenized sequence back to the string, inserting a space between every pair of adjacent tokens. The algorithm then runs a regular expression matching on $\Detokenize(\TokenSequence_{1:\StartPos - 1})$, finds the first matching substring, and returns the first capturing group according to the regular expression (i.e., content embraced by a pair bracket in the regular expression). 
While there are many grammar standards of regular expressions, we adhere to the standard specified in the \texttt{re} library of Python. That is, we evaluate the following Python code to get the result of the regular expression matching:
\begin{center}
    \begin{tabular}{c}
\begin{lstlisting}[language=Python]
re.search(pattern, string).group(1)
\end{lstlisting}
\end{tabular}
\end{center}
where $\Detokenize(\TokenSequence_{\StartPos: \EndPos})$ is the pattern and $\Detokenize(\TokenSequence_{1:\StartPos - 1})$ is the string.

The following theorem shows that $\ICRA$ with regular expressions is powerful enough for RNNs to solve the $\ICR$ problems in~\Cref{sec:simple-icr} with CoT.

\begin{restatable}[]{theorem}{thmindexre}
\label{thm:index_re}
For task~$T \in \{\text{Index, AR, c-gram retrieval, Counting, IsTree}\}$, there exist constant-size Linear RNNs that can solve $T$ with $\ICRA$. For $T$ other than IsTree, $O(1)$ steps of CoT is required and for IsTree, $O(n \log n)$ steps of CoT is required.
\end{restatable}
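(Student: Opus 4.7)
The plan is to show that a constant-size Linear RNN solves each of these tasks by offloading the heavy in-context retrieval work to the regex engine, while using only local reasoning to emit the right queries. The Linear RNN's job is essentially to (i) recognize the end of the input and the form of the query, (ii) emit $\StartSearch$ followed by a short regex pattern that encodes the retrieval primitive, and (iii) post-process the result returned by $\RETRIEVE$ into the final answer, possibly issuing further queries.

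For the four simple retrieval tasks, the query sits at the end of the input, so the RNN can buffer it in its $O(\log n)$-bit state and then emit a regex of constant token length. For Index with position $i$, the pattern captures the $i$-th whitespace-separated token via a bounded-quantifier expression that repeats a non-whitespace-plus-whitespace block $i-1$ times before a capture group. For AR with key $q$, the pattern searches for $q$ followed by a single capture group bound to the next token. For $c$-gram retrieval, the pattern concatenates the $c-1$ key tokens with whitespace separators and ends with a single capture group. For Counting with key $q$ and threshold $t$, the pattern requires $t+1$ occurrences of $q$ separated by non-$q$ blocks, and the RNN returns $1$ if $\RETRIEVE$ succeeds and $0$ if it returns $\Failed$. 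In each case the post-processing is a constant-size circuit and the total CoT is $O(1)$ tokens.

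For $\IsTree$, the strategy mirrors the DFS-based upper bound from the Transformer construction in \Cref{thm:rnn_trans_istree}. The RNN first uses a constant number of retrieval calls to check that the input contains exactly $n-1$ distinct edges; if not, it outputs $\NO$. Otherwise it simulates DFS from vertex $1$ by offloading the stack onto the CoT itself. At each step it emits the next vertex of the Euler tour, using two retrieval primitives, each realizable by a constant-token regex that references $O(1)$ vertex labels stored in the RNN state: (a) given the tour-so-far and the current vertex, find the vertex's parent in the tour; (b) given the current vertex and the most recently used edge, find the next incident edge in the input. When the tour terminates, one final retrieval verifies that all $n$ vertices have appeared, whence the RNN outputs $\YES$, or $\NO$ otherwise. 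The Euler tour has $O(n)$ edges, each DFS step emits a constant-size regex plus $O(1)$ vertex labels, and the per-step $O(\log n)$ bookkeeping for vertex names, edge markers, and whitespace separators inside the detokenized string yields the $O(n \log n)$ CoT bound.

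The main obstacle will be making the regex primitives work cleanly against the detokenized input. Because $\Detokenize$ inserts a space between every pair of adjacent tokens, the vocabulary must contain enough regex fragments so that the detokenized emission is a syntactically valid Python pattern, and the pattern itself must absorb the induced whitespace between data tokens. For $\IsTree$, the subtler issue is realizing primitive (b) without quadratic blow-up: we cannot list all previously visited neighbors inside a single pattern. A clean fix is to include in the regex the most recent edge involving the current vertex (stored in the RNN state and copied into the search query), and to rely on Python's leftmost-match semantics to return the next occurrence after that anchor. Once these primitives are in place, the remainder of the argument is a direct simulation of DFS together with a final counting check, and the RNN implementation of each step reduces to a constant-size Linear RNN block that reads the current token, updates a bounded register holding the current/previous vertex, and emits the next CoT token.
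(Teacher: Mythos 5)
Your proposal for Index, AR, $c$-gram retrieval, and Counting is essentially the paper's construction: buffer the query suffix in the RNN state, emit one constant-length regex query (a bounded-quantifier skip for Index, a key-then-capture for AR and $c$-gram, a repeated group with $\{t+1\}$ for Counting), and post-process the $\RETRIEVE$ result (including the $\Failed$-or-not branch for Counting) with a constant-size circuit. That matches the paper, including the anchor-based trick for stepping to the \emph{next} incident edge in IsTree, which is exactly the role of the paper's regexes $r_1$ and $r_3$ that prepend the most recently used edge before a non-greedy $\texttt{.*?}$.

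Two details in your IsTree account do not go through, though, and neither is present in the paper's proof. First, the preliminary step of verifying ``the input contains exactly $n-1$ distinct edges'' with a constant number of retrieval calls is not realizable: detecting duplicate edges is itself an $\Omega(n)$-information statistic and cannot be folded into $O(1)$ regex queries with a constant-size RNN. It is also unnecessary---the paper's \Cref{alg:istreeretrieval} detects a back edge (including a duplicated edge) as soon as a candidate next vertex $v''$ has already been pushed, via the $r_4$ query on the CoT-so-far. Second, ``one final retrieval verifies that all $n$ vertices have appeared'' does not work as a single regex over the prefix: checking that $1,\dots,n$ each occur is again $\Omega(n)$ queries or $\Omega(n)$ pattern length. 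The paper sidesteps this by keeping a running counter (the size of $S_2$, or equivalently the Euler-tour length) in the RNN state and comparing it to $2n-1$ at termination. Finally, your $O(n\log n)$ accounting via ``per-step $O(\log n)$ bookkeeping'' is not justified---each vertex label is a single token, so a DFS step emits $O(1)$ CoT tokens and the honest bound is $O(n)$, which is also what the paper's \Cref{thm:rag_istree} states; the $O(n\log n)$ in the theorem text is simply a looser upper bound that you should not try to ``account for'' with fictitious per-step overhead.
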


\textbf{Proof Sketch.} For the Index problem, let the RNN output the regular expression {\verb|^(?:\S\s*){|\regexparam{$a$}\verb|}(\S)|}, where $\regexparam{a} = k - 1$. For AR, let the RNN output {\verb|\b|\,\regexparam{$q$}\,\verb|\b(\S+)\b|}, where \regexparam{$q$} is the number in query. For $c$-gram retrieval, let the RNN output {\verb|\b|\,\regexparam{$q_1$}\,\ldots\,\regexparam{$q_{c-1}$}\,\verb|\b(\S+)\b|}, where \regexparam{$q_i$} is the $i$-th number in the query. For Counting, let the RNN output {\verb|(\b|\,\regexparam{$v$}\,\verb|\b){|\regexparam{$k+1$}\verb|}|}, where \regexparam{$v$} is the query token and \regexparam{$k$} is the query threshold.  \qed

Beyond these simple $\ICR$ problems, in~\Cref{thm:rnn_trans_istree}, we have shown that RNNs cannot solve IsTree due to its implicit requirement of $\ICR$ capability. We now show that $\ICRA$ can help linear RNN solve IsTree in $O(n)$ CoT steps. See \Cref{sec:proof_rag_istree} for the proof.
\begin{restatable}{theorem}{thmragistree}
\label{thm:rag_istree}
There exists a Linear RNN family with $O(\log n)$ bit memory and $O(\log n)$ parameter, that can solve IsTree of size $n$ with $\ICRA$ in $O(n)$ CoT steps.
\end{restatable}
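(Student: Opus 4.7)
The plan is for the Linear RNN to simulate depth-first search from vertex~$1$ and emit the resulting Euler tour as its chain of thought, mirroring the Transformer construction of \Cref{thm:rnn_trans_istree}. Since the RNN has only $O(\log n)$ bits of state, it cannot store the adjacency structure of the graph; every neighbor lookup and visited-set question is delegated to the regex oracle provided by $\ICRA$. Throughout the DFS, the state holds only a constant number of $O(\log n)$-bit values: the current tour vertex $v$, the preceding tour vertex $u$, the vertex count $n$ (accumulated during the input pass as a running maximum over observed vertex tokens), and a scratch counter used at the end.

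\textbf{DFS loop via regex.}
After the input pass, I would have the RNN emit vertex~$1$ together with a fixed separator token that demarcates the boundary between input and CoT, and then enter a loop. From state $(u,v)$, the RNN emits a regex that, matched against the text before $\StartSearch$, returns the next neighbor $w$ of $v$ in the input graph appearing strictly after the edge $\{u,v\}$. Such a pattern can be written in a constant number of tokens by anchoring on the literal substring representing $\{u,v\}$ (in either token order), using the lazy quantifier \texttt{.*?} permitted by the Python \texttt{re} grammar of \Cref{sec:icrag}, and capturing the other endpoint of the next edge incident to $v$. Since the $\sim$ token appears only inside the input, this regex cannot spuriously match in the CoT. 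A second regex query, anchored on the separator, tests whether $w$ already appears as a vertex token in the CoT region. If $w$ is present and $w \neq u$, a back-edge has been detected and the RNN emits $\NO$. Otherwise it appends $w$ to the tour and transitions to state $(v,w)$. When the first regex returns no match, the RNN backtracks by emitting $u$ and transitioning to state $(v,u)$. The DFS ends when the state reaches $(\cdot, 1)$ with no further edge available.

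\textbf{Termination check and complexity.}
If no cycle has been detected, the RNN verifies connectedness by iterating $k = 1,\ldots,n$: it emits $k$ explicitly, issues a single regex membership query against the CoT region for each $k$, and increments its scratch counter on a match. It emits $\YES$ iff the counter reaches $n$ and $\NO$ otherwise. Correctness follows from the classical fact that an undirected graph on $n$ vertices is a tree iff a DFS from any vertex visits all $n$ vertices without encountering a back-edge. Each DFS step contributes $O(1)$ CoT tokens (the fixed-length regex plus the appended vertex), the Euler tour of a tree has $2(n-1)$ vertex entries, and the connectedness loop adds a further $O(n)$ tokens; non-tree inputs abort within the same $O(n)$ budget.

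\textbf{Main obstacle.}
The delicate part is instantiating these primitives inside a constant-size Linear RNN (\Cref{def:minimal_rnn}) using only linear recurrences and ReGLU feed-forward layers at $O(\log n)$-bit precision. Two subroutines need care. First, the RNN must emit a regex whose pattern embeds the numeric tokens $u$ and $v$ currently stored in its state; this is exactly the state-to-token emission primitive already used by the constructions in \Cref{thm:index_re}, and can be reused verbatim. Second, the state update $(u,v) \mapsto (v,w)$ upon reading the retrieved token $w$ is a shift-and-store operation, which a Linear RNN implements via a constant-dimension linear recurrence with a companion-style matrix together with a ReGLU layer routing $w$ into the appropriate coordinate. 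Once these two primitives are in place, the DFS loop, the back-edge check, and the connectedness count compose routinely, yielding the claimed Linear RNN family with $O(\log n)$ parameters, $O(\log n)$-bit state, and $O(n)$ CoT steps.
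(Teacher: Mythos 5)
Your high-level plan is the same as the paper's (the paper's proof, via~\Cref{alg:istreeretrieval}, also runs a DFS from vertex~$1$ whose Euler tour and search queries form the CoT, with the regex oracle supplying neighbors, parents, and visited-set answers, and with the RNN holding only $O(1)$ local variables). However, the concrete DFS mechanics in your proposal are incorrect in two ways, and both are places where the paper's algorithm does something nontrivial. First, you track only the immediately preceding \emph{tour} vertex $u$, and use $w \neq u$ as the parent-skip test. That test is wrong once $v$ has more than one child: if the tour has just returned to $v$ from a child $c$, your state is $(c,v)$, the regex ``next edge after $\{c,v\}$'' can legitimately return the tree parent $f(v)$, and since $f(v)$ already appears in the CoT and $f(v) \neq c$ you declare a back-edge and output $\NO$ on a genuine tree. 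The paper avoids this by issuing a dedicated regex $r_2$ over the CoT to \emph{retrieve} $f(v)$ (each newly discovered child is written to the CoT in the marked form ``$\Failed,\ v,\ -,\ v''$'', making the parent recoverable later), and then explicitly handles the ``candidate equals $f(v)$'' case with one more query $r_3$. Second, anchoring the neighbor search strictly after the arrival edge $\{f(v),v\}$ silently discards every edge of $v$ that precedes $\{f(v),v\}$ in the input ordering; children of $v$ connected only via such earlier edges are never visited, and a true tree would fail your final connectedness count. The paper splits into two cases (has the tour just arrived at $v$, i.e.\ $v' = f(v)$, or returned from a child $v' \neq f(v)$?) and uses a ``first edge incident to $v$'' pattern in the former case so that the full adjacency list is enumerated.

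There is also a smaller soundness issue with your visited-set query: you test whether $w$ ``appears as a vertex token in the CoT region,'' but the CoT you emit is littered with regex patterns that themselves contain vertex numbers, so that membership test would report false positives. The paper sidesteps this by only searching for the specific tagged pattern ``$-\ w$'' (produced exactly once, when $w$ is first pushed), via the regex $r_4$. Your connectedness check by looping $k=1,\dots,n$ over membership queries is a valid alternative to the paper's running count of $|S_2|$ (\Cref{lem:summation}) and fits the $O(n)$ budget, and your reuse of the emission/memorization primitives of \Cref{thm:index_re} and \Cref{lem:recent_input_memorizing} for state-to-token output is exactly what the paper does. But as written the proposal would misclassify some trees as non-trees; you need to (i) retrieve the DFS parent $f(v)$ with a dedicated query rather than infer it from $u$, (ii) switch between a ``first edge'' and a ``next edge after $\{\cdot,v\}$'' pattern depending on whether you just arrived at $v$, and (iii) tag visited entries so the visited query can target them without matching the regex text itself.
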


To further show the power of the explicit retrieval, the following theorem further proves a general result, showing that $\ICRA$ empowers RNNs with $O(\log n)$ bit memory to simulate polynomial-time Turing machines.

\begin{restatable}[]{theorem}{rnnturing}
\label{thm:rnn_turing}
Given $A, B > 0$, for all polynomial-time Turing machines $T \in \Time(n^A)$ with $B$ states and vocabulary size $B$, there exist Linear RNNs with $B$ special symbols, $O(A\log n)$-bit precision and memory, and $O(AB^2)$ parameters that can output the result of $T$ by running $O(n^A)$ steps of CoT with $\ICRA$.
\end{restatable}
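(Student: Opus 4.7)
The plan is to construct a Linear RNN that simulates the Turing machine $T$ step by step, using the CoT buffer (augmented by $\ICRA$ queries) as an \emph{external} tape, while the RNN's own $O(A\log n)$-bit hidden state stores only a constant-size description of the TM configuration: the current control state $s_k \in [B]$, the current head position $l_k \in [n^A]$ (which fits in $O(A \log n)$ bits), and a small sub-step counter indicating which token of the current macro-step is being emitted. The transition function $\delta : [B]\times[B]\to[B]\times[B]\times\{+1,-1\}$ is hard-coded as a $B\times B$ lookup table using $O(B^2)$ parameters in a ReGLU block; combined with the $O(A\log n)$-bit arithmetic for updating $l_k$, this accounts for the $O(AB^2)$ parameter budget.

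The core of the construction is a single macro-step protocol that the RNN repeats. At macro-step $k$ it executes: (i) \emph{Read} the tape at $l_k$ by emitting the query \texttt{<StartSearch>(?:.*W~$l_k$~|\textasciicircum(?:\textbackslash S\textbackslash s*)\{$l_k-1$\})(\textbackslash S)<EndSearch>}; the $\RETRIEVE$ primitive runs Python's \texttt{re.search}, whose greedy leading \texttt{.*} forces the first alternative, when applicable, to align with the \emph{last} write marker of the form ``\texttt{W $l_k$ $\sigma$}'' in the CoT prefix, and otherwise falls back to the second alternative that captures the $l_k$-th input token. The returned symbol $\sigma_k$ is appended as the RNN's next input token. (ii) \emph{Transition}: the RNN consumes $\sigma_k$, looks up $\delta(s_k, \sigma_k) = (s_{k+1}, \sigma'_k, d_k)$ via the hard-coded ReGLU transition table, and internally prepares $l_{k+1} = l_k + d_k$. (iii) \emph{Write}: the RNN emits the three-token sequence ``\texttt{W $l_k$ $\sigma'_k$}'', which becomes the authoritative tape value at position $l_k$ for all later queries, again by the greediness argument. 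When $T$ reaches a halting state the RNN emits the answer token and stops.

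Combining this with an initial phase in which the RNN silently scans the $n$ input tokens to detect end-of-input and initialize $(s_0, l_0) = (s_{\mathrm{init}}, 1)$, each macro-step produces $O(1)$ CoT tokens: the integers $l_k, l_k-1$ are single vocabulary tokens since $[n^A]$ is covered by the $O(A\log n)$-bit precision, and the regex skeleton and write-marker each use a constant number of special regex tokens. Hence $n^A$ TM steps correspond to $O(n^A)$ CoT steps as required. The hidden state only ever stores $(s_k, l_k, \text{sub-step}) \in [B]\times[n^A]\times O(1)$, i.e.\ $O(A\log n)$ bits.

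The main technical obstacle I anticipate is the \emph{first-visit} edge case: if $l_k > n$ and no write to $l_k$ has yet been recorded, neither branch of the combined regex returns a meaningful symbol (the input fallback has fewer than $l_k$ tokens). The plan is to resolve this by structuring the macro-step so that the very first act upon entering any position $l_k$ with $l_k > n$ is to emit a canonical blank write ``\texttt{W $l_k$ $\bot$}'' before issuing the read query; detecting ``first visit to a position $>n$'' only requires comparing $l_k$ against the input length counter and issuing the blank write once, which amortizes to at most one extra token per step and preserves the $O(n^A)$ bound. A secondary check is to verify that the greedy-alternation behaviour of \texttt{re.search} indeed selects the last ``W $l_k$'' marker whenever one exists; this follows because \texttt{re.search} begins at position $0$ of the string, where \texttt{.*} greedily consumes the entire prefix and then backtracks just enough to expose the last ``\texttt{W $l_k$ }'' substring, so the second alternative is tried only when no write marker for $l_k$ is present. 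With these pieces in place, the encoding of the finite-state controller (which determines, as a function of sub-step counter and last emitted token, what to emit next) reduces to a constant-depth ReGLU lookup that fits comfortably within $O(AB^2)$ parameters, completing the Linear RNN construction.
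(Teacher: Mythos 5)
Your proposal matches the paper's proof in its essential strategy: the RNN's $O(A\log n)$-bit state stores only the TM control state, head position, and a sub-step counter; the transition function $\delta$ is hard-coded as an $O(B^2)$-size lookup table in a ReGLU block; each TM step is unrolled into $O(1)$ CoT tokens that write the new cell value onto the context with a position marker; and a regular-expression $\ICRA$ query retrieves the latest write to the current head position. The per-step token count and the parameter/precision accounting also line up. The one substantive place you diverge is the initialization of the tape and the treatment of never-written cells, and there the proposal has a genuine gap.

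The paper handles this with an explicit $O(n)$-step preamble that re-emits every input cell $i\in[n]$ in the write-marker format "$i\ \tape[1,i]\ i$", so that every subsequent read is a uniform marker lookup; if a cell was never written, the regex simply fails to match and $\RETRIEVE$ returns $\Failed$, which the RNN decodes as the blank symbol. You instead skip the preamble and use a two-alternative regex whose fallback `^(?:\S\s*){$l_k-1$}(\S)` directly indexes position $l_k$. But $\RETRIEVE$ applies the regex to $\Detokenize(\TokenSequence_{1:\StartPos-1})$, which includes \emph{all} CoT tokens emitted so far, not just the original input. When $l_k>n$ and the cell has never been written, you claim "neither branch returns a meaningful symbol," but the fallback does in fact match — it just captures the $l_k$-th token of the full prefix, which is an arbitrary CoT token, not a blank, and $\RETRIEVE$ therefore does not return $\Failed$. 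Your proposed repair, emitting a canonical blank write `W $l_k$ $\bot$` on "first visit" to a position $>n$, cannot be carried out as described: doing it on every visit would overwrite the cell's true value (the greedy `.*` would then always fetch the fresh blank), and doing it exactly once per position requires detecting first-visit, which the $O(A\log n)$-bit state cannot track. The clean fixes are either the paper's preamble plus $\Failed$-as-blank convention, or having the RNN compare $l_k$ against $n$ internally and emit the fallback alternative only when $l_k\le n$, so that a non-match on the write-marker alternative unambiguously signals a blank cell. (A smaller issue in the same spirit: the fallback index must also account for the leading $\StartSentence$ token, so `{$l_k-1$}` is off by one.)
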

\textbf{Proof Sketch.} The retrieval augmented RNN can simulate the Turing machine by maintaining the state of the Turing machine and the position of the tape head in its memory and writing down the tape in the context in the form of $c\ v\ c$ to indicate the value of the tape at position $c$ is updated to $v$. Given the input tape $\tape[1,i]$, the retrieval augmented RNN will first write down the initial tape in the previous form of $i\ \tape[1,i]\ i$ using the regular expression used in the Index problem.
The RNN can then generate the search queries with forms of {\verb||\regexparam{$p$}\verb| (.) |\regexparam{$p$}\verb| .*?$|} with \regexparam{$p$}  being the pointer to retrieve the information from the context.

By introducing $\ICRA$, RNNs are able to use regular expressions to retrieve information from a distance and put relevant information together.
Then RNNs can use perform more complex operations locally and output tokens to be retrieved later. \qed

As a final note, our focus here is to understand the representation power of RNNs given an appropriate RAG, but not to propose a method that immediately leads to practical applications.
While the above results show that $\ICRA$ can close the representation gap between RNNs and Transformers in solving algorithmic problems, a limitation here is that $\ICRA$ is not an immediate practical solution, as there is no existing training data for this $\ICRA$.

\subsection{Implicit Retrieval by Appending Just One Transformer Layer}

Since~\cite{bahdanau2016neural}, attention mechanisms have been understood as a form of compensation for the fixed memory size of RNNs, allowing the model to attend to the entire context. We show in this section formally that this form of implicit retrieval can close the representation gap between RNNs and Transformers in solving algorithmic problems. We consider the following hybrid architecture, which combines the RNN and the Transformer by appending a single Transformer layer to the RNN output.

\begin{definition}[Hybrid RNN]
\label{def:hybrid}
A hybrid RNN is a model that consists of an RNN with transition and output function $\StateTransition, \StateOutput$ and one Transformer layer $\TransformerBlock$, the output of the RNN is used as the input of the Transformer layer and the output of the Transformer layer is used to produce the next token. Concretely, given the input sequence $\InputTokenSequence$, the output of the hybrid architecture is:
\begin{align*}
    \Hybrid(\TokenSequence) &= \Softmax\left(\WordEmbedding\TransformerBlock\left([\StateOutput(\state_{\iterpos})]_{\iterpos \in [\position]}\right)\right)_{:, \position}, \\
    \forall k \in [\position], \state_{k} &= \StateTransition(\state_{k - 1}, \Embedding(\TokenSequence)_{:,k}),
\end{align*}
\end{definition}

First, we show that hybrid RNNs can solve the $\ICR$ problems in~\Cref{sec:simple-icr} without CoT.

\begin{restatable}[]{theorem}{thmhybridindex}
\label{thm:hybrid_index}
For task $T \in \{\text{Index, AR, c-gram retrieval, Counting, IsTree}\}$, there exist constant-size hybrid Linear RNNs that can solve $T$. For $T$ other than IsTree, no CoT is required and for IsTree, $O(n \log n)$ steps of CoT is required.
\end{restatable}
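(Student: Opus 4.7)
The plan is to reduce each task to a combination of (i) \emph{local} preprocessing that a Linear RNN can perform within its bounded state, and (ii) a \emph{single} global retrieval operation handled by the appended Transformer layer via the $\Match$ / $\COUNT$ attention primitives already exploited in the proof of \Cref{thm:index} and \Cref{thm:rnn_trans_istree}. Since the hybrid model runs end-to-end for every CoT token, a single attention layer per forward pass is enough as long as each CoT step requires only one retrieval.

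For the four simple $\ICR$ tasks, I would argue as follows. For \textbf{Index}, the Linear RNN just passes through the (token, position)-embedding to each column of the final hidden-state matrix; the Transformer layer then uses one $\Match$ head to compare the query index (held in the last column) with the position coordinate and returns the token coordinate, exactly as in the upper bound of \Cref{thm:index}. \textbf{AR} is analogous: $\Match$ on the key token retrieves the successor, which the RNN has precomputed by using its $\GLU$ to write each token's successor into the same column (a one-step delay line $h_t = x_{t-1}$ via $A = 0, B = I$ followed by a shift). \textbf{Counting} uses a single $\COUNT$ head (uniform attention to matching positions) followed by the $\ReGLU$ FFN to compare $1/(\text{count}+1)$ with the threshold, as in the $\IsTree$ upper-bound proof. \textbf{$c$-gram retrieval} is the only case needing nontrivial RNN preprocessing: the $\GLU$ with $A$ a nilpotent shift and suitable $B$ can aggregate the last $c-1$ tokens at each position into a single vector, after which one $\Match$ head compares this aggregated key with the query $(c-1)$-gram and returns the following token.

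For \textbf{IsTree}, the plan is to reproduce the Transformer DFS construction of \Cref{thm:rnn_trans_istree}, but distribute the work across $O(n\log n)$ CoT steps so that only one attention layer per step is needed. Each CoT token corresponds to one tape-operation of the DFS: advancing the Euler tour by one edge, backing up to a parent, or finally reading off the tour length. The Linear RNN tracks constant-size bookkeeping state (current vertex, last emitted token, whether we are in tour-emission or length-checking phase) via its $\GLU$, and caches the two quantities needed by $\Match$ in its output columns: (a) each token's predecessor type, and (b) for each edge $(a,b)$ the counts $1/(n_{e,a}+1)$ and $1/(n_{e,b}+1)$, both of which are linear-recurrent and hence computable by the $\GLU$. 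The single Transformer layer then performs, on each CoT step, exactly one of the two retrieval primitives used in the original proof (find parent of last emitted vertex, or find the next edge at the current vertex), with the $\ReGLU$ FFN doing the $1/(k+1)\mapsto 1/(k+2)$ update. The extra $\log n$ factor in the CoT length accommodates writing out pointer indices in binary between DFS steps so that the attention layer can later $\Match$ on them.

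The main obstacle is the $c$-gram case and, more subtly, ensuring that the IsTree DFS can always be advanced by exactly one retrieval per CoT token. For $c$-gram retrieval one must verify that the $\GLU$ shift register actually fits in constant hidden dimension independent of $n$ (it does, since $c$ is a fixed constant of the problem). For IsTree the delicate point is that the two-step primitive of the Transformer proof---first find $1/(n_{(u,v),v}+1)$, then look up $1/(n_{(u,v),v}+2)$---naively needs two attention invocations; I would resolve this by spending two consecutive CoT tokens on it (first emit the intermediate count, then emit the next edge), which keeps the overall CoT length at $O(n\log n)$ and uses only one attention layer per token. All other parts reduce directly to constructions already established in \Cref{thm:index,thm:rnn_trans_istree}, so no new lower-level machinery is required.
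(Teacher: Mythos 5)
Your treatment of the four simple $\ICR$ tasks (Index, AR, $c$-gram retrieval, Counting) matches the paper's proof in substance: push local preprocessing into the RNN's state/$\GLU$ and let the single attention layer perform the $\Match$/$\COUNT$ retrieval, reusing the constructions from \Cref{thm:index}. That part is fine.

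The IsTree part of your proposal, however, has a genuine gap, and the approach you describe would not work. You claim that the Linear RNN can cache ``for each edge $(a,b)$ the counts $1/(n_{e,a}+1)$ and $1/(n_{e,b}+1)$, both of which are linear-recurrent and hence computable by the $\GLU$.'' This is false for a constant-size Linear RNN: $n_{e,a}$ is the number of prior occurrences of vertex $a$, and tracking such counts in a streaming fashion for a generic vertex $a$ requires $\Omega(n)$ distinct counters, one per vertex, which the bounded state cannot hold. In the original Transformer proof of \Cref{thm:rnn_trans_istree} these quantities are produced by the $\COUNT$ \emph{attention} head, not by a streaming recurrence; you cannot simply move them into the RNN. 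Moreover, your account of the $O(n\log n)$ CoT length (``writing out pointer indices in binary'') does not correspond to a real cost in the paper's token model, where position indices and vertex labels are single tokens; as written your construction would use only $O(n)$ CoT steps, which contradicts your own claim and signals that the source of the $\log n$ factor is missing from your argument. The paper's actual proof of \Cref{thm:hybrid_istree} avoids the count-precomputation problem entirely by using a different primitive: the RNN maintains a doubling counter $2^i$ in a single state dimension, and the appended attention layer uses $\MatchClose$ heads to search for the next edge incident to $v$ at positions near $p + 2^i$, iterating $i = 0,\dots,\log_2 n$. That binary-doubling search is where the $\log n$ factor genuinely comes from, and it is the key idea absent from your proposal.
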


\textbf{Proof Sketch.} The proof is similar to the proof of~\Cref{thm:index}, using the appended Transformer layer to simulate the $\Match$ function and $\COUNT$ function in the RNN.\qed

Similar to the situation in~\Cref{sec:icrag}, the implicit retrieval method can enpower the hybrid linear RNN to solve IsTree with CoT. See~\Cref{sec:thmhybridistree} for the proof.

\begin{restatable}[]{theorem}{thmhybridistree}
\label{thm:hybrid_istree}
There exists a hybrid Linear RNN with $O(\log n)$ bit memory and $O(\log n)$ parameter, that can solve IsTree of size $n$ with a chain of thought of length $O(n \log n)$. 
\end{restatable}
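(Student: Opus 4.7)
The plan is to port the Transformer construction from the proof of \Cref{thm:rnn_trans_istree} to the hybrid architecture, letting the Linear RNN handle all local $O(\log n)$-bit bookkeeping and treating the single top Transformer layer as a retrieval primitive that can be invoked at most once per generated CoT token. Recall that the Transformer solved IsTree by running DFS from vertex $1$, emitting the Euler tour, and verifying that it traverses exactly $2(n-1)$ edges and reaches every vertex. The tour simulation reduces to the two attention primitives $\Match$ and $\COUNT$ established in the proof of \Cref{thm:rnn_trans_istree}, both of which fit inside a single attention layer.

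During the input pass, I would arrange the Linear RNN to write at each position a constant-dimensional signature containing the token itself and a few locally maintained counters and type flags (distinguishing $\StartSentence$, $\sim$, vertex tokens, and later CoT tokens); all of these fit in $O(\log n)$ bits per position. During CoT generation the RNN carries a compact working state holding the current tour vertex, its parent, and a pointer into the input that marks the last-visited edge incident to the current vertex, and it emits CoT tokens that simultaneously serve as the DFS transcript and as a scratchpad, so that the attention layer can retrieve earlier partial results on subsequent steps.

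Each atomic DFS transition---advance from an edge $(u,v)$ to the next edge incident to $v$ other than the one we arrived on---is implemented by $O(\log n)$ CoT tokens. With only one attention layer per token, we cannot compose $\COUNT$ with a subsequent $\Match$ inside a single forward pass the way the multi-layer Transformer proof does, so instead I would run a binary search spread across CoT tokens: in each round the RNN emits a candidate input position $p$ as part of its working-state token, the top Transformer layer answers one $\COUNT$ query of the form ``how many occurrences of $v$ lie in the input up to position $p$?'', and the RNN uses the returned count to halve the search interval. After $O(\log n)$ rounds the position of the next incident edge is located, its endpoints are read by a single $\Match$, the working state is updated, and the next tour vertex is emitted. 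Summed over the $O(n)$ DFS transitions this gives $O(n\log n)$ CoT tokens. A final $O(n)$-token pass uses $\Match$ and $\COUNT$ over the emitted tour to check that it has length $2(n-1)$ and that every input vertex appears, which together certify treeness.

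The principal obstacle is exactly this inability to compose $\COUNT$ with a subsequent $\Match$ inside a single attention layer, which is the structural reason the hybrid pays an extra $\log n$ factor in CoT length relative to the multi-layer Transformer of \Cref{thm:rnn_trans_istree}. Most of the remaining technical work consists of verifying that one attention head can implement both $\Match$ and $\COUNT$ modes when conditioned on a flag bit supplied by the RNN, that the $O(\log n)$-bit RNN state suffices to carry both the binary-search interval and the DFS working state across consecutive CoT tokens, and that the per-position signatures written during the input pass are rich enough to answer both kinds of queries. Granted these compatibility checks, the rest of the argument mirrors the DFS simulation already established for the Transformer.
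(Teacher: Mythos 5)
Your high-level plan matches the paper's: let the Linear RNN do $O(\log n)$-bit bookkeeping, use the single appended attention layer as a retrieval primitive, pay $O(\log n)$ CoT tokens per DFS transition because one layer cannot compose two attention operations, and get $O(n\log n)$ total. The paper's proof is a short reduction to \Cref{thm:rag_istree} that retrieves the parent via a $\MatchClose$ head and finds the next incident edge via a doubling scan over offsets $p+2^i$ with $\MatchClose$/$\MatchNearest$ heads, maintaining $2^i$ in the RNN state.

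The concrete gap is in your choice of retrieval primitive. You drive the binary search with a query of the form ``how many occurrences of $v$ lie in the input at positions $\le p$,'' where both $v$ and $p$ are supplied by the RNN at query time. A single softmax-attention head cannot realize this. The attention score $(\Key x_h)^\top(\Query x_\iterpos)$ is bilinear in the key features (which encode $h$, $h^2$, $\mathrm{token}(h)$, etc.) and the query features (which encode $v$, $p$, $p^2$, etc.), so the attention pattern can sharply select positions where a low-degree polynomial in $(h, p, \mathrm{token}(h), v)$ is extremal. That is enough to realize $\Match$ (via $-(\mathrm{token}(h)-v)^2$), $\MatchClose$ (adding $+h$), $\MatchNearest$ (adding $-(h-p)^2$), and the paper's $\COUNT$ (which always counts \emph{up to the current query position}, a hard cutoff that comes for free from the causal mask). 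But it cannot attend uniformly to the set $\{h \le p : \mathrm{token}(h)=v\}$ while suppressing $h>p$: the indicator $\one{h\le p}$ is not a low-degree polynomial in $(h,p)$, it is a step at a query-dependent location, and no quadratic in $h$ can vanish on all $h\le p$ and be large on all $h>p$. Nor can the input-pass RNN precompute per-vertex running counts at each position, since that would require $\Omega(n)$ state. So the monotone predicate your binary search relies on is not available in one attention layer.

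The paper avoids this by never needing a variable-position count: it uses $\MatchNearest$ with target $p+2^i$, whose key ingredient $-(h-(p+2^i))^2$ is expressible in the bilinear score, and does a galloping search over $i$. Your binary search idea is salvageable if you replace the count query with a match-closest query and adjust the invariant accordingly, but as written the central retrieval step does not go through.
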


Further, we show that this hybrid architecture with only one attention block is powerful enough to even simulate any polynomial-time Turing machine with CoT.
\begin{restatable}[]{theorem}{hybridturing}
    \label{thm:hybrid_turing}
Given any constant $A, B$, for any polynomial-time Turing machine $T \in \Time(n^A)$ with $B$ states and vocabulary size $B$, there exists a hybrid Linear RNN (see~\Cref{def:hybrid}) with vocabulary of $B$ special symbol, $O(A\log n)$ bit precision and memory, and $O(AB^2\log n)$ bit parameters, that can simulate the result of $T$ on any input with length $n$ in $O(n^A)$ CoT steps. 
\end{restatable}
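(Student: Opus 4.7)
\textbf{Proof plan for \Cref{thm:hybrid_turing}.} The plan is to mirror the simulation strategy used for \Cref{thm:rnn_turing}, but replace the explicit regular-expression retrieval primitive with an implicit lookup implemented by the appended single Transformer layer. As in that proof, I will represent the evolution of the Turing machine by writing its tape edits to the CoT stream as ``position value'' token pairs, so that the current content of cell $c$ coincides with the value token that follows the most recent occurrence of the position token $c$. The RNN's task is then purely local: it tracks (i) the Turing machine control state and (ii) the head pointer in its $O(A\log n)$-bit memory, using a constant-size state transition table realized by the ReGLU FFN inside each Linear RNN block and a few coordinates of the GLU to carry the state and pointer forward recurrently.

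First, I would handle the input loading phase: on seeing the $i$-th input token $s_i$, the RNN outputs the pair ``$i\ s_i$'' into the CoT, exactly as in the Index simulation of \Cref{thm:hybrid_index}. This populates the tape in the format expected by the retrieval step. Next comes the simulation loop. At each simulation step, the RNN emits the current head position $c$ as a token; the appended Transformer layer then performs a $\Match$-style attention from this just-emitted position query back to previous occurrences of $c$ in the CoT stream, returning the most recent such occurrence's value. I would use the same $\Match$ construction as in the Transformer upper bound of \Cref{thm:index} (causal, using the designated lookup coordinates in the word embedding), plus a position-recency tiebreak (e.g., by adding a decaying positional bias to the attention logits) so that ties are broken in favor of the latest write; the ReGLU FFN on top of the attention output then combines the retrieved value with the control state and pointer (which are recoverable from the last RNN hidden state accessible as the Transformer's input) to produce the next tape edit tuple and the updated state/pointer, which the RNN absorbs on its next input step.

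The parameter budget works out as in \Cref{thm:rnn_turing}: the transition table has $O(B^2)$ entries each needing $O(\log B)$ bits, the pointer arithmetic and comparisons use $O(A\log n)$-bit precision, and the attention layer is of constant head count and dimension, giving the claimed $O(AB^2\log n)$ bits of parameters. The total number of CoT tokens emitted is $O(n^A)$ because every Turing-machine step produces a constant number of CoT tokens.

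The main obstacle I expect is the recency requirement in the tape lookup: ordinary softmax $\Match$ retrieves a convex combination of \emph{all} matching positions, not the most recent one. I would address this exactly as is done implicitly elsewhere in the paper, by loading a strictly increasing function of position (e.g., a small multiple of the normalized position index) into an extra attention-logit coordinate, so that among tokens with matching key the most recent one dominates; with $O(\log n)$-bit precision this dominance can be made exact after the ReGLU rounding stage. A secondary subtlety is plumbing the RNN's internal state (control state and pointer) into the Transformer layer's computation: since the Transformer layer reads $\StateOutput(s_k)$ for every $k$, I would simply reserve dedicated coordinates of $\StateOutput$ to broadcast the current state and pointer, which the Transformer's FFN can then XOR/compare against the retrieved tape value to produce the next CoT token.
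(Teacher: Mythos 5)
Your proposal is correct and follows essentially the same route as the paper: the RNN tracks the control state and head pointer in its $O(A\log n)$-bit memory and emits tape-edit tokens to the CoT, while the single appended attention layer retrieves the latest write to the current cell. The ``position-recency tiebreak'' you identify as the main obstacle is exactly what the paper packages as the $\MatchClose$ primitive (\Cref{def:matchclosest}, \Cref{lem:matchclose}), whose attention logits include a $-n(\iterpos-\iterpos')^2$ term so that the nearest matching key dominates after rounding; your explicit input-loading phase of $O(n)$ ``position value'' pairs is a slightly more careful variant than the paper's proof, which keeps the $c\ v\ c$ triple format from \Cref{thm:rnn_turing} and reads the initial tape directly from the input portion via the same $\MatchClose$ head.
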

\textbf{Proof Sketch.} The proof is similar to the proof of~\Cref{thm:rnn_turing}. Instead of using regular expressions to retrieve the information from the context, the hybrid architecture can use the attention mechanism in the Transformer layer to implement the $\Match$ function to retrieve the information from the context. \qed

\section{Empirical Validation} \label{sec:validation}
We validate our theoretical findings through synthetic and natural language experiments: CoT alone cannot close the performance gap between RNNs and Transformers, but enhancing the in-context retrieval capability can narrow the gap.

\vspace{-0.3cm}
\subsection{Validation on Synthetic Task: IsTree}
\vspace{-0.2cm}

\begin{figure}[t] 
\centering
\includegraphics[width=\textwidth]{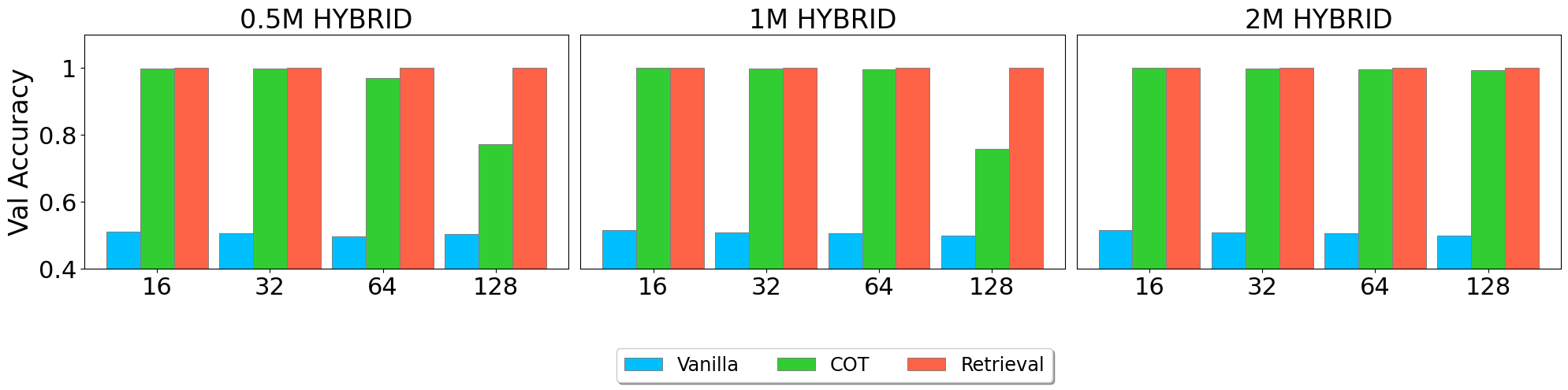} 
\caption{We train Mamba with one additional Transformer layer with a frozen word embedding and decoding head of three different model sizes (0.5M, 1M, 2M) on IsTree with three different sizes of graph (16, 32, 64) under three different setups. \textbf{{\textcolor[RGB]{0, 191, 255}{Vanilla}}} means the model directly predicts the label. \textbf{{\textcolor[RGB]{50, 205, 50}{COT}}} means the model will generate a chain-of-thought process based on DFS (see~\Cref{alg:istree}) before prediction.   {\textbf{\textcolor[RGB]{255, 99, 71}{Retrieval}}} means the model will generate the chain of search queries and reasoning before prediction (see~\Cref{alg:istreeretrieval}).}
\label{fig:performance_hybrid}
\end{figure}

First, we validate our theoretical findings on the IsTree task introduced in~\Cref{sec:istree}.
We generate a synthetic dataset for this task, where each data point is a tokenized graph concatenated with an answer of $\YES$ or $\NO$, potentially with a CoT inserted in between.
We then train RNNs and Transformers as language models on this dataset autoregressively.

\textbf{Experiment Details.} To generate the graph, we follow the procedure described in the proof of~\Cref{thm:rnn_trans_istree} (see~\Cref{fig:istree}). The CoT data is generated using~\Cref{alg:istree} and the retrieval data is generated using~\Cref{alg:istreeretrieval}. For the CoT model, we decode the reasoning path during inference time until we reach the first $\YES$ or $\NO$ up to a max token limit greater than the length of all ground truth CoT. For the data points that the model fails to give a prediction, we assume the model gets it correct with 0.5 probability. For the retrieval task, we omit the explicit format of the regular expression and only ask the model to generate the vertices and special tokens in the regular expression to shorten the length of the input sequence. The reported accuracy is calculated over a validation set of 5000 samples using the last iteration of the model. We defer the experiment details to~\Cref{sec:moreexp}.
The results are shown in~\Cref{fig:performance_comparison,fig:performance_hybrid}.

\textbf{Results with CoT.} 
Without CoT, both the Transformers and RNNs model cannot learn to solve the IsTree problem. CoT improves the performance of both Transformers and RNNs but the RNNs' performance degrades sharply as the graph size increases and the Transformers consistently outperforms RNNs in this case. This is consistent with our theory that CoT can improve the expressiveness of the RNN models but the expressiveness is still not enough to solve IsTree (see~\Cref{thm:rnncot,thm:rnn_trans_istree}).

\textbf{Results with In-Context RAG.} In-Context RAG allows all the models to reach near-perfect accuracy. This is consistent with our theory that retrieval augmentation via regular expression can improve the expressiveness of the RNN models to solve algorithmic tasks (see~\Cref{thm:rnn_turing,thm:rag_istree}).
The hybrid model (a Mamba model with one additional Transformer layer on the top)
shows performance on par with the Transformer, which is consistent with our theory (see~\Cref{thm:hybrid_istree,thm:hybrid_turing}).
\vspace{-0.1in}
\subsection{Validation on Hotpot-QA}
\vspace{-0.1in}
We further conduct experiments on open-source LLMs based on Transformer, Mambda, and a hybrid architecture to show that for tasks that require stronger in-context retrieval capability, RNNs suffer more performance degradation compared to Transformers.

\textbf{Experiment Details.} We use Phi-1.5 1.3B~\citep{li2023textbooksneediiphi15} as our Transformer model.
We further use two Mamba and Transformer-Mamba hybrid models distilled from Phi-1.5  with approximately the same parameters' sizes \citep{bick2024transformersssmsdistillingquadratic} as our RNN and hybrid models.
To test our theory, we use the Hotpot-QA \citep{yang2018hotpotqadatasetdiverseexplainable} dataset. In the validation set of Hotpot-QA, each question is accompanied by a set of different related paragraphs from Wikipedia. $2$ of these paragraphs contain useful information to answer this question. The model needs to retrieve the correct paragraphs and reason based on these paragraphs to get the answer to the question. 

\textbf{HotpotQA with Controlled Retrieval Difficulty.} We consider the following version of Hotpot-QA with enhanced retrieval difficulty: We choose $n$ paragraphs containing the correct paragraphs and order them randomly before the question. The model needs to answer the question based on the given paragraphs after CoT reasoning. This design allows us to use the hyperparameter $n$ to control the difficulty of in-context retrieval in this task, with larger $n$ corresponding to a higher difficulty.

\textbf{Evaluation.} We test our models under a 4-shot setting with Chain-of-Thought. The model we tested has varying performance even if $n = 2$. This is mostly due to their different capabilities to follow instructions and in-context examples. To mitigate this effect and highlight the impact of retrieval, we only test on a subset of $350$ samples of the validation set where all the models can answer correctly given the correct paragraphs. This ensures perfect accuracy when $n = 2$ in both settings.  
We vary $n$ in $\{2,4,6,8\}$ and the result is shown in~\Cref{fig:real}. 

\textbf{RNNs' performance drops sharply with increased retrieval difficulty.} While all the model's performance drops with increased retrieval difficulty (\Cref{fig:real}, left), the RNN model has the largest drops. The hybrid model with only $4$ attention layers performs significantly better than the RNN model. This validates our theory that RNN architectures' limited in-context retrieval capabilities will impact their performance in reasoning and hybrid architecture is a potential solution to this limitation.

\textbf{Deconfouding the effect of context length.} The context length also increases with the number of provided paragraphs and can be a potential confounder. We then experiment with a controlled group: after choosing the same set of $n$ paragraphs, we always order the correct paragraph at the end, which significantly simplifies the in-context retrieval process. In this case, all the models have stable performance when the number of paragraphs increases.

\begin{figure}[t]
\vspace{-0.3in}
    \centering
    \begin{subfigure}
        \centering
        \includegraphics[width=0.45\textwidth]{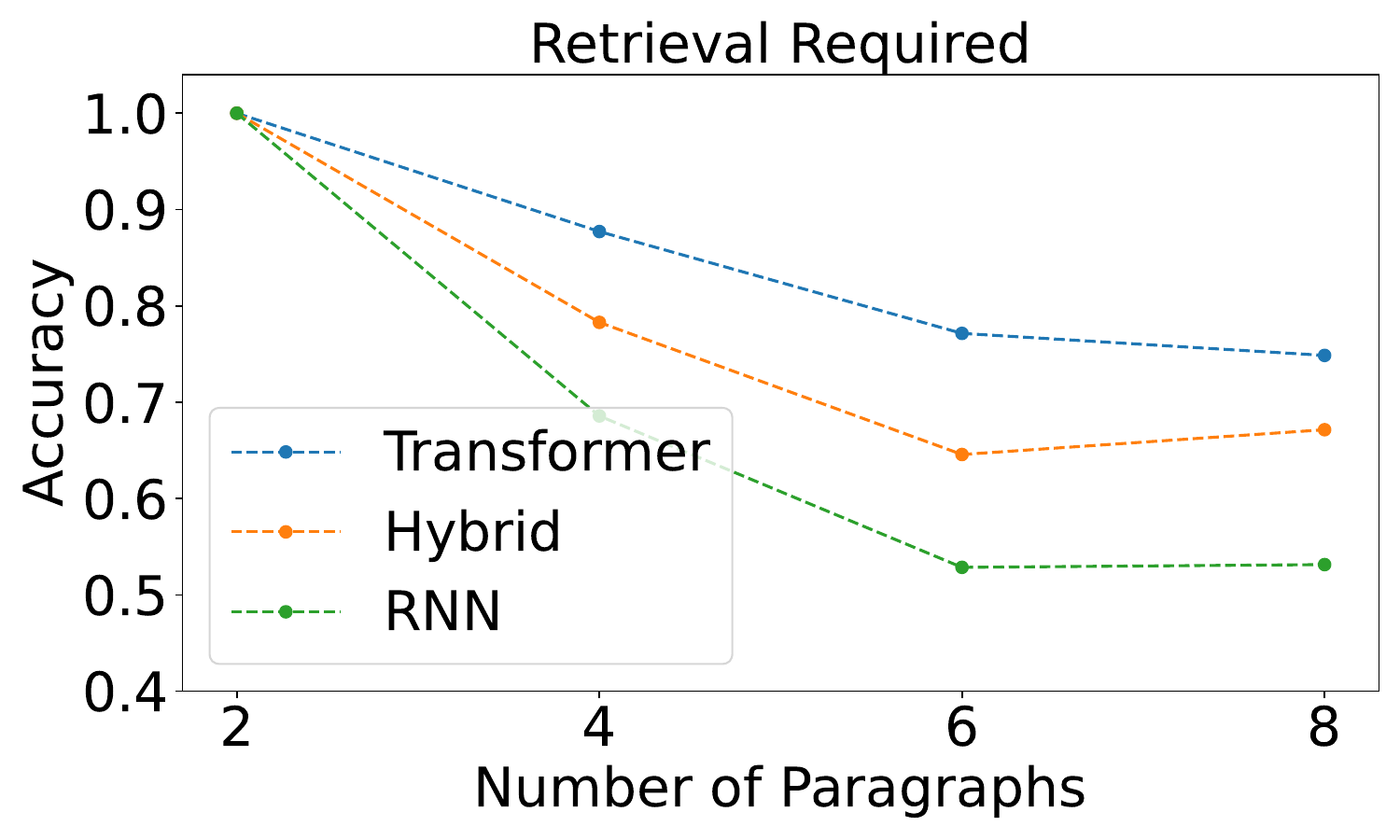}
        \label{fig:reqret}
    \end{subfigure}
    \hfill
    \begin{subfigure}
        \centering
        \includegraphics[width=0.45\textwidth]{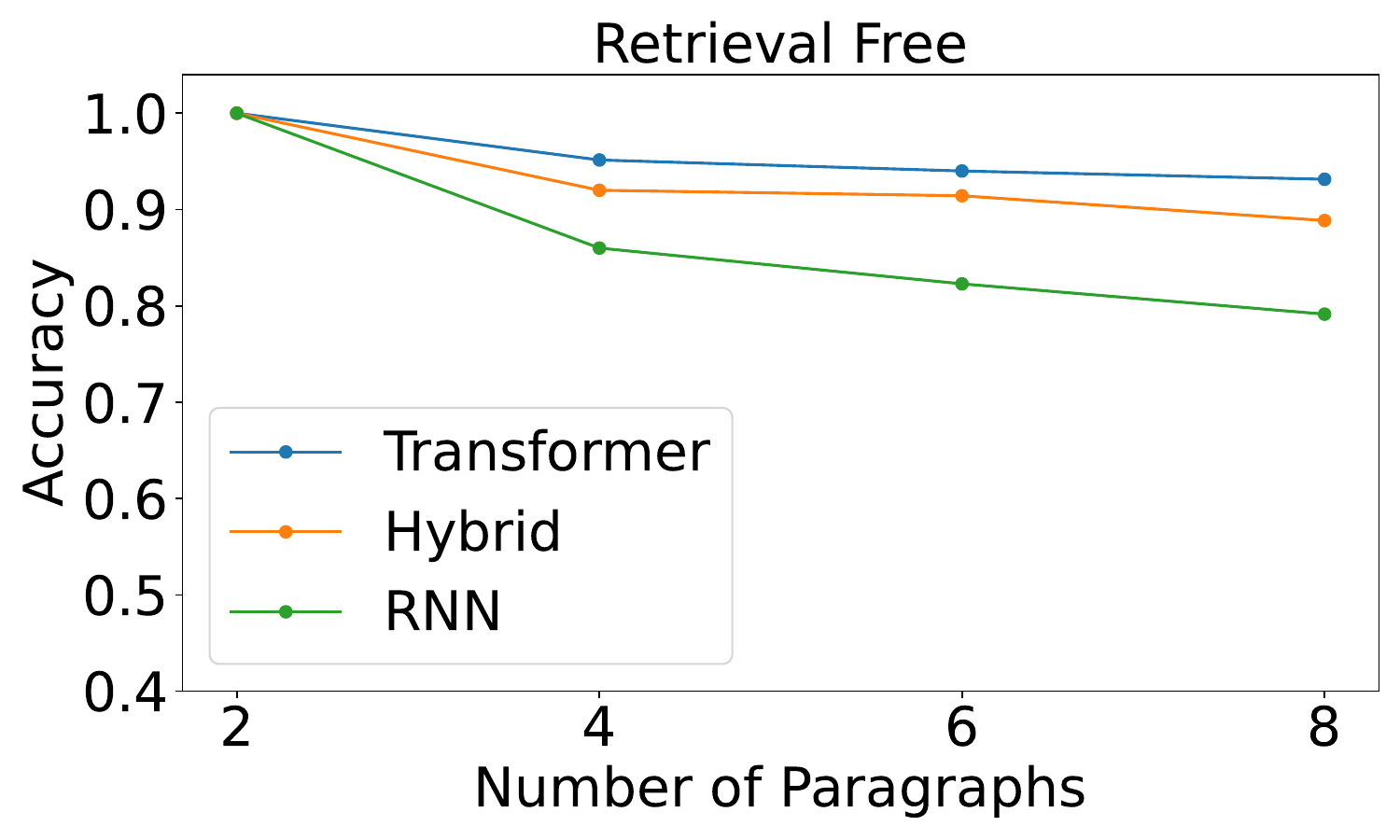}
        \label{fig:retfree}
    \end{subfigure}
    \caption{\textbf{Evaluation of Multi-Document Reasoning on Hotpot-QA.} $n$ paragraphs are given to the model before the question and $2$ among which contains required information to answer the question.
    (Left). When provided paragraphs are given in a random order, the task requires stronger in-context retrieval capabilities when $n$ increases. RNNs' performance drop more sharply, as predicted by theory.
    (Right). When the required paragraphs are always at the end of the provided paragraphs, the retrieval difficulty significantly decreases and all the models have stable performance with respect to $n$.}
    \label{fig:real}
\end{figure}

\section{Conclusion and Discussion}

This paper studies the representation gap between RNNs and Transformers on algorithmic problems.
It is proved that CoT can improve RNNs but is insufficient to close the gap with Transformers:
the inability of RNNs to perform in-context retrieval is a key bottleneck.
To address this, we show that adopting In-Context RAG or appending a single Transformer layer to RNNs can enhance their in-context retrieval capabilities and close the representation gap with Transformers.

One limitation of this work is that the solution of In-Context RAG through regular expression is for the purpose of understanding the bottleneck of the representation power of RNNs, and may not be a practical method beyond $\IsTree$ since there is no existing training data for this type of RAG.
Effectively enabling or eliciting LLMs' capability to perform in-context RAG or other types of RAG is an interesting direction for future work.
Second, appending a single Transformer layer to RNNs is a minimal example of making architectural changes to RNNs to improve their representation power while marginally increasing the memory cost. It is left unexplored what other architectural changes can pose a similar effect or enjoy a better trade-off between representation power and memory efficiency.
Finally, we only study the aspect of representation power, and do not analyze the training dynamics and generalization of RNNs and Transformers. We believe this is the most challenging but important direction for future work.

\newpage
\bibliography{all,our}
\bibliographystyle{iclr2024_conference}
\newpage
\appendix
\tableofcontents
\newpage
\section{Additional Definitions} \label{sec:def}
We will now define some definitions used in the proofs.

\subsection{Reasoning Tasks on Graphs.}

When resaoning on graphs, without otherwise specified, we will use $\verticenumber$ as the number of vertices and $\edgenumber$ as the number of edges. Without loss of generality, we will assume the vertices are labeled by $[n]$. 

We will focus on decision problems on graphs, which are defined as follows:
\begin{definition}[Decision Problem on Graphs]
    A decision problem on graphs is a function $f: \graphset \to \{\YES, \NO\}$, where $\graphset$ is the set of all possible graphs.
\end{definition}

We will use the following decision problem as our main example:
\begin{definition}[IsTree]
    $\IsTree(\graph) = \YES$ if $\graph$ is a tree, and $\IsTree(\graph) = \NO$ otherwise.
\end{definition}

One can view IsTree as a minimal example of reasoning tasks. One of the classical solutions to IsTree is running Depth First Search and this algorithm takes $O(n)$ time.

\subsection{More on Numeric Precisions.}
\label{sec:precision}
We will use $\ROUND(x, \precision)$ to denote the rounding function that rounds $x$ to the nearest number in $\R_\precision$. We will assume $\precision$ is an odd number without loss of generality.
\begin{align}
    \R_\precision = \left\{(2b_p - 1)\left(\sum_{i = 1}^{p - 1} b_i 2^{(p - 1)/2 - i}\right) : \forall i \in [p], b_i \in \{0,1\} \right\}.
\end{align}

For calculation over $\R_\precision$, we will assume the calculation is exact and the result is rounded to $\R_\precision$ at the end, that is, for operator $\oplus$, we will have 
\begin{align*}
    &\ROUND(x, \precision) \oplus_{\precision} \ROUND(y, \precision) \\=& \ROUND\left(\ROUND(x, \precision) \oplus \ROUND(y, \precision), \precision) \right).
\end{align*}

We will additionally define $\Integer$ as the set of all integers that can be represented by $\precision$-bit floating point numbers. We will define $1/[m]$ as the set of unit fractional $\{\frac{1}{i}\}_{i \in [m]}$. Further, we will define $\ROUND(1/[m], \precision)$ as the rounding of $1/[m]$ to $\R_{\precision}$. We will additionally define for any real number $x \in \{0,1\}$, $\nextnum (x) = \frac{1}{m + 1}$ where $m = \argmin_{k \in \Z} |x - \frac{1}{k}|$.

\subsection{Models} \label{sec:def-models}

\paragraph{Tokenization.}
To tokenize a string, we will tokenize all the words separated by the space character into a sequence of tokens. 
To tokenize a graph $\graph$, we will order its edges $\Edge = \{(u_i, v_i) \mid u_i < v_i\}$ randomly and tokenize it into the following string:
\begin{align}
    \Tokenize(\graph) = \{\StartSentence, u_1, \sim, v_1, \ldots, u_\edgenumber, \sim, v_\edgenumber\}.
\end{align}
We hereby assume there are constant more special tokens that are not the same as any number token, which are listed below:
\begin{itemize}
    \item $\StartSentence$: the first special token, indicating the start of a sentence.
    \item $\sim$: the second special token, indicating an edge.
    \item $\YES$: the third special token, indicating the answer is yes.
    \item $\NO$: the fourth special token, indicating the answer is no.
    \item $\StartSearch$: the fifth special token, indicating the start of a search query.
    \item $\EndSearch$: the sixth special token, indicating the end of a search query.
\end{itemize}

We will denote the total number of special tokens as $\numspecial$ and the total vocabulary size as $\VocabSize = n + \numspecial$. We will further define the detokenization function $\Detokenize$, 
\begin{align*}
    \Detokenize(\TokenSequence) = ``\TokenSequence_1\ \TokenSequence_2\ \ldots\ \TokenSequence_{\position}".
\end{align*}
Here each $\TokenSequence_i$ is either a number or a special token, which we will treat as a word.

\paragraph{Embedding Functions.} 

We will use $\dimension$ to denote the dimension of the embedding and $\onehot_i$ to denote the $i$-th coordinate vector in $\R^\dimension$.

We will separate the embedding function into two parts: the word embedding and the position embedding.
For the word embedding, we will use $\VEmbed{i}  \in \R^{\dimension}$ to represent the embedding of the vertice $i$ in the tokenization of the graph. For the $k$-th special token, we will use $\onehot_{2 + k}$ to represent its embedding. For example, the embedding of $\sim$ is $\onehot_{2}$.  We will denote the word embedding matrix as $\WordEmbedding \in \R^{\VocabSize \times \dimension}.$
    
For the position embedding, we will use $l\onehot_{\dimension}$ to represent the position embedding of the $l$-th token in the tokenization of the graph, which is a hyperparameter. The final embedding of any token sequence is the sum of the word embedding and the position embedding. We will use $\Embedding$ to denote the embedding function.

This embedding function will be fixed and shared across all models we consider in this paper and will not be learned during training, hence we will not consider it as part of the model parameters.

\paragraph{Language Modeling.}
In this work, we will consider the difference between Transformer and Recurrent Neural Networks on reasoning tasks, which is a special case of language modeling. We will define language modeling as follows:
\begin{definition}[Language Model]
    A language model is a function $M: \cup_{\position = 1}^{\maxposition} \VocabSize^\position \to \Prb_{\VocabSize}$, where $\VocabSize$ is the vocabulary size, $\position$ is the sequence length, and $\Prb_{\VocabSize}$ is the probability simplex over $\VocabSize$. 
\end{definition}

\subsection{Language Models for Reasoning.} 

\paragraph{Chain of Thought.} We will now define how we use language models to solve reasoning tasks utilizing the following technique called chain of thought.

\begin{definition}[Chain of Thought]
\label{def:cot}
Given a language model, $\model$ with vocabulary $\Vocab$ and  the tokenized input sequence $\InputTokenSequence \in \VocabSize^{\inputlength}$, chain of thought (CoT) generates the following sequence of tokenized sequence:
\begin{align}
    \TokenSequence_0 &= \InputTokenSequence, \\
    \NextToken_i &= \argmax_{j \in \Vocab} \model(\TokenSequence_{i})[j],
    \\
    \TokenSequence_{i + 1} &= \TokenSequence_{i} \oplus \NextToken_i, \forall i \ge 0.
\end{align}
The process terminates at $\TokenSequence_i$ when $\argmax_{j \in \Vocab} \model(\TokenSequence_{i})[j]$ is $\YES$ or $\NO$. The language model can solve the reasoning task within $\outputlength$ steps of CoT if the process terminates at $\TokenSequence_i$ where $i \le \outputlength$ and the final output is correct. We will call the special case where the language model solves the reasoning task within $0$ steps of CoT as solving the reasoning task without CoT.
\end{definition}

\paragraph{Retrieval Augmentation.} We will show in this paper that retrieval augmentation is a necessary technique to solve reasoning tasks for recurrent neural networks. We will define retrieval augmentation as follows:

\begin{definition}[Retrieval Augmented Generation]
\label{def:rag}
Retrieval Augmented Generation means giving the language model the capability to retrieve relevant information to assist generation. We formally described the process here. Given a language model $\model$ with vocabulary $\Vocab$ (containing two additional special tokens called $\StartSearch$ and $\EndSearch$) and the tokenized input sequence $\InputTokenSequence \in \VocabSize^{\inputlength}$, retrieval augmented generation generates following sequence of tokenized sequence:
\begin{align*}
    \TokenSequence_0 &= \InputTokenSequence, \\
    \NextToken_i &= \argmax_{j \in \Vocab} \model(\TokenSequence_{i})[j], \\
    \TokenSequence_{i + 1} &= \begin{cases} &\TokenSequence_{i} \oplus \NextToken_i, \NextToken_i \neq \EndSearch \\
    &\TokenSequence_{i} \oplus \NextToken_i \oplus \RETRIEVE\left(\TokenSequence_i\right) , \text{otherwise.}
    \end{cases}
\end{align*}    

Here $\RETRIEVE$ looks for the last occurrence of $\StartSearch$ at position $\StartPos$ and $\EndSearch$ in $\TokenSequence$ at positon $\EndPos$ and treat $\Detokenize(\TokenSequence_{\StartPos: \EndPos})$ as a regular expression. The algorithm then uses the regular expression on $\Detokenize(\TokenSequence_{1:\StartPos - 1})$. If the regular expression ever matches, the $\RETRIEVE$ will return the match. If the regular expression never matches, $\RETRIEVE$ will return a special token $\Failed$. 

Similar to~\Cref{def:cot}, we can define the notion of solving the reasoning task within $\outputlength$ steps of retrieval augmented generation and solving the reasoning task without retrieval augmented generation.
\end{definition}

We will note that assuming $|V| = O(n)$ and every search query and the result is of length $O(1)$, the regular expression evaluation can typically be evaluated in $O(n)$ time.

\section{Omitted Experiment Details}
\label{sec:moreexp}
We train three different architectures: (1) LLaMA architecture~\citep{touvron2023llama} representing Transformers, (2) Mamba architecture~\citep{gu2023mamba} representing RNNs, and (3) Mamba with one additional layer of LLaMA block representing hybrid architectures. Following our theory, we freeze and weight-tie the prediction head and word embedding in all the models. For ease of training, we use a different embedding function mapping $i-$th token to $[\sin(\frac{i}{10000^{j/d}}), \cos(\frac{i}{10000^{j/d}})]_{j \in [d/2]}$ with $N$ being the number of different tokens and use standard RoPE~\citep{su2024roformer} as position embedding.
We train every model with at least 1M samples to guarantee convergence using Adam with a cosine learning rate. If the model doesn't converge, we retrain using 5M samples. After a grid search over learning rates, we train all the Transformer models with learning rates 1e-3 and the rest of the models with learning rates 3e-4.  We run all the experiments on a server with 8 A100s and the estimated time to reproduce the results is within 2 days.
\section{Omitted Proof}
\subsection{Building Blocks of FFNs Construction}

We will first show some basic operations that multiple layers of feedforward neural networks with ReGLU activation can perform that will be used in the following proofs.

\begin{lemma}[Multiplication]
\label{lem:multiplication}
Given two dimensions $i_1, i_2$, there exists a parameter configuration of a 1-layer feedforward neural network with ReGLU activation that for any input $x \in \R^{\dimension}$ and constant width, computes the product of $x_{i_1}$ and $x_{i_2}$.
\begin{align*}
    \FFN(x) = [x_{i_1} \times x_{i_2}, 0, \ldots, 0]^\top.
\end{align*}
\end{lemma}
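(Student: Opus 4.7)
The plan is to exploit the fact that ReGLU already contains a gated multiplicative interaction: its activation on a hidden coordinate has the form $\ReLU(a^\top x + b_1)\cdot(c^\top x + b_2)$, which multiplies an affine function of $x$ by a ReLU-clipped affine function of $x$. The challenge is only that one factor is clipped by ReLU, so we cannot directly realize the bilinear form $x_{i_1}x_{i_2}$ with a single hidden unit when $x_{i_1}$ may be negative. I will fix this by using two hidden units to handle the two sign branches and taking their signed sum via the output projection.

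Concretely, I would set the hidden width $w=2$ and choose $W_1 \in \mathbb{R}^{2\times d}$, $W_2\in\mathbb{R}^{2\times d}$, biases zero, so that
\begin{align*}
W_1 x &= \begin{bmatrix} x_{i_1} \\ -x_{i_1} \end{bmatrix}, \qquad W_2 x = \begin{bmatrix} x_{i_2} \\ x_{i_2} \end{bmatrix}.
\end{align*}
The ReGLU hidden activations are then $h_1 = \ReLU(x_{i_1})\cdot x_{i_2}$ and $h_2 = \ReLU(-x_{i_1})\cdot x_{i_2}$. For the output projection $W_3 \in \mathbb{R}^{d\times 2}$ with zero bias, I take the first row to be $[1,\,-1]$ and all other rows zero, so that the first output coordinate equals $h_1-h_2$ and every other coordinate equals $0$.

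Correctness then follows from a one-line case analysis on the sign of $x_{i_1}$: if $x_{i_1}\geq 0$, then $h_1=x_{i_1}x_{i_2}$ and $h_2=0$, so $h_1-h_2 = x_{i_1}x_{i_2}$; if $x_{i_1}<0$, then $h_1=0$ and $h_2=-x_{i_1}x_{i_2}$, so again $h_1-h_2 = x_{i_1}x_{i_2}$. Hence $\FFN(x) = [x_{i_1}x_{i_2},0,\dots,0]^\top$, using constant width $w=2$ as claimed.

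There is essentially no hard step here; the only subtlety is remembering that ReGLU clips one side with ReLU and therefore needs two hidden units to cover both signs of $x_{i_1}$. Everything else is a direct weight assignment, so the lemma is a short explicit construction rather than an argument requiring careful estimates.
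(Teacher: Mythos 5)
Your construction is the same as the paper's (width $2$, $W_1 x = [x_{i_1}, -x_{i_1}]^\top$, $W_2 x = [x_{i_2}, x_{i_2}]^\top$, zero biases), but with a corrected sign in the output projection. The paper's displayed proof takes $W_3 h = [h_1 + h_2, 0, \ldots, 0]^\top$; under the paper's own ReGLU definition ($\ReLU(W_1 x + b_1)\otimes (W_2 x + b_2)$, ReLU on the first factor only) this gives $h_1 + h_2 = (\ReLU(x_{i_1}) + \ReLU(-x_{i_1}))\,x_{i_2} = |x_{i_1}|\,x_{i_2}$, which is wrong when $x_{i_1}<0$. Your choice $h_1 - h_2 = (\ReLU(x_{i_1}) - \ReLU(-x_{i_1}))\,x_{i_2} = x_{i_1}x_{i_2}$ is the correct one, and your case analysis on the sign of $x_{i_1}$ confirms it. (The paper's final line also writes $\ReLU(W_1 x)\otimes\ReLU(W_2 x)$, applying ReLU to both factors in contradiction with its own definition of ReGLU; your version is consistent with the stated definition.) So your proof is the same approach, repaired.
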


\begin{proof}
We can construct the following feedforward neural network with ReGLU activation:
\begin{align*}
    &W_1 x = \begin{bmatrix}
        x_{i_1} & -x_{i_1} & 0 \ldots & 0
    \end{bmatrix}^\top, \\
    &W_2 x = \begin{bmatrix}
        x_{i_2} & x_{i_2} & 0 \ldots & 0
    \end{bmatrix}^\top, \\
    &W_3 h = \begin{bmatrix}
        h_1 + h_2  & 0 & 0 & 0 &\ldots & 0\end{bmatrix}^\top, \\
        &b_1 = b_2 = b_3 = 0.\\
        &\FFN(x) = W_3 ( \ReLU(W_1 x)  \otimes \ReLU(W_2 x))= [x_{i_1} \times x_{i_2}, 0, \ldots, 0]^\top.
\end{align*}
\end{proof}

\begin{lemma}[Linear Operations]
\label{lem:linear_operations}
Given a linear transformation $W \in \R^{\dimension \times \dimension}$, there exists a parameter configuration of a 1-layer feedforward neural network with ReGLU activation and width $\width = \dimension$ that for any input $x \in \R^{\dimension}$, computes $Wx$.
\end{lemma}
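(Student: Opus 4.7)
\medskip

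The plan is to realize an arbitrary linear map $x \mapsto Wx$ using ReGLU by letting one branch carry the linear content and using the other branch as a constant ``gating'' factor equal to $1$, together with a bias shift to keep the argument of $\ReLU$ in its linear regime.

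Concretely, I would pick a constant $M > 0$ large enough that $(Wx)_i + M \ge 0$ for every coordinate $i$ and every $x$ in the (bounded, finite-precision) domain under consideration; since $p = O(\log n)$ and the entries involved in the constructions of the paper are polynomially bounded, such an $M$ can be chosen to fit in $p$ bits. Then I set the parameters of the ReGLU block as
\begin{align*}
W_1 &= W, & b_1 &= M\,\mathbf{1},\\
W_2 &= 0, & b_2 &= \mathbf{1},\\
W_3 &= I, & b_3 &= -M\,\mathbf{1},
\end{align*}
so that $W_1 x + b_1 = Wx + M\mathbf{1} \ge 0$ coordinate-wise and hence $\ReLU(W_1 x + b_1) = Wx + M\mathbf{1}$. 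The gating branch gives $W_2 x + b_2 = \mathbf{1}$, so the Hadamard product is exactly $Wx + M\mathbf{1}$; applying $W_3$ and $b_3$ cancels the shift and outputs $Wx$. The hidden width used is $\width = \dimension$, as required.

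The main thing to check — and the only mild obstacle — is the choice of $M$. We need $M$ simultaneously (i) large enough that $W_1 x + b_1 \ge 0$ on the entire working domain so that $\ReLU$ acts as the identity, and (ii) small enough to be representable in $p = O(\log n)$ bits without overflow. Because all activations elsewhere in the paper's constructions are polynomially bounded in $n$ and $W$ is a fixed matrix with bounded entries, both requirements can be met simultaneously by any $M$ of size $\mathrm{poly}(n)$. The remaining verification is a direct computation and uses nothing beyond the definition of ReGLU, so no further ingredients are needed.
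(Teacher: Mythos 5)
Your construction is correct, but it takes a genuinely different — and strictly weaker — route than the paper's. You place the linear map $W$ on the ReLU branch ($W_1 = W$) and then shift by a large constant $M$ so that the ReLU is always in its linear regime, compensating afterward with $b_3 = -M\mathbf{1}$. This works, but it only holds on a bounded domain and you have to argue that a suitable $M$ exists and is representable. The paper instead exploits the asymmetry of ReGLU: only the first branch passes through $\ReLU$, so it sets $W_1 = 0$, $b_1 = \mathbf{1}$ (making the gated branch the constant vector $\mathbf{1}$) and puts the linear map on the \emph{ungated} branch, $W_2 = W$, $b_2 = 0$, with $W_3 = I$, $b_3 = 0$. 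Then $\ReLU(\mathbf{1}) \otimes Wx = Wx$ exactly, for every $x \in \R^{\dimension}$, with no offset, no domain restriction, and no choice of $M$ to justify. Both are valid in the finite-precision setting the paper uses, but the paper's version is simpler and unconditional; you essentially paid an extra hypothesis (boundedness of $Wx$) for nothing by putting the linear part on the wrong branch.
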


\begin{proof}
\begin{align*}
    &b_1 = 1^\width ,b_2 = 0 , b_3 = 0, \\ 
    &W_1 = 0,W_2 = W, W_3 = 
        I_{\dimension \times \dimension} 
\end{align*}
\end{proof}

\begin{lemma}[Indicator]
\label{lem:indicator}
Given a constant integer $B \le \dimension$ and a dimension $i$, there exists a parameter configuration of a 1-layer feedforward neural network with ReGLU activation and width $4$ that for any input $x \in \R^{\dimension}$, computes the indicator function of $x_{i} = B$ when $x_i$ is an integer.
\end{lemma}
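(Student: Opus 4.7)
The plan is to realize the indicator as a standard triangle function built from three ReLU pieces, and then show that a single ReGLU layer of width $4$ can emulate that linear combination by making the multiplicative gate trivial.

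Concretely, I would start by observing that for any integer $y \in \Z$, the quantity
\begin{align*}
    \phi(y) := \ReLU(y + 1) - 2\,\ReLU(y) + \ReLU(y - 1)
\end{align*}
equals $1$ when $y = 0$ and $0$ at every other integer (checking the four cases $y \leq -1$, $y = 0$, $y = 1$, $y \geq 2$). Applied with $y = x_i - B$, this gives $\phi(x_i - B) = \mathbf{1}[x_i = B]$ whenever $x_i \in \Z$, which is exactly the target function.

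Next, I would encode this inside the ReGLU template $\sigma(x) = \ReLU(W_1 x + b_1) \otimes (W_2 x + b_2)$ by neutralizing the multiplicative branch: set $W_2 = 0$ and $b_2 = \mathbf{1}_w$, so that $\sigma(x) = \ReLU(W_1 x + b_1)$ coordinate-wise. Using three of the four width-$4$ neurons, let $W_1$ pick out $x_i$ in each row and let $b_1 = (1 - B,\, -B,\, -1 - B,\, 0)^\top$, producing the hidden vector $(\ReLU(x_i - B + 1),\, \ReLU(x_i - B),\, \ReLU(x_i - B - 1),\, 0)^\top$. Finally, choose the output matrix $W_3$ so that its first row is $(1, -2, 1, 0)$ and its remaining rows are zero, with $b_3 = 0$. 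By the identity above, the first output coordinate is exactly $\mathbf{1}[x_i = B]$, and the rest are zero, matching the convention of the preceding building-block lemmas.

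There is no substantive obstacle here: the only mildly non-obvious step is noticing that ReGLU subsumes plain ReLU once the linear gate is made constant, which is what lets width $4$ suffice. I would close by noting that $B$ being a constant integer bounded by $d$ ensures all weights and biases in the construction are $O(1)$ and representable in the assumed precision, so the configuration is well-defined.
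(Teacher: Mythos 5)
Your proposal is correct, and it takes a cleaner route than the paper's own construction. Both proofs neutralize the ReGLU gate the same way (setting $W_2=0$, $b_2=\mathbf{1}$ so the layer reduces to a plain ReLU network), but the decompositions of the indicator differ. You use the classic width-$3$ hat function $\phi(y)=\ReLU(y+1)-2\ReLU(y)+\ReLU(y-1)$ evaluated at $y=x_i-B$, which vanishes at every nonzero integer and equals $1$ at $y=0$; this is a standard, self-contained identity and leaves one of the four neurons unused. The paper instead uses all four neurons with fractional offsets ($\pm 0.5$, $-0.6$, $+0.4$), splits the indicator into a contribution from $\ReLU(x_i-B\pm 0.5)$ and one from $\ReLU(B-x_i-0.6),\ \ReLU(B-x_i+0.4)$, and relies on a constant output bias $b_3=10$ to recombine the pieces. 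Your version avoids the bias and the ad hoc constants, and is arguably easier to verify; the paper's version uses strictly fractional breakpoints, which would also make the construction robust to small non-integer perturbations of $x_i$, though that extra robustness is not needed for the lemma as stated. Both are valid proofs of the claim.
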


\begin{proof}
\begin{align*}
    &b_2 = 1^\width, b_1 = [-B -0.5, -B + 0.5, B - 0.6, B + 0.4]^\top, b_3 = 10, \\
    &W_2 = 0, W_1x = [x_i, x_i, -x_i, -x_i]^\top, \\
    &\ReLU(W_1 x + b_1) = \begin{bmatrix}
        \ReLU(x_i - B - 0.5) & \ReLU(x_i - B + 0.5) & \ReLU(B - x_i - 0.6) & \ReLU(B - x_i + 0.4)
    \end{bmatrix} \\
    &W_3 = 10 \begin{bmatrix}
        1 & -1 & -1 & 1
    \end{bmatrix}^\top.
\end{align*}
Then,
\begin{align*}
    \FFN(x) &= W_3 \ReLU(W_1 x + b_1) + b_3\\
    &= 10\left(\ReLU(x_i - B - 0.5) - \ReLU(x_i - B + 0.5)\right) + 10\left(\ReLU(B - x_i - 0.6) - \ReLU(B - x_i + 0.4)\right) + 10 \\
    &= \begin{cases}
        10 \times 0 + 10 \times -1 + 10 = 0, & \text{if } x_i < B, \\
        10 \times -0.5 + 10 \times -0.4 + 10 = 1, & \text{if } x_i = B, \\
        10 \times -1 + 10 \times 0 + 10= 0, & \text{if } x_i > B.
    \end{cases}
\end{align*}
\end{proof}

\begin{lemma}[Lookup Table]
\label{lem:lookup_table}
For constant $B$ and $k$ such that $kB \le \dimension$, given a lookup table which key is tuple of $k$ integers bounded by $B$, and value is a scalar in $\R_{\precision}$, there exists a parameter configuration of a $1-$ layer feedforward neural network with ReGLU activation with width $O(B^k)$  that for any input $x \in \R^{\dimension}$, computes the value of the lookup table at the key $x_{i_1}, x_{i_2}, \ldots, x_{i_k}$.
\end{lemma}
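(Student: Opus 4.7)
The plan is to reduce the $k$-dimensional lookup to a one-dimensional one by encoding each key tuple as a single integer, and then to instantiate the indicator construction of~\Cref{lem:indicator} in parallel once for each encoded value.

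Concretely, since each $x_{i_j}$ ranges over a set of integers of size at most $B+1$, the linear form $L(x) := \sum_{j=1}^{k} (B+1)^{j-1} x_{i_j}$ is injective on admissible tuples and takes integer values in a range of size at most $(B+1)^k$. So the table, originally a map from $k$-tuples to $\R_{\precision}$, becomes a map $m \mapsto v_m$ defined on at most $(B+1)^k$ integers (set $v_m = 0$ where the table is undefined).

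For each admissible integer $m$, I would allocate four hidden units corresponding to the ReLU quantities $\ReLU(L(x) - m - 0.5)$, $\ReLU(L(x) - m + 0.5)$, $\ReLU(m - L(x) - 0.6)$, $\ReLU(m - L(x) + 0.4)$. All of these fit in a single ReGLU layer by setting the gate branch to the constant $1$ (i.e.\ $W_2 = 0$, $b_2 = \mathbf{1}$), exactly as in~\Cref{lem:indicator}, which reduces each ReGLU unit to a pure ReLU. Define
\begin{align*}
\phi_m(x) = 10\bigl(\ReLU(L(x)-m-0.5) - \ReLU(L(x)-m+0.5)\bigr) + 10\bigl(\ReLU(m-L(x)-0.6) - \ReLU(m-L(x)+0.4)\bigr);
\end{align*}
a routine case check, identical to the one in~\Cref{lem:indicator}, shows that $\phi_m(x) = -10 + \one{L(x) = m}$ whenever $L(x)$ is an integer. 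Setting the output weights so that the first coordinate of $\FFN(x)$ equals $\sum_m v_m \phi_m(x) + b_3$ with scalar bias $b_3 = 10\sum_m v_m$ then produces $\sum_m v_m \one{L(x) = m} = v_{L(x)}$, which is exactly the desired lookup value; the remaining output coordinates are padded with zero.

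The total width is $4(B+1)^k = O(B^k)$ and only one ReGLU layer is used, as required. The main subtlety is bookkeeping: the $-10$ baseline from each of the $O(B^k)$ indicators accumulates linearly in the output, which is precisely why we need the correcting bias $b_3 = 10\sum_m v_m$. Since $B$ and $k$ are constants and $\precision = \Theta(\log n)$, both the coefficients $(B+1)^{j-1}$ appearing in $L$ and the correcting bias remain representable in $p$-bit precision, so no numerical issue arises. There is no deeper obstacle beyond this, the integer-encoding trick turns the construction into a straightforward parallel application of~\Cref{lem:indicator}.
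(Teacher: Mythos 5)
Your proof is correct and follows essentially the same route as the paper: the paper's proof consists of the single sentence ``calculate $x_{i_1} + B \times x_{i_2} + \cdots + B^{k-1} x_{i_k}$, and then scale $B^k$ indicator functions,'' which is exactly your integer-encoding reduction followed by a parallel application of~\Cref{lem:indicator}. You have simply filled in the details the paper leaves implicit, including folding the encoding $L(x)$ directly into the ReLU arguments so everything stays in one layer, and correctly handling the $-10$ baseline per indicator with the compensating bias $b_3 = 10 \sum_m v_m$; your use of base $B{+}1$ is also a slightly safer reading of ``integers bounded by $B$'' than the paper's base $B$.
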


\begin{proof}
We can calculate $x_{i_1} + B \times x_{i_2} + B^2 \times x_{i_3} + \ldots + B^{k-1} \times x_{i_k}$, and then scale $B^k$ indicator functions to get the value of the lookup table at the key $x_{i_1}, x_{i_2}, \ldots, x_{i_k}$.
\end{proof}

\begin{lemma}[Threshold]
\label{lem:threshold}
Given any threshold $u$ and constant $\epsilon > 0$, there exists a parameter configuration of a 1-layer feedforward neural network with ReGLU activation and width $2$ that for any input $x \in \R^{\dimension}$, computes the indicator function $x_i > u$ on $x_i \in [-\infty, u - \epsilon] \cup [u, \infty]$.
\end{lemma}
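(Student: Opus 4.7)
The plan is to express the desired step function as a difference of two shifted, scaled ReLU units — the classical ``ramp'' construction — and then embed it into the ReGLU format by making the multiplicative (gating) branch identically equal to $1$. Since a ramp is expressible as exactly two ReLU pieces, this gives the claimed width of $2$.

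Concretely, I would set the two pre-activations so that
\begin{align*}
W_1 x + b_1 = \begin{bmatrix} \tfrac{1}{\epsilon}(x_i - u + \epsilon) \\ \tfrac{1}{\epsilon}(x_i - u) \end{bmatrix}, \qquad W_2 = 0, \quad b_2 = \begin{bmatrix} 1 \\ 1 \end{bmatrix},
\end{align*}
which is a legitimate affine map of $x$ (each row of $W_1$ is $\tfrac{1}{\epsilon}\onehot_i^{\top}$). Then the ReGLU activation collapses to ordinary ReLU applied to these two coordinates. Taking output weights $W_3 = \onehot_1 [\,1,\;-1\,]$ and $b_3 = 0$ (so the result is written into the first coordinate, matching the convention of the earlier lemmas) gives a scalar
\begin{align*}
f(x) \;=\; \ReLU\!\left(\tfrac{x_i - u + \epsilon}{\epsilon}\right) \;-\; \ReLU\!\left(\tfrac{x_i - u}{\epsilon}\right).
\end{align*}

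The verification is a routine case split on $x_i$: for $x_i \le u - \epsilon$ both arguments are non-positive, so $f(x) = 0$; for $x_i \ge u$ both arguments are non-negative, so $f(x) = \bigl(1 + \tfrac{x_i - u}{\epsilon}\bigr) - \tfrac{x_i - u}{\epsilon} = 1$. This matches $\mathbf{1}[x_i > u]$ on the prescribed set $(-\infty, u - \epsilon] \cup [u, \infty)$, with the value at $x_i = u$ being $1$ (consistent with ``$x_i \ge u$''). There is no substantive obstacle here; the only point to double-check is that the construction truly uses ReGLU — which it does, since setting $W_2 = 0$ and $b_2 = \mathbf{1}$ turns the elementwise product into an identity multiplier and recovers ReLU, well within the ReGLU family defined in \Cref{def:transformer_block}.
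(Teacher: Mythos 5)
Your proof is correct and uses essentially the same construction as the paper: write the threshold as the difference of two shifted ReLU ramps separated by $\epsilon$ and make the multiplicative gate trivial by setting $W_2 = 0$, $b_2 = \mathbf{1}$. The only cosmetic difference is where the $1/\epsilon$ scaling lives (you place it in $W_1$, the paper places a $2/\epsilon$ factor in $W_3$ with shifts at $\epsilon$ and $0.5\epsilon$); both give $f(x)=0$ on $x_i \le u - \epsilon$ and $f(x)=1$ on $x_i \ge u$, and your version is in fact a bit cleaner since the paper's written-out $W_3$ contains a stray width-$4$ row-vector typo.
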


\begin{proof}
    \begin{align*}
        &b_2 = 1^\width, b_1 = [- u +  \epsilon, -u + 0.5\epsilon]^\top, b_3 = 0, \\
        &W_2 = 0, W_1x = [x_i, x_i]^\top, \\
        &\ReLU(W_1 x + b_1) = \begin{bmatrix}
            \ReLU(x_i - u + \epsilon) & \ReLU(x_i - u + 0.5\epsilon) 
        \end{bmatrix} \\
        &W_3 = \frac{2}{\epsilon} \begin{bmatrix}
            1 & -1 & -1 & 1
        \end{bmatrix}^\top.
    \end{align*}
    Then,
    \begin{align*}
        \FFN(x) &= W_3 \ReLU(W_1 x + b_1) + b_3\\
        &= 2/\epsilon \left(\ReLU(x_i - u - \epsilon ) - \ReLU(x_i - u + 0.5)\right)\\
        &= \begin{cases}
           0 & \text{if } x_i < u - \epsilon, \\
1, & \text{if } x_i > u, \\
        \end{cases}
    \end{align*}
\end{proof}

\begin{lemma}[Interval]
\label{lem:interval}
Given any constant $u, v$ and $\epsilon > 0$, there exists a parameter configuration of a 1-layer feedforward neural network with ReGLU activation and width $4$ that for any input $x \in \R^{\dimension}$, computes the indicator function $x_i \in [u, v]$ on $x_i \in [-\infty, u - \epsilon] \cup [u,v] \cup [v, \infty]$.
\end{lemma}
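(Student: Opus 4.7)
The plan is to realize the interval indicator as a difference of two threshold functions, each built exactly as in Lemma~\ref{lem:threshold}, and then to concatenate them so that the combined width is $4$.

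First, observe that on the stated valid region the target function can be rewritten as
\begin{align*}
\mathbf{1}[x_i \in [u, v]] = \mathbf{1}[x_i \ge u] \;-\; \mathbf{1}[x_i \ge v + \epsilon],
\end{align*}
where each threshold is well-defined on the inputs we care about (the first is valid for $x_i \in (-\infty, u-\epsilon] \cup [u, \infty)$ and the second is analogously valid after translating the Lemma~\ref{lem:threshold} construction by $v + \epsilon$ in place of $u$). Note that we read the lemma statement as implicitly assuming an $\epsilon$-gap at $v$ as well, in analogy with the gap at $u$; otherwise the transition at $v$ would be discontinuous and unattainable by a ReLU network.

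Second, I would instantiate two copies of the Threshold construction in parallel in the single hidden layer. Concretely, set
\begin{align*}
W_1 x &= [\,x_i,\; x_i,\; x_i,\; x_i\,]^\top, \\
b_1 &= [\,-u+\epsilon,\; -u+0.5\epsilon,\; -v-0.5\epsilon,\; -v-\epsilon\,]^\top, \\
W_2 &= 0, \quad b_2 = \mathbf{1}^{4},
\end{align*}
so that the ReGLU gate multiplies each ReLU coordinate by $1$ and leaves it unchanged, and set the output weight
\begin{align*}
W_3 = \frac{2}{\epsilon}\,[\,1,\; -1,\; -1,\; 1\,],\qquad b_3 = 0.
\end{align*}
The first two coordinates reproduce the Threshold lemma's output for $\mathbf{1}[x_i \ge u]$, and by symmetry the last two reproduce $-\mathbf{1}[x_i \ge v + \epsilon]$; summing them yields the desired interval indicator on the valid region.

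Third, I would verify the three cases ($x_i \le u - \epsilon$, $x_i \in [u,v]$, $x_i \ge v+\epsilon$) by direct substitution, exactly as in Lemma~\ref{lem:threshold}; the middle case yields $1 + 0 = 1$ and the outer cases yield $0 + 0$ and $1 - 1 = 0$ respectively. The construction uses width exactly $4$ as required. The main ``obstacle'' is purely bookkeeping of signs and shifts; there is no conceptual difficulty beyond stacking the two Threshold gadgets from the preceding lemma.
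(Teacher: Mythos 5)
Your proof is correct and follows essentially the same route as the paper: write the interval indicator as a difference of two Threshold gadgets from Lemma~\ref{lem:threshold} and stack them side by side to get width~$4$, with you additionally supplying the explicit weight and bias values that the paper's one-sentence proof leaves implicit. You are also right to read the lemma's valid region with an $\epsilon$-gap at $v$ — as written, $(-\infty, u-\epsilon] \cup [u,v] \cup [v,\infty)$ collapses to $(-\infty, u-\epsilon] \cup [u,\infty)$ and would force a discontinuity at $v$, whereas the paper's own construction via $\one{x_i > v + \epsilon}$ implicitly assumes the transition gap $(v, v+\epsilon)$ exactly as you do.
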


\begin{proof}
The interval function here can be written as the difference of two threshold functions. We can use the same construction as in \cref{lem:threshold} to approximate the indicator function $x_i > u$ and $x_i > v + \epsilon$ and then take the difference.
\end{proof}
\subsection{Building Blocks of Transformers Construction}
\label{app:lem_transformer}

We will show in this section some construction for basic functionality using Transformer Blocks. This construction will be used in the following sections to prove the main results. 

We will always use $\hiddenstate \in \R_{\precision}^{\dimension \times \position}$ as the input to the Transformer Block, where $\dimension$ is the dimension of the input, and $\position$ is the length of the sequence.
We will first outline all the building functionality and then show how to implement them.

\begin{definition}[Copying Function]
\label{def:copy}
For integer $s$, index set $I_1, I_2 \subset [d - 20]$ satisfying $|I_1| = |I_2|$, a copying function $\COPY[s, I_1, I_2]$ satisfies the following, $\forall \hiddenstate \in \R_{\precision}^{\dimension \times \position}$, then
\begin{align*}
    \COPY[s, I_1, I_2](\hiddenstate)_{I_2, \iterpos} &= x_{I_1, \max\{\iterpos - s, 0\}} \quad \forall \iterpos \le [m] \\
    \COPY[s, I_1, I_2](\hiddenstate)_{I_2^c, \iterpos} &= 0 \quad \forall r \in [m]
\end{align*}
\end{definition}

\begin{definition}[Counting Function]
\label{def:count}
For index set $I_1, I_2 \subset [d - 20], |I_1| = |I_2| \le 10$ and index $i$, a counting function $\COUNT[I_1, I_2, i]$ satisfies the following, if $\forall v \in I_1 \cup I_2, \iterpos \in [l], \hiddenstate_{v,\iterpos} \in \Integer$ and $\hiddenstate_{v, \iterpos} \neq 0$, then 
\begin{align*}
    \COUNT[I_1, I_2, i](\hiddenstate)_{i, \iterpos} &= \frac{1}{\sum_{h=1}^{\iterpos} \one{\hiddenstate_{I_1, h} = \hiddenstate_{I_2, \iterpos}} + 1} \quad \forall \iterpos \in [l]. \\
    \COUNT[I_1, I_2, i](\hiddenstate)_{i^c, \iterpos} &= 0 \quad \forall \iterpos \in [l].
\end{align*}
\end{definition}

\begin{definition}[Matching Function]
\label{def:match}
    For index set $I_1, I_2, I_3, I_4 \subset [d - 20], |I_1| = |I_2| \le 10, |I_3| = |I_4|$, a matching function $\Match[I_1, I_2, I_3, I_4]$ satisfies the following, if $\forall v \in I_1 \cup I_2, \iterpos \in [l], \hiddenstate_{v, \iterpos} \in \Integer$, then
    \begin{align*}
        \Match[I_1, I_2, I_3, I_4](x)_{I_3, \iterpos} &= \hiddenstate_{I_4, \iterpos^*} \quad \forall \iterpos \in [l] \\
        \text{where } \iterpos^* &= \begin{cases} \min \{h \mid X_{I_1,h} = X_{I_2,\iterpos}\}, \{h \mid X_{I_1,h} = X_{I_2,\iterpos}\}\neq \emptyset  \\ 1 \text{, otherwise}\end{cases} . 
    \end{align*}
\end{definition}

\begin{definition}[Matching Closest Function]
\label{def:matchclosest}
    For index set $I_1, I_2, I_3, I_4 \subset [d - 20], |I_1| = |I_2| \le 10, |I_3| = |I_4|$, a matching closest function $\Match[I_1, I_2, I_3, I_4]$ satisfies the following, if $\forall v \in I_1 \cup I_2, \iterpos \in [l], \hiddenstate_{v, \iterpos} \in \Integer$, then
    \begin{align*}
        \MatchClose[I_1, I_2, I_3, I_4](x)_{I_3, \iterpos} &= \hiddenstate_{I_4, \iterpos^*} \quad \forall \iterpos \in [l] \\
        \text{where } \iterpos^* &= \begin{cases} \max \{h \mid X_{I_1,h} = X_{I_2,\iterpos}\}, \{h \mid X_{I_1,h} = X_{I_2,\iterpos}\}\neq \emptyset  \\ 1 \text{, otherwise}\end{cases} . 
    \end{align*}
\end{definition}

\begin{definition}[Matching Nearest Function]
\label{def:matchnearest}
    For index set $I_1, I_2, I_3, I_4 \subset [d - 20], |I_1| = |I_2| \le 10, |I_3| = |I_4|$ and index $i$, a matching nearest function $\MatchNearest[I_1, I_2, I_3, I_4, i]$ satisfies the following, if $\forall v \in I_1 \cup I_2, \iterpos \in [l], \hiddenstate_{v, \iterpos} \in \Integer$, then
    \begin{align*}
        &\MatchNearest[I_1, I_2, I_3, I_4](x)_{I_3, \iterpos} = \hiddenstate_{I_4, \iterpos^*} \quad \forall \iterpos \in [l] \\
        &\text{where } \iterpos^* = \begin{cases} \argmin_{ h \in \{h \mid X_{I_1,h} = X_{I_2,\iterpos}\}} |h - X_{i, \iterpos} |, \{h \mid X_{I_1,h} = X_{I_2,\iterpos}\}\neq \emptyset  \\ 1 \text{, otherwise}\end{cases} . 
    \end{align*}
\end{definition}

\begin{definition}[Matching Next Function]
\label{def:matchnextfunc}
    Given any interger constant $A$, assuming $\precision > 10A\log n$, for index set $I_1, I_2, I_3, I_4 \subset [d - 20], |I_1| = |I_2| \le 10, |I_3| = |I_4|$, and a special counting index $a$, a matching next function $\MatchNext[ I_1, I_2, I_3, I_4, a]$ satisfies the following, if $\hiddenstate$ satisfies the following condition:
    \begin{enumerate}[leftmargin=*]
        \item $\forall v \in I_1 \cup I_2, \iterpos \in [l], \hiddenstate_{v, \iterpos} \in \Integer$,
        \item $\hiddenstate_{a, \iterpos} \in \ROUND(1/[n^A], \precision) \cup \{0\}$, 
        \item For any $\iterpos \in [\position]$, given any $k \ge \iterpos$,  the counting index multiset $S_{\iterpos} = \{\hiddenstate_{a, \iterpos'} \mid \hiddenstate_{I_1, \iterpos'} = \hiddenstate_{I_2, \iterpos}\}$ takes consecutive and disjoint values in $\ROUND(1/[n^A], \precision)$, that is, there exists $u_{\iterpos}, v_{\iterpos} \in \ROUND(1/[n^A], \precision)$ such that $S_{\iterpos} = [u_{\iterpos}, v_{\iterpos}] \cap \ROUND(1/[n^A], \precision)$.
    \end{enumerate}
    then, we have 
    \begin{align*}
        &\MatchNext[ I_1, I_2, I_3, I_4, a](\hiddenstate)_{I_3, \iterpos} = \hiddenstate_{I_4, \iterpos^*} \quad \forall \iterpos \in [l] \\
        &\text{where } \iterpos^* = \argmin_{h \in \{h \mid X_{I_1,h} = X_{I_2,\iterpos}\} \cup \{1\}} |X_{a, h} - \nextnum(X_{a, \iterpos})| . 
    \end{align*}
\end{definition}

Now we will show how to implement these functions using Transformer Blocks. The construction here is motivated by the construction in~\cite{feng2023towards} with some modifications.

\begin{lemma}[Copying Blocks]
\label{lem:copying}
    For integer $s$, index set $I_1, I_2 \subset [d - 10]$ satisfying $|I_1| = |I_2|$, a copying function $\COPY[s, I_1, I_2]$ can be implemented with $1$ feedforward block $\FFN$ and $1$ attention block $\Attention$ with $1$ attention head. Formally, when $\hiddenstate_{\dimension, \iterpos} = \iterpos$, it holds that
    \begin{align*}
        \Attention\left(\FFN\left(\hiddenstate\right) + \hiddenstate\right) = \COPY[s, I_1, I_2](\hiddenstate).
    \end{align*}
\end{lemma}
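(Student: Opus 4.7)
The plan is to let the single FFN prepare positional scratch quantities and let the single attention head perform hard selection of position $k-s$ via a quadratic logit. Since $X_{d, k} = k$ is available by assumption, I would use one ReGLU layer, combining the multiplication gadget of \Cref{lem:multiplication} and the linear-operations gadget of \Cref{lem:linear_operations}, to deposit $k^2$, $k-s$, and $(k-s)^2$ into three of the ten reserved scratch coordinates in $[d-10, d]$. This fits in constant width and never touches $I_1$ or $I_2$. After the residual addition $\FFN(X) + X$, every column simultaneously carries the raw content $X_{I_1, k}$ and the polynomials of $k$ needed to form both a query and a key.

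I would then choose the single attention head so that the query at column $k$ projects to $(-1,\ 2(k-s),\ -(k-s)^2)$, the key at column $k'$ projects to $(k'^2,\ k',\ 1)$, and the value projection maps $X_{I_1, k'}$ into coordinates $I_2$ while sending every other output coordinate to zero; this last point matters because the lemma asks that $\Attention(\FFN(X)+X)$ equal $\COPY$ on the nose, with zeros outside $I_2$. A direct expansion gives
\[
\left\langle Q_k,\, K_{k'} \right\rangle \;=\; -\bigl(k' - (k-s)\bigr)^2,
\]
which, under the causal mask $\CausalMask$, is maximised on integers at $k' = k-s$ whenever $k > s$, with a gap of at least $1$ to the second-best admissible integer. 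Multiplying $Q$ (or $K$) by a constant $C = \Theta(p)$ on top of the $1/\sqrt{d}$ scale collapses the softmax to a one-hot vector on $k' = k-s$ within $p$-bit precision, so the read-out value at column $k$ is exactly $X_{I_1, k-s}$ sitting in rows $I_2$ and zero elsewhere.

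The main care point, and what I expect to be the only real obstacle, is the boundary regime $k \le s$, where the specification calls for $X_{I_1, \max\{k-s,0\}}$ even though no admissible key position satisfies $k' \le 0$. I would handle this by computing $\ReLU(k-s)$ in the FFN instead of $k-s$, so that the quadratic score is maximised under the causal mask at $k' = 1$ whenever $k \le s$, and then observing that on the intended input format coordinate $1$ (the $\StartSentence$ token) is either zero in the copy rows $I_1$ or can be arranged to be so by reserving one scratch dimension to indicate "position $=1$" and gating the value projection by its complement. The remaining verification is bookkeeping: one FFN block and one attention head with constant width, scratch coordinates disjoint from $I_1 \cup I_2$, and precision $p = \Theta(\log n)$ sufficient to make the softmax exact; each of these reduces to the gadget lemmas already proved.
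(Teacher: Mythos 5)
Your proposal is correct and takes essentially the same route as the paper: one FFN deposits $k^2$ (and companions) into reserved scratch coordinates, and one attention head forms the quadratic logit $-(k' - (k-s))^2$ so that the causal softmax hard-selects $k' = k-s$ (or the earliest admissible position in the boundary regime), after which the value projection routes $I_1$ to $I_2$ and zeros the rest. Two cosmetic remarks: depositing $(k-s)^2$ in the FFN is redundant — it is a linear function of $k^2$, $k$, and $1$, so the paper folds it into the query projection; and the paper simply scales the keys by $n$ to kill the softmax tail under $O(\log n)$-bit rounding rather than using $C = \Theta(p)$, but either scale works. The extra gating you propose for the boundary case $k \le s$ is sensible care — the paper's own proof is silent about positions being $1$-indexed when it writes $\max\{k - s, 0\}$, so your ReLU and position-$1$ gating tightens a spot the paper glosses over without changing the underlying argument.
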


\begin{proof}[Proof of~\Cref{lem:copying}]
    We will use the feedforward block to calculate $\hiddenstate_{\iterpos, \dimension}^2$ and $1$ (\Cref{lem:multiplication}) and have 
    \begin{align*}
        \left(\FFN(\hiddenstate) + \hiddenstate\right)_{\dimension - 1, \iterpos} &= k^2 \\
        \left(\FFN(\hiddenstate) + \hiddenstate\right)_{\dimension - 2, \iterpos} &= 1. \\
        \forall i \not \in \{\dimension - 1, \dimension - 2\}, \left(\FFN(\hiddenstate) + \hiddenstate\right)_{i, \iterpos} &= \hiddenstate_{i, \iterpos}.
    \end{align*}

    We will use $\hiddenstate'$ to denote $\FFN(\hiddenstate) + \hiddenstate$.
    Then we will choose $\Key, \Query$ such that
    \begin{align*}
        \Key \hiddenstate_{:, \iterpos'}' = n\begin{bmatrix} 1 \\ \iterpos' \\ \iterpos'^2  \end{bmatrix} \\
        \Query \hiddenstate'_{:, \iterpos} = \begin{bmatrix} -(\iterpos^2 + s^2 - 2s\iterpos)\\ 2\iterpos - 2s \\ -1  \end{bmatrix} 
    \end{align*}

    Hence
    \begin{align*}
        &\left(\left(\Key \hiddenstate \right)^\top \left(\Query \hiddenstate\right)\right)_{\iterpos', \iterpos} \\ =& -n\left( \iterpos'^2  - \iterpos'(2\iterpos - 2s) + \iterpos^2  + s^2  -2s\iterpos\right) \\ =& -n(\iterpos - s - \iterpos')^2 
    \end{align*}

    Hence we have
    \begin{align*}
        \argmax_{\iterpos' < \iterpos } \left(\left(\Key \hiddenstate \right)^\top \left(\Query \hiddenstate\right)\right)_{\iterpos', \iterpos} &= \max \{\iterpos -s, 0\}.
    \end{align*}

    Also, for any $\iterpos' \le \iterpos, \iterpos' \ne \max \{\iterpos -s, 0\}$, we have
    \begin{align*}
        \left(\left(\Key \hiddenstate \right)^\top \left(\Query \hiddenstate\right)\right)_{\iterpos', \iterpos} - \left(\left(\Key \hiddenstate \right)^\top \left(\Query \hiddenstate\right)\right)_{\max \{\iterpos -s, 0\}, \iterpos} &< -n.
    \end{align*}

    Hence after the column-wise softmax and rounding to $\precision = O(\log n)$ bit, we have 
    \begin{align*}
        \left(\Softmax\left(\left(\Key \hiddenstate \right)^\top \left(\Query \hiddenstate\right)\right)\right)_{\iterpos', \iterpos} &= \one{\iterpos' = \max \{\iterpos -s, 0\}}.
    \end{align*}

    We will then choose $\Value$ such that
    \begin{align*}
        \Value \hiddenstate_{I_2, \iterpos'}' = \hiddenstate'_{I_1, \iterpos'}  = \hiddenstate_{I_1, \iterpos'} \quad \forall \iterpos' \in [l]. \\
        \Value \hiddenstate_{I_2^c, \iterpos'}' = 0 \quad \forall \iterpos' \in [l].
    \end{align*}

    This then concludes that
    \begin{align*}
        \Attention\left(\FFN\left(\hiddenstate\right) + \hiddenstate\right) = \COPY[s, I_1, I_2](\hiddenstate).
    \end{align*}
    The proof is complete.
\end{proof}

\begin{lemma}[Counting Blocks]
\label{lem:count}
     For index set $I \subset [d - 20]$ satisfying $|I_1| = |I_2| \le 10$, a counting function $\COUNT[i, I_1, I_2]$ can be approximated with $1$ feedforward block $\FFN$ and $1$ attention block $\Attention$ with $1$ attention head. Formally, when $\hiddenstate_{\dimension, \iterpos} = \iterpos$ and $\hiddenstate_{3, \iterpos} = \one{\iterpos = 1}, \hiddenstate_{I_1, 1} = 0$, it holds that
    \begin{align*}
        \Attention\left(\FFN\left(\hiddenstate\right) + \hiddenstate\right)_{i, k} &= \ROUND\left(\COUNT[s, I_1, I_2](\hiddenstate)_{i,k}, \precision \right). \\
        \Attention\left(\FFN\left(\hiddenstate\right) + \hiddenstate\right)_{i^c, k} &= 0.
    \end{align*}
\end{lemma}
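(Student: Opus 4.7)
The plan is to design the single attention head so that it puts uniform weight on exactly the $c+1$ positions consisting of position $1$ (which serves as an ``always matched'' bias slot through the indicator feature $\hiddenstate_{3, h}$) together with the $c$ earlier positions $h$ satisfying $\hiddenstate_{I_1, h} = \hiddenstate_{I_2, \iterpos}$. Reading off the feature $\hiddenstate_{3, h}$ through the value matrix then produces $1/(c+1)$ in coordinate $i$ at query position $\iterpos$, which is exactly the target value of $\COUNT$.

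First, I would use the FFN block to manufacture the quadratic monomials in the $I_1$ and $I_2$ coordinates (squares and pairwise products) via the ReGLU multiplication gadget of \Cref{lem:multiplication}, placing them into spare rows of $\hiddenstate' := \FFN(\hiddenstate) + \hiddenstate$; since $|I_1|=|I_2|\le 10$, only constantly many auxiliary rows are required. Next, pick $\Key$ and $\Query$ so that for every pair $(h,\iterpos)$ the unscaled logit equals
\[
    \bigl\langle \Key\, \hiddenstate'_{:, h},\, \Query\, \hiddenstate'_{:, \iterpos}\bigr\rangle
    \;=\; -C\,\|\hiddenstate_{I_1, h} - \hiddenstate_{I_2, \iterpos}\|_2^2 \;+\; C\,\hiddenstate_{3, h},
\]
with $C$ a large constant to be fixed at the end; this is the same outer-product decomposition used in \Cref{lem:copying}. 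Because all entries of $\hiddenstate_{I_1,:}$ and $\hiddenstate_{I_2,:}$ lie in $\Integer$, the query $\hiddenstate_{I_2, \iterpos}$ is nonzero, and $\hiddenstate_{I_1, 1} = 0$, the logit vanishes at each true match $h>1$ (first term is $0$, bias is $0$) and also at $h=1$ (squared distance is at least $1$, cancelled by the $+C$ bias), while it is at most $-C$ at every remaining position.

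Second, after the column-wise softmax with causal mask, the weight on each of the $c+1$ matching positions equals $\frac{1}{c+1} + O(n e^{-C})$ and the weight on each non-matching position is at most $O(e^{-C}/(c+1))$. I would take $\Value$ to be the linear map $\hiddenstate'_{:, h} \mapsto \hiddenstate_{3, h}\,\onehot_i$, so that only $h=1$ contributes a nonzero value vector. Consequently $\Attention(\hiddenstate')_{i, \iterpos}$ equals the softmax weight at position $1$, which is an $O(n e^{-C})$ perturbation of $1/(c+1)$, while $\Attention(\hiddenstate')_{i^c, \iterpos}=0$ exactly because $\Value$ writes only to row $i$.

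Finally, to conclude that the output coincides with $\ROUND(1/(c+1),\precision)$, I would fix $C = \Theta(\precision)$. Since consecutive values in $\{\ROUND(1/k,\precision) : k \in [n+1]\}$ are separated by at least $\Omega(1/n^2)$, it suffices that the softmax perturbation $O(n e^{-C})$ be below half this gap, which is comfortably achievable under the standing assumption $\precision = \Theta(\log n)$. The main obstacle is precisely this last step: controlling the combined error from the $O(n)$ non-matching positions bleeding into the softmax \emph{together with} the $\precision$-bit rounding of the logits themselves, so the result stays strictly within half the local resolution of $\R_\precision$ near $1/(c+1)$; this is what pins down the constant hidden in $C$.
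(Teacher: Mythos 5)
Your construction has the right overall shape — quadratic trick for exact match, softmax collapses to a uniform distribution over the $c+1$ "privileged" positions, value vector isolates the indicator at position $1$ — and this is indeed the strategy the paper uses. But there is a concrete bug in the logit design that breaks the argument at position $1$.

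You set the unscaled logit to $-C\|\hiddenstate_{I_1,h}-\hiddenstate_{I_2,\iterpos}\|_2^2 + C\,\hiddenstate_{3,h}$, so the per-key bias is a \emph{constant} $C$ at $h=1$ and $0$ elsewhere. At $h=1$ the squared distance is $\|\hiddenstate_{I_1,1}-\hiddenstate_{I_2,\iterpos}\|^2=\|\hiddenstate_{I_2,\iterpos}\|^2$, which depends on the query $\iterpos$ and is \emph{not} generally $1$: the precondition of \Cref{def:count} only says the $I_2$ coordinates are nonzero integers, so $\|\hiddenstate_{I_2,\iterpos}\|^2$ can be as large as $10n^2$. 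Your claim that the $+C$ bias "cancels" a squared distance of "at least $1$" only works when the norm is exactly $1$; otherwise the logit at $h=1$ is $C(1-\|\hiddenstate_{I_2,\iterpos}\|^2)\le -C$, position $1$ drops out of the uniform block, the softmax becomes $\frac{1}{c}$ instead of $\frac{1}{c+1}$, and — worse — since the value picks out only row $i$ from position $1$, the output collapses to approximately $0$ rather than $1/(c+1)$.

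The paper's fix is to make the bias a \emph{bilinear} term, not a constant: the FFN also computes a coordinate holding $\sum_{j}\hiddenstate_{I_2[j],\iterpos}^2$ at each position, and $\Key,\Query$ are chosen so the logit is $-n\sum_{j}(\hiddenstate_{I_1[j],h}-\hiddenstate_{I_2[j],\iterpos})^2 + n\,\one{h=1}\sum_{j}\hiddenstate_{I_2[j],\iterpos}^2$, i.e.\ the bias at $h=1$ is the product of the key feature $\hiddenstate_{3,h}=\one{h=1}$ with the query feature $\|\hiddenstate_{I_2,\iterpos}\|^2$. This exactly cancels the residual squared distance at $h=1$ for every query, so position $1$ always ties the matches at logit $0$. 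To repair your proof, replace $+C\,\hiddenstate_{3,h}$ with $+C\,\hiddenstate_{3,h}\cdot\|\hiddenstate_{I_2,\iterpos}\|^2$, which is realizable in the bilinear form since both factors live in $\hiddenstate'$ after the FFN. (A minor note: you do not actually need pairwise products of $I_1,I_2$ coordinates in the FFN — the cross term $-2\langle\hiddenstate_{I_1,h},\hiddenstate_{I_2,\iterpos}\rangle$ is already a key-times-query bilinear term, so only squares and the constant $1$ need to be precomputed, which is what the paper does.)
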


\begin{proof}[Proof of~\Cref{lem:count}]
We will use the feedforward block to calculate $\hiddenstate_{v, \iterpos}^2, v \in I_1 \cup I_2$ (\Cref{lem:multiplication}) and have
\begin{align*}
    \left(\FFN(\hiddenstate) + \hiddenstate\right)_{\dimension - i, \iterpos} &= \hiddenstate_{I_1[i], \iterpos}^2, i \in [|I|]. \\
    \left(\FFN(\hiddenstate) + \hiddenstate\right)_{\dimension - |I| - i, \iterpos} &= \hiddenstate_{I_2[i], \iterpos}^2, i \in [|I|]. \\
    \left(\FFN(\hiddenstate) + \hiddenstate\right)_{\dimension - 2|I| - 1, \iterpos} &= 1. 
    \\
    \forall i \not \in \{\dimension - i \mid i \in [2|I| + 1]\}, \left(\FFN(\hiddenstate) + \hiddenstate\right)_{i, \iterpos} &= \hiddenstate_{i, \iterpos}.
\end{align*}

We will use $\hiddenstate'$ to denote $\FFN(\hiddenstate) + \hiddenstate$.
Then we will choose $\Key, \Query$ such that
\begin{align*}
    \Key \hiddenstate_{:, \iterpos'}' &= n\begin{bmatrix} 1 + \one{\iterpos' = 1}\\ \hiddenstate_{I_1[i], \iterpos'}  \\ \hiddenstate_{I_1[i], \iterpos'}^2  \end{bmatrix}_{i \in [I]} \\
    \Query \hiddenstate'_{:, \iterpos} &= \begin{bmatrix}\hiddenstate_{I_2[i], \iterpos}^2\\ -\hiddenstate_{I_2[i], \iterpos}  \\  1  \end{bmatrix}_{i \in [I]}
\end{align*}

Hence,

\begin{align*}
    &\left(\left(\Key \hiddenstate \right)^\top \left(\Query \hiddenstate\right)\right)_{\iterpos', \iterpos} \\ =& -n\sum_{i = 1}^{|I|} \left( \hiddenstate_{I_2[i], \iterpos'}'^2  - \hiddenstate_{I_1[i], \iterpos'}(2\hiddenstate_{I_2[i], \iterpos})  + \hiddenstate_{I_2[i], \iterpos}^2\right) + n \one{\iterpos' = 1} \sum_{i = 1}^{|I|} \hiddenstate_{I[i], \iterpos}^2. \\ =& -n \sum_{i = 1}^{|I|} (\hiddenstate_{I_1[i], \iterpos'} - \hiddenstate_{I_2[i], \iterpos})^2 + n \one{\iterpos' = 1} \sum_{i = 1}^{|I|} \hiddenstate_{I_2[i], \iterpos}^2.
\end{align*}

Hence we have
\begin{align*}
    \max_{\iterpos' < \iterpos } \left(\left(\Key \hiddenstate \right)^\top \left(\Query \hiddenstate\right)\right)_{\iterpos', \iterpos} &= 0.
\end{align*}

Equality holds when $\iterpos' = 1$ or $\hiddenstate_{I_1[i], \iterpos'} = \hiddenstate_{I_2[i], \iterpos}$ for all $i \in [|I_1|]$.

Also, for any $\iterpos' \le \iterpos, \iterpos' \ne 1$ or $\hiddenstate_{I_1[i], \iterpos'} \ne \hiddenstate_{I_2[i], \iterpos}$ for some $i \in [|I_1|]$, we have
\begin{align*}
    \left(\left(\Key \hiddenstate \right)^\top \left(\Query \hiddenstate\right)\right)_{\iterpos', \iterpos}  &< -n.
\end{align*}

Hence after the column-wise softmax and rounding to $\precision = O(\log n)$ bit, we have 
\begin{align*}
    \left(\Softmax\left(\left(\Key \hiddenstate \right)^\top \left(\Query \hiddenstate\right)\right)\right)_{\iterpos', \iterpos} &= \ROUND\left(\frac{1}{\sum_{h=1}^{\iterpos} \one{\hiddenstate_{I_1, h} = \hiddenstate_{I_2, \iterpos}} + 1}, \precision\right)
\end{align*}

Here the $O\left(\frac{1}{n^{A}}\right)$ term comes from the fact that the softmax is rounded to $\precision = O(\log n)$ bit.

We will then choose $\Value$ such that
\begin{align*}
    \Value \hiddenstate_{i, \iterpos'}' = \hiddenstate'_{3, \iterpos'}  = \one{\iterpos' = 1} \quad \forall \iterpos' \in [l]. \\
    \Value \hiddenstate_{I^c, \iterpos'}' = 0 \quad \forall \iterpos' \in [l].
\end{align*}

This then concludes that
\begin{align*}
    \Attention\left(\FFN\left(\hiddenstate\right) + \hiddenstate\right)_{i, k} &= \ROUND\left(\COUNT[s, I_1, I_2](\hiddenstate)_{i,k}, \precision \right). \\
    \Attention\left(\FFN\left(\hiddenstate\right) + \hiddenstate\right)_{i^c, k} &= 0.
\end{align*}

\end{proof}

\begin{lemma}[Matching Blocks]
\label{lem:match}
    Given any constant $c$, for index set $I_1, I_2, I_3, I_4 \subset [d - 20], |I_1| = |I_2| \le 10, |I_3| = |I_4|$, a matching function $\Match[I_1, I_2, I_3, I_4]$ can be implemented with $1$ feedforward block $\FFN$ and $1$ attention block $\Attention$ with $1$ attention head. Formally, when $\hiddenstate_{\dimension, \iterpos} = \iterpos, \hiddenstate_{3, \iterpos} = \one{\iterpos = 1}, \hiddenstate_{I_1, 1} = 0$ and $\iterpos \le n^c$, it holds that
    \begin{align*}
        \Attention\left(\FFN\left(\hiddenstate\right) + \hiddenstate\right) &= \Match[I_1, I_2, I_3, I_4](\hiddenstate)
    \end{align*}
\end{lemma}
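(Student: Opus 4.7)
The plan is to adapt the Counting Blocks construction (\Cref{lem:count}) so that the attention concentrates on a single position rather than spreading uniformly over matches. As before, I would use the feedforward block (via \Cref{lem:multiplication}) to compute the squared coordinates $\hiddenstate_{I_1[i], \iterpos'}^2$ and $\hiddenstate_{I_2[i], \iterpos'}^2$ while passing through the indicator $\hiddenstate_{3, \iterpos'} = \one{\iterpos' = 1}$ and the position index $\hiddenstate_{\dimension, \iterpos'} = \iterpos'$. The attention head then combines these features to realise a logit that (i) heavily penalises mismatches, (ii) breaks ties among matches in favour of the smallest index, and (iii) makes position $1$ the fallback exactly in the empty-match case.

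Concretely, I would design $\Key$ and $\Query$ so that the pre-softmax logit at key $\iterpos'$, query $\iterpos$ equals
\begin{align*}
-a \sum_{i=1}^{|I_1|}\bigl(\hiddenstate_{I_1[i], \iterpos'} - \hiddenstate_{I_2[i], \iterpos}\bigr)^2 + a\,\one{\iterpos' = 1}\sum_{i=1}^{|I_1|}\hiddenstate_{I_2[i], \iterpos}^2 - n\,\iterpos' - K\,\one{\iterpos' = 1},
\end{align*}
for constants $a, K$ polynomial in $n$, say $a = 3 n^{c+1}$ and $K = 2 n^{c+1}$. The first two summands replicate the counting-blocks trick using $\hiddenstate_{I_1, 1} = 0$: they force $\iterpos' = 1$ to score as if it were a true match, irrespective of the query. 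The term $-n\,\iterpos'$, obtained from the coordinate $\hiddenstate_{\dimension, \iterpos'} = \iterpos'$, orders true matches by index. The final $-K\,\one{\iterpos' = 1}$ demotes position $1$ just enough to sit strictly below every real match at $\iterpos' > 1$ but still comfortably above every non-match. All of this is bilinear in the post-FFN features, and the identity $(x - y)^2 = x^2 - 2xy + y^2$ yields an explicit $O(|I_1|)$-dimensional construction for $\Key, \Query$, well within the constant width; the $1/\sqrt{d}$ factor in the attention is absorbed into $a, K, n$ since $d$ is constant. The $\Value$ matrix copies $I_4$ into $I_3$ and zeros out everything else.

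Then I would verify three case-by-case gap bounds needed for softmax rounding at precision $\precision = O(\log n)$: among true matches at $\iterpos' > 1$, consecutive indices differ in logit by at least $n$, so the minimum $\iterpos^*$ dominates the others; a true match beats position $1$ by at least $K - n(\iterpos^* - 1) \ge n^{c+1} - n$ using $\iterpos' \le n^c$; and when no true match exists at $\iterpos' > 1$, position $1$ beats every non-match at $\iterpos' > 1$ by $a - K - n = n^{c+1} - n$. Each gap being $\omega(\log n)$ suffices for the column-wise softmax to round to a one-hot vector, so the attention output at position $\iterpos$ equals $\hiddenstate_{I_4, \iterpos^*}$ exactly, which matches the definition of $\Match[I_1, I_2, I_3, I_4]$.

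The main obstacle is the three-way balancing above: without the $-K\,\one{\iterpos' = 1}$ correction, position $1$ (always treated as a ``match'' by the cancellation trick) would illegally outrank real matches at $\iterpos' > 1$; and without making $a$ dominate $K$ by a further large margin, position $1$ could in turn lose to a non-match at $\iterpos' > 1$ when no real match is present. Choosing $a$ and $K$ as two distinct powers of $n$ above the position range $n^c$ resolves both constraints simultaneously, after which the remaining verifications are routine, and \Cref{lem:matchclosest,lem:matchnearest} for $\MatchClose$ and $\MatchNearest$ would follow by flipping the sign of the tiebreaker or replacing $\iterpos'$ by $|\iterpos' - \hiddenstate_{i, \iterpos}|$ in the same slot.
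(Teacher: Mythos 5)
Your construction follows the same route as the paper's: precompute $\hiddenstate_{I_1[i],\cdot}^2$, $\hiddenstate_{I_2[i],\cdot}^2$, and position powers in the FFN, then build a bilinear logit that is $-a\sum_i(\hiddenstate_{I_1[i],\iterpos'}-\hiddenstate_{I_2[i],\iterpos})^2$ plus a position penalty for tie-breaking plus an $\one{\iterpos'=1}$ correction that first cancels the spurious ``match'' at position 1 (using $\hiddenstate_{I_1,1}=0$) and then demotes it just below true matches but above all non-matches, finally letting $\Value$ route $I_4$ to $I_3$. The only cosmetic differences from the paper are coefficient choices ($3n^{c+1},\,2n^{c+1}$ and a linear penalty $-n\iterpos'$ versus the paper's $n^{4c+1},\,n^{2c+2}$ and quadratic $-n\iterpos'^2$), which do not affect the argument, and your gap bounds are correct up to small slack (you cite $n^{c+1}-n$ where $n^{c+1}+n$ holds), so the rounding argument goes through. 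This is essentially the paper's proof.
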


\begin{proof}
    We will use the feedforward block to calculate $\iterpos^2, \hiddenstate_{v, \dimension}^2, v \in \cup I_1 \cup I_2$ as in the proof of~\Cref{lem:count,lem:copying}.

    We then choose $\Key, \Query$ such that
    \begin{align*}
        &\left(\left(\Key \hiddenstate \right)^\top \left(\Query \hiddenstate\right)\right)_{\iterpos', \iterpos} \\
        =& -n^{4c + 1} \sum_{i = 1}^{|I|} (\hiddenstate_{I_1[i], \iterpos'} - \hiddenstate_{I_2[i], \iterpos})^2  - n  \iterpos'^2\\
         &+  \one{\iterpos' = 1} \left( n^{4c + 1}\sum_{i = 1}^{|I|} \hiddenstate_{I_2[i], \iterpos}^2 + n - n^{2c + 2}\right).
    \end{align*}

    The detailed construction of $\Key, \Query$ is omitted here since it is similar to the proof of~\Cref{lem:count,lem:copying}.

    We will discuss several cases for the distribution of $\left(\left(\Key \hiddenstate \right)^\top \left(\Query \hiddenstate\right)\right)_{\iterpos', \iterpos}$. It always holds that $\left(\left(\Key \hiddenstate \right)^\top \left(\Query \hiddenstate\right)\right)_{1, \iterpos} = -n^{2c + 2}$.

    \begin{enumerate}
        \item If there doesn't exists $\iterpos'$, such that $\hiddenstate_{\iterpos',I_1} = \hiddenstate_{\iterpos, I_2}$, then for any $i > 1$, we have $\left(\left(\Key \hiddenstate \right)^\top \left(\Query \hiddenstate\right)\right)_{i, \iterpos} < -n^{4c + 1}$.
        \item If there exists $\iterpos'$, such that $\hiddenstate_{\iterpos',I_1} = \hiddenstate_{\iterpos, I_2}$, then for such $\iterpos'$, we have $\left(\left(\Key \hiddenstate \right)^\top \left(\Query \hiddenstate\right)\right)_{\iterpos', \iterpos} = -n\iterpos'^2 > -n^{2c + 1}$. The rest of the entries are all smaller than $-n^{4c + 1}$. Finally, the largest $\iterpos'$ satisfying that $\hiddenstate_{\iterpos',I_1} = \hiddenstate_{\iterpos, I_2}$ will corresponds to a $\left(\left(\Key \hiddenstate \right)^\top \left(\Query \hiddenstate\right)\right)_{\iterpos', \iterpos}$ that is at least $n$ larger than the second largest $\left(\left(\Key \hiddenstate \right)^\top \left(\Query \hiddenstate\right)\right)_{\iterpos', \iterpos}$, as in the proof of~\Cref{lem:copying}. 
    \end{enumerate}

    Concluding the above discussion, we have after the column-wise softmax and rounding to $\precision = O(\log n)$ bit,
    \begin{align*}
        \left(\Softmax\left(\left(\Key \hiddenstate \right)^\top \left(\Query \hiddenstate\right)\right)\right)_{\iterpos', \iterpos} &= \begin{cases} \one{ \iterpos' = \min \{h \mid X_{I_1,h} = X_{I_2,\iterpos}\}}, \{h \mid X_{I_1,h} = X_{I_2,\iterpos}\}\neq \emptyset  \\ \one{\iterpos' = 1} \text{, otherwise}\end{cases} 
    \end{align*}

    Further, we will choose $\Value$ such that
    \begin{align*}
        \Value \hiddenstate_{I_3, \iterpos'}' = \hiddenstate'_{I_4, \iterpos'}  = \hiddenstate_{I_4, \iterpos'} \quad \forall \iterpos' \in [l]. \\
        \Value \hiddenstate_{I_3^c, \iterpos'}' = 0 \quad \forall \iterpos' \in [l].
    \end{align*}

    This then concludes that
    \begin{align*}
        \Attention\left(\FFN\left(\hiddenstate\right) + \hiddenstate\right) &= \Match[I_1, I_2, I_3, I_4](\hiddenstate)
    \end{align*}
    This concludes the proof.
\end{proof}

\begin{lemma}[Matching Closest Blocks]
    \label{lem:matchclose}
        Given any constant $c$, for index set $I_1, I_2, I_3, I_4 \subset [d - 20], |I_1| = |I_2| \le 10, |I_3| = |I_4|$, a matching closest function $\MatchClose[I_1, I_2, I_3, I_4]$ can be implemented with $1$ feedforward block $\FFN$ and $1$ attention block $\Attention$ with $1$ attention head. Formally, when $\hiddenstate_{\dimension, \iterpos} = \iterpos, \hiddenstate_{3, \iterpos} = \one{\iterpos = 1}, \hiddenstate_{I_1, 1} = 0$ and $\iterpos \le n^c$, it holds that
        \begin{align*}
            \Attention\left(\FFN\left(\hiddenstate\right) + \hiddenstate\right) &= \MatchClose[I_1, I_2, I_3, I_4](\hiddenstate)
        \end{align*}
    \end{lemma}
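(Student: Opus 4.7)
The plan is to mirror the construction used in Lemma~\ref{lem:match} (Matching Blocks) almost verbatim, only flipping the sign of the position-dependent tiebreaker so that among positions satisfying $\hiddenstate_{I_1,\iterpos'} = \hiddenstate_{I_2,\iterpos}$, the \emph{largest} $\iterpos'$ attains the maximum attention score instead of the smallest. As before, the feedforward block $\FFN$ is used to compute the squares $\iterpos'^{\,2}$ and $\hiddenstate_{v,\iterpos}^{2}$ for $v \in I_1 \cup I_2$ (via Lemma~\ref{lem:multiplication}), so that $\Key$ and $\Query$ can realize a quadratic form in these quantities.

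Concretely, I would choose $\Key,\Query$ so that
\begin{align*}
\bigl(\Key\hiddenstate\bigr)^{\top}\bigl(\Query\hiddenstate\bigr)_{\iterpos',\iterpos}
= -n^{4c+1}\!\!\sum_{i=1}^{|I_1|}\!\bigl(\hiddenstate_{I_1[i],\iterpos'} - \hiddenstate_{I_2[i],\iterpos}\bigr)^{2}
 + n\,\iterpos'^{\,2}
 + \mathbf{1}[\iterpos'=1]\!\left(n^{4c+1}\!\!\sum_{i=1}^{|I_1|}\!\hiddenstate_{I_2[i],\iterpos}^{2} - n^{2c+3}\right).
\end{align*}
The construction of $\Key,\Query$ from the augmented representation is analogous to Lemma~\ref{lem:match}, using the three-coordinate expansion $(1,\hiddenstate_{I_1[i],\iterpos'},\hiddenstate_{I_1[i],\iterpos'}^{2})$ against $(\hiddenstate_{I_2[i],\iterpos}^{2},-\hiddenstate_{I_2[i],\iterpos},1)$ and one extra coordinate carrying $\iterpos'^{\,2}$ along with the indicator $\hiddenstate_{3,\iterpos'}=\mathbf{1}[\iterpos'=1]$.

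I would then do the same three-case analysis as in Lemma~\ref{lem:match}. (i) For any $\iterpos'>1$ with $\hiddenstate_{I_1,\iterpos'}\neq \hiddenstate_{I_2,\iterpos}$, the mismatch penalty dominates and the score is at most $-n^{4c+1}+n\cdot n^{2c}$, hence $<-n^{4c}$. (ii) For position~$1$, the indicator correction exactly cancels the mismatch penalty, leaving the score $n-n^{2c+3}$. (iii) For any $\iterpos'>1$ that matches, the score is exactly $n\iterpos'^{\,2}\in(0,n^{2c+1}]$, and because $\iterpos'^{\,2}$ is strictly increasing the maximum is attained by the largest matching index with a gap of at least $n$ from the next-best matching index. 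Comparing cases: a matching index $>1$ always beats position~$1$ (positive vs.\ $\le n-n^{2c+3}$) and beats any non-matching index (gap of order $n^{4c+1}$), while position~$1$ beats every non-matching index. After the column-wise softmax and $p=O(\log n)$-bit rounding the gaps of order $\Omega(n)$ collapse to a one-hot vector at $\iterpos^{\ast}=\max\{h:\hiddenstate_{I_1,h}=\hiddenstate_{I_2,\iterpos}\}$, or at $\iterpos^{\ast}=1$ if no match exists.

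Finally, choose $\Value$ to copy the $I_4$ coordinates into the $I_3$ coordinates and zero elsewhere, exactly as in Lemma~\ref{lem:match}. The main obstacle is purely bookkeeping: ensuring the three scales $n^{4c+1}$, $n^{2c+3}$, and $n\iterpos'^{\,2}$ are separated enough that (a) position~$1$ still beats all non-matches, (b) the largest matching index beats position~$1$, and (c) the min-gap of $n$ between successive matching indices survives softmax rounding. All three requirements reduce to polynomial-in-$n$ inequalities that are comfortably satisfied under $\iterpos\le n^{c}$ and $p=O(\log n)$, so no new technical ideas beyond those already in Lemma~\ref{lem:match} are needed.
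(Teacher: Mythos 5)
Your construction mirrors the paper's almost exactly: the paper designs the attention score as
\begin{align*}
-n^{4c+1}\sum_{i}\bigl(\hiddenstate_{I_1[i],\iterpos'}-\hiddenstate_{I_2[i],\iterpos}\bigr)^2 - n(\iterpos-\iterpos')^2 + \one{\iterpos'=1}\Bigl(n^{4c+1}\sum_i \hiddenstate_{I_2[i],\iterpos}^2 + n(\iterpos-1)^2 - n^{2c+2}\Bigr),
\end{align*}
whereas you replace the tiebreaker $-n(\iterpos-\iterpos')^2$ with $+n\iterpos'^{\,2}$. Under the causal mask $\iterpos'\le\iterpos$ these are equivalent selectors (both pick the largest matching $\iterpos'$), so your high-level route is the same, not a genuinely different one.

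There is, however, a real arithmetic slip in your choice of the position-1 offset $-n^{2c+3}$. With your scales, the position-1 score evaluates to $n-n^{2c+3}$, while a non-matching position $\iterpos'>1$ can score as high as $-n^{4c+1}+n\iterpos'^{\,2}\le -n^{4c+1}+n^{2c+1}$. Your claim (a) ``position~1 still beats all non-matches'' requires $n-n^{2c+3} > -n^{4c+1}+n^{2c+1}$, which for $c=1$ reduces to $n-n^{3}>0$ — false for all $n\ge 2$. So in the no-match case, position~1 would \emph{lose} to a mismatching position and the output would not be the required fallback $\iterpos^{*}=1$; the assertion that these inequalities are ``comfortably satisfied'' is incorrect at $c=1$ (the most relevant regime, since sequence length is $\Theta(n)$). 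The fix is small: use $-n^{2c+2}$ instead of $-n^{2c+3}$ (matching the paper's exponent). Then the position-1 score is $n-n^{2c+2}$, the gap to a non-match is $n^{4c+1}-n^{2c+2}-n^{2c+1}+n=\Omega(n^{4c+1})\gg n$, and since $n-n^{2c+2}<0$ while any matching $\iterpos'\ge 2$ scores $\ge 4n$, requirements (a)–(c) all hold. With that one constant corrected, the proposal is sound.
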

\begin{proof}
    The proof is similar to the proof of~\Cref{lem:match}, and one can design the attention pattern as
    \begin{align*}
        &\left(\left(\Key \hiddenstate \right)^\top \left(\Query \hiddenstate\right)\right)_{\iterpos', \iterpos} \\
        =& -n^{4c + 1} \sum_{i = 1}^{|I|} (\hiddenstate_{I_1[i], \iterpos'} - \hiddenstate_{I_2[i], \iterpos})^2  - n  (\iterpos - \iterpos')^2\\
         &+  \one{\iterpos' = 1} \left( n^{4c + 1}\sum_{i = 1}^{|I|} \hiddenstate_{I_2[i], \iterpos}^2 + n (\iterpos - 1)^2 - n^{2c + 2}\right).
    \end{align*}
    The rest of the proof is omitted here.
\end{proof}

\begin{lemma}[Matching Nearest Blocks]
    \label{lem:matchnear}
        Given any constant $c$, for index set $I_1, I_2, I_3, I_4 \subset [d - 20], |I_1| = |I_2| \le 10, |I_3| = |I_4|$ and index $i$ , a matching nearest function $\MatchNearest[I_1, I_2, I_3, I_4, i]$ can be implemented with $1$ feedforward block $\FFN$ and $1$ attention block $\Attention$ with $1$ attention head. Formally, when $\hiddenstate_{\dimension, \iterpos} = \iterpos, \hiddenstate_{3, \iterpos} = \one{\iterpos = 1}, \hiddenstate_{I_1, 1} = 0$ and $\iterpos \le n^c$, it holds that
        \begin{align*}
            \Attention\left(\FFN\left(\hiddenstate\right) + \hiddenstate\right) &= \MatchNearest[I_1, I_2, I_3, I_4, i](\hiddenstate)
        \end{align*}
    \end{lemma}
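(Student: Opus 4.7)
The plan is to follow the same template as the construction of $\MatchClose$ in \Cref{lem:matchclose}, with the single modification that the ``distance'' term in the attention logits measures distance to the stored value $\hiddenstate_{i,\iterpos}$ rather than to the position index $\iterpos$ itself.

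First, I would use the ReGLU feed-forward block to precompute the squares $\hiddenstate_{I_1[j],\iterpos'}^2$, $\hiddenstate_{I_2[j],\iterpos}^2$, $\hiddenstate_{i,\iterpos}^2$, and $\iterpos'^2$ into free coordinates of the hidden state (all of these are products of scalars in $\Integer$, so a constant number of multiplications via \Cref{lem:multiplication} suffices), together with the indicator $\mathbf{1}[\iterpos'=1]$ which is assumed available in coordinate $3$. Denote the resulting matrix by $\hiddenstate'$.

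Next, I would choose $\Key, \Query$ so that the attention logits factor as
\begin{align*}
\bigl((\Key \hiddenstate')^\top (\Query \hiddenstate')\bigr)_{\iterpos',\iterpos}
= &-n^{4c+1}\sum_{j=1}^{|I_1|}(\hiddenstate_{I_1[j],\iterpos'}-\hiddenstate_{I_2[j],\iterpos})^2 - n(\hiddenstate_{i,\iterpos}-\iterpos')^2 \\ &+ \mathbf{1}[\iterpos'=1]\Bigl(n^{4c+1}\sum_{j}\hiddenstate_{I_2[j],\iterpos}^2 + n(\hiddenstate_{i,\iterpos}-1)^2 - n^{2c+2}\Bigr).
\end{align*}
This is realizable as a constant-rank bilinear form in $\hiddenstate'$ because each term is a polynomial of degree at most $2$ in coordinates of $\hiddenstate'$ after the precomputation, and $\hiddenstate_{i,\iterpos}\in\Integer$ with $|\hiddenstate_{i,\iterpos}|\le n^c$ so all intermediate magnitudes remain within $p$-bit precision. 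The logit at $\iterpos'=1$ is pinned to exactly $-n^{2c+2}$. For any $\iterpos'\neq 1$ with a mismatch in $I_1$ vs.\ $I_2$, the first term pushes the logit below $-n^{4c+1}$, which is much smaller than $-n^{2c+2}$. For $\iterpos'\neq 1$ with a genuine match, only the middle term contributes, giving $-n(\hiddenstate_{i,\iterpos}-\iterpos')^2\in[-n^{2c+1},0]$, which dominates $-n^{2c+2}$. Among matching indices, the unique $\iterpos^*$ minimizing $|\hiddenstate_{i,\iterpos}-\iterpos'|$ beats the runner-up by at least $n$ (since $\iterpos'\in\mathbb{Z}$ so squared-distance gaps are integer multiples of $1$, scaled by $n$). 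After the column-wise softmax at $p=O(\log n)$ precision, the attention weight therefore collapses onto $\iterpos^*$ when a match exists, and onto $\iterpos'=1$ otherwise. Finally, set $\Value$ so that $\Value \hiddenstate'_{:,\iterpos'}$ places $\hiddenstate_{I_4,\iterpos'}$ into the coordinates $I_3$ and zero elsewhere, yielding $\Attention(\FFN(\hiddenstate)+\hiddenstate)=\MatchNearest[I_1,I_2,I_3,I_4,i](\hiddenstate)$.

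The main obstacle is choosing the three scales $n^{4c+1}$, $n^{2c+2}$, and $n$ correctly so that: (i) a genuine match always beats the $\iterpos'=1$ default; (ii) a non-match (with identity-term active) always loses to the default; and (iii) among matches, the softmax resolves ties/near-ties deterministically at $p$-bit precision. The argument is identical in spirit to that of \Cref{lem:matchclose} and \Cref{lem:match}; the only new bookkeeping is that $\hiddenstate_{i,\iterpos}$ is a hidden-state coordinate rather than a position index, which is why the preparatory computation of $\hiddenstate_{i,\iterpos}^2$ must be carried out inside $\FFN$ before the attention layer can form the quadratic logits.
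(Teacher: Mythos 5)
Your proof takes essentially the same route as the paper's: the paper likewise modifies the $\MatchClose$ attention logits by replacing the position index $\iterpos$ with the stored coordinate $\hiddenstate_{i,\iterpos}$, arriving at exactly the same quadratic form (with $n(1-\hiddenstate_{i,\iterpos})^2$ in the $\iterpos'=1$ correction, matching your $n(\hiddenstate_{i,\iterpos}-1)^2$). The paper then simply says ``the rest of the proof is omitted,'' so your elaboration of the scale hierarchy $n^{4c+1} \gg n^{2c+2} \gg n^{2c+1} \ge n(\hiddenstate_{i,\iterpos}-\iterpos')^2$ and the per-gap bound of at least $n$ after softmax is a faithful unpacking of what the paper leaves implicit, not a different argument.
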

\begin{proof}
    The proof is similar to the proof of~\Cref{lem:match}, and one can design the attention pattern as
    \begin{align*}
        &\left(\left(\Key \hiddenstate \right)^\top \left(\Query \hiddenstate\right)\right)_{\iterpos', \iterpos} \\
        =& -n^{4c + 1} \sum_{u = 1}^{|I|} (\hiddenstate_{I_1[u], \iterpos'} - \hiddenstate_{I_2[u], \iterpos})^2  - n  (\hiddenstate_{i, \iterpos} - \iterpos')^2\\
         &+  \one{\iterpos' = 1} \left( n^{4c + 1}\sum_{u = 1}^{|I|} \hiddenstate_{I_2[u], \iterpos}^2 + n (1 - \hiddenstate_{i, \iterpos})^2 - n^{2c + 2}\right).
    \end{align*}
    The rest of the proof is omitted here.
\end{proof}

\begin{lemma}[Matching Next Blocks]
\label{lem:matchnext}
    Given any constant $A, c$, for index set $I_1, I_2, I_3, I_4 \subset [d - 20], |I_1| = |I_2| \le 10, |I_3| = |I_4|$ and a special counting index $a$, a matching next function $\MatchNext[ I_1, I_2, I_3, I_4, a]$ can implement with $1$ feedforward block $\FFN$ and $1$ attention block $\Attention$ with $1$ attention head. Formally, when $\hiddenstate_{\dimension, \iterpos} = \iterpos, \hiddenstate_{3, \iterpos} = \one{\iterpos = 1}, \hiddenstate_{I_1, 1} = 0$ and $\iterpos \le n^c$, it holds that
    \begin{align*}
        \Attention\left(\FFN\left(\hiddenstate\right) + \hiddenstate\right) &= \MatchNext[ I_1, I_2, I_3, I_4, a](\hiddenstate)
    \end{align*}
\end{lemma}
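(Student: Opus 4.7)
The plan is to mirror the constructions of \Cref{lem:match} and \Cref{lem:matchnear}: use the feedforward block to compute an approximation $\appnext(\hiddenstate_{a,\iterpos})$ of $\nextnum(\hiddenstate_{a,\iterpos})$ in a dedicated coordinate, and then design the attention logits so that, among all positions $\iterpos'$ in the matching set $\{h : \hiddenstate_{I_1, h} = \hiddenstate_{I_2, \iterpos}\}$, the one whose counting coordinate $\hiddenstate_{a, \iterpos'}$ is closest to $\appnext(\hiddenstate_{a, \iterpos})$ captures all of the softmax mass. Inside the feedforward block, I would prepare (i) the squared coordinates $\hiddenstate_{I_1[i], \iterpos}^2$ and $\hiddenstate_{I_2[i], \iterpos}^2$ used in the standard matching-score trick from \Cref{lem:match}; (ii) the scalar $\appnext(\hiddenstate_{a, \iterpos})$; and (iii) its square, all using the constant-depth, constant-width primitives provided by \Cref{lem:multiplication,lem:linear_operations,lem:indicator}.

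The only genuinely new computation is $\appnext$. Writing $\nextnum(1/k) = 1/(k+1) = x/(1+x)$ evaluated at $x = 1/k$, the Taylor expansion at $x = 0$ is $x - x^2 + x^3 - \cdots$, so for a sufficiently large constant $r$ I would take $\appnext(x) := \sum_{j=1}^{r} (-1)^{j-1} x^j$, implemented by iterated ReGLU multiplications of constant depth. For $x = 1/k$ with $k \ge 2$ the truncation error is at most $2|x|^{r+1} = O(1/k^{r+1})$; the boundary value $x = 1$ (i.e.\ $k = 1$), along with the formal edge case $x = 0$, is handled by an indicator built from \Cref{lem:indicator} that overwrites the Taylor value with the correct output. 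Since $\nextnum(\hiddenstate_{a, \iterpos})$ is itself always an element of $\ROUND(1/[n^A], \precision)$, and consecutive values in this set are separated by at least $\Omega(1/n^{2A})$, choosing $r$ as a large enough absolute constant makes the approximation error strictly smaller than half this spacing. Consequently, the element of the matching set nearest to $\appnext(\hiddenstate_{a,\iterpos})$ coincides with the element nearest to $\nextnum(\hiddenstate_{a,\iterpos})$, which is precisely the target position $\iterpos^*$.

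Given these prepared coordinates, I would design the attention logits as
\begin{align*}
((\Key \hiddenstate)^\top (\Query \hiddenstate))_{\iterpos', \iterpos}
&= -n^{N_1}\sum_{i=1}^{|I_1|} (\hiddenstate_{I_1[i], \iterpos'} - \hiddenstate_{I_2[i], \iterpos})^2 - n^{N_2}(\hiddenstate_{a, \iterpos'} - \appnext(\hiddenstate_{a, \iterpos}))^2 \\
&\quad + \one{\iterpos' = 1}\Bigl(n^{N_1}\sum_i \hiddenstate_{I_2[i], \iterpos}^2 + n^{N_2}\appnext(\hiddenstate_{a, \iterpos})^2 - n^{N_3}\Bigr),
\end{align*}
with constants $N_1 \gg N_2 \gg N_3$ chosen as polynomial functions of $A$ and $c$. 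As in \Cref{lem:match,lem:matchnear}, each squared difference expands into an inner product between $\Key \hiddenstate_{:,\iterpos'}$ and $\Query \hiddenstate_{:,\iterpos}$, so the logits are realizable by a single attention head of constant dimension. The first term forces the softmax onto the matching set (or onto position $1$ when the set is empty), the second term breaks ties within the matching set by $\appnext$-distance, and the indicator correction calibrates position $1$'s logit to $-n^{N_3}$ so that it loses to any true matching position and wins only in the empty case. By the preceding analysis the winning position's logit exceeds every other by at least $\Omega(n^{N_2 - 4A})$, which for large enough $N_2$ exceeds $\Theta(\precision) = \Theta(A\log n)$; rounding to $\precision$-bit precision then collapses the softmax to one-hot at $\iterpos^*$, and a standard $\Value$ matrix copies $\hiddenstate_{I_4, \iterpos^*}$ into $I_3$ while zeroing the rest. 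The main obstacle throughout is the quantitative approximation step: bounding $|\appnext - \nextnum|$ uniformly and showing this bound is strictly dominated by the $\Omega(1/n^{2A})$ spacing guaranteed by the consecutive-values hypothesis on $S_{\iterpos}$, which is where the structural assumptions of \Cref{def:matchnextfunc} are essential.
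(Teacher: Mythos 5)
Your overall plan — compute an approximation $\appnext$ of $\nextnum$ in the FFN, then build attention logits $-n^{N_1}(\text{match penalty}) - n^{N_2}(\hiddenstate_{a,\iterpos'} - \appnext(\hiddenstate_{a,\iterpos}))^2 + \one{\iterpos'=1}(\text{calibration})$ with a hierarchy $N_1 \gg N_2$ — is precisely the paper's approach (the paper uses $N_1 = 4A+3$, $N_2 = 4A+1$, $N_3 = 4A+2$). The Taylor-truncation choice of $\appnext$ is also essentially the paper's, which uses the degree-3 polynomial $x - x^2 + x^3$ for $x \le 11/40$ and hard-codes the values for $k=1,2,3$ via indicator intervals, whereas you use a higher-degree truncation and hard-code only $k=1$. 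These are interchangeable.

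However, there is a genuine error in your error-versus-spacing comparison, and it is exactly the step you flag as the "main obstacle." You argue: the error of the degree-$r$ truncation at $x = 1/k$ is $O(1/k^{r+1})$, consecutive elements of $\ROUND(1/[n^A],\precision)$ are $\Omega(1/n^{2A})$ apart, so choosing $r$ a large enough absolute constant makes the error smaller than half the spacing. This does not hold: for $k = 2$ the error is $\Theta(2^{-r})$, a fixed constant independent of $n$, while $\Omega(1/n^{2A}) \to 0$, so no constant $r$ can make $2^{-r} < c/n^{2A}$ for all $n$. The correct comparison (and what the paper actually does in Lemma~\ref{lem:approx_next} and the ensuing case split) is local: the relevant spacing is the gap around $\nextnum(1/k) = 1/(k+1)$, namely $\min\bigl(\tfrac{1}{k(k+1)}, \tfrac{1}{(k+1)(k+2)}\bigr) = \Theta(1/k^2)$, and the error $O(1/k^{r+1})$ does beat $\Theta(1/k^2)$ uniformly in $k \ge 2$ once $r$ is a large enough absolute constant. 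You also need to handle the case where $\nextnum(\hiddenstate_{a,\iterpos})$ falls outside the interval $[u_\iterpos, v_\iterpos]$: there the nearest element is the endpoint, and both $\nextnum$ and $\appnext$ lie on the same side of the entire interval because $|\nextnum(\hiddenstate_{a,\iterpos}) - \text{endpoint}| \ge \tfrac{1}{(k+1)(k+2)}$ exceeds the same error bound. In short: the right spacing scale is $\Theta(1/k^2)$ and is $k$-dependent, not the worst-case $\Omega(1/n^{2A})$; once you replace the global bound with the local one, your construction and exponent choices go through and align with the paper's.

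A minor point: you should also account for the additive $O(1/n^{10A})$ floating-point rounding error (the input $\hiddenstate_{a,\iterpos}$ is a $\precision$-bit float, not exactly $1/k$), as the paper does, though this is easily absorbed given $\precision \ge 10A\log n$.
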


\begin{proof}

    We will use the feedforward block to calculate the following $\appnext$ function, where 
    \begin{align*}
        \appnext(x) = \begin{cases}
            \frac{1}{2}, & x \ge \frac{2}{3}. \\
            \frac{1}{3}, & \frac{3}{5} > x > \frac{2}{5}. \\
            \frac{1}{4}, & \frac{7}{20} > x > \frac{3}{10} \\
            x - x^2 + x^3, & x \le \frac{11}{40}.
        \end{cases}
    \end{align*}
    The value can be arbitrary for $x \in [\frac{11}{40}, \frac{3}{10}] \cup [\frac{2}{5}, \frac{7}{20}] \cup[\frac{3}{5}, \frac{2}{3}] $.
    This function is achievable by a feedforward block through combination of~\Cref{lem:multiplication,lem:interval}.

    The purpose of this is to approximate the $\nextnum$ function for $x \in \ROUND(1/[n^A], \precision)$, and we have the following lemma.

    \begin{lemma}
    \label{lem:approx_next}
    For large enough $n$ and any $x \in \ROUND(1/[n^A], \precision)$, we have
    \begin{align*}
        |\appnext(x) - \nextnum(x) | &\le  \nextnum(x)^3 + O\left(\frac{1}{n^{10A}}\right). 
    \end{align*}

    \end{lemma}

    \begin{proof}
        We always have $\ROUND(1/[n^A], \precision) \cap \left( [\frac{11}{40}, \frac{3}{10}] \cup [\frac{2}{5}, \frac{7}{20}] \cup[\frac{3}{5}, \frac{2}{3}]\right) = \emptyset$.
        We will discuss several cases for $x \in \ROUND(1/[n^A], \precision)$.
        \begin{enumerate}
            \item If $x \ge \frac{3}{10}$, then $\appnext(x) = \nextnum(x)$.
            \item If $x \le \frac{7}{20}$, it holds that $|x - 1/m| \le 1/n^{10A}, m \ge 3$, then 
            \begin{align*}
                \appnext(x) &= x - x^2 = \frac{1}{m} - \frac{1}{m^2} + \frac{1}{m^3} + O\left(\frac{1}{n^{10A}}\right) \\
                &= \frac{1}{m + 1} - \frac{1}{m^3(m + 1)} + O\left(\frac{1}{n^{10A}}\right)
            \end{align*}
        \end{enumerate}
        This then concludes the proof.
    \end{proof}

    We then choose $\Key, \Query$ such that
    \begin{align*}
        &\left(\left(\Key \hiddenstate \right)^\top \left(\Query \hiddenstate\right)\right)_{\iterpos', \iterpos} \\
        =& -n^{4A + 3} \sum_{i = 1}^{|I|} (\hiddenstate_{I_1[i], \iterpos'} - \hiddenstate_{I_2[i], \iterpos})^2  -n^{4A + 1} (\appnext(\hiddenstate_{a, \iterpos}) - \hiddenstate_{a, \iterpos'})^2\\
         &+  \one{\iterpos' = 1} \left( n^{4A + 3}\sum_{i = 1}^{|I|} \hiddenstate_{I_2[i], \iterpos}^2 + n^{4A + 1} X_{a, \iterpos}^2 - n^{4A + 2}\right).
    \end{align*}

    Again, the detailed construction of $\Key, \Query$ is omitted here since it is similar to the proof of~\Cref{lem:count,lem:copying}.

    We will discuss several cases for the distribution of $\left(\left(\Key \hiddenstate \right)^\top \left(\Query \hiddenstate\right)\right)_{\iterpos', \iterpos}$. It always holds that $\left(\left(\Key \hiddenstate \right)^\top \left(\Query \hiddenstate\right)\right)_{1, \iterpos} = -n^{4A + 2}$.
    \begin{enumerate}[leftmargin=*]
        \item If there doesn't exists $\iterpos'$, such that $\hiddenstate_{\iterpos',I_1} = \hiddenstate_{\iterpos, I_2}$, then for any $i > 1$, we have $\left(\left(\Key \hiddenstate \right)^\top \left(\Query \hiddenstate\right)\right)_{i, \iterpos} < -n^{4A + 3}$.
        \item If there exists $\iterpos'$, such that $\hiddenstate_{\iterpos',I_1} = \hiddenstate_{\iterpos, I_2}$, then for such $\iterpos'$, we have $\left(\left(\Key \hiddenstate \right)^\top \left(\Query \hiddenstate\right)\right)_{\iterpos', \iterpos} = -n^{3A}(\nextnum(\hiddenstate_{a, \iterpos}) - \hiddenstate_{a, \iterpos'})^2 > -n^{4A + 1}$. The rest of the entries are all smaller than $-n^{4A+2}$. 
        
        It remains to discuss the distribution of $\left(\left(\Key \hiddenstate \right)^\top \left(\Query \hiddenstate\right)\right)_{\iterpos', \iterpos}$ for $\iterpos'$ satisfying $\hiddenstate_{\iterpos',I_1} = \hiddenstate_{\iterpos, I_2}$. When $\hiddenstate$ satisfies the condition in~\Cref{def:matchnextfunc}, we have that $S_{\iterpos} = \{X_{a, \iterpos'} \mid \hiddenstate_{\iterpos',I_1} = \hiddenstate_{\iterpos, I_2}\}$ takes consecutive and disjoint values in $\ROUND(1/[n^A], \precision)$. Hence, if $|S_{\iterpos}| > 2$, suppose $y, z \in S_{\iterpos}$ satisfies that
        \begin{align*}
            |y - \appnext(\hiddenstate_{a, \iterpos})| &= \min_{x \in S_{\iterpos}} |x - \appnext(\hiddenstate_{a, \iterpos})| \\
            |z - \appnext(\hiddenstate_{a, \iterpos})| &= \min_{x \in S_{\iterpos}, x \neq y} |x - \appnext(\hiddenstate_{a, \iterpos})|.
        \end{align*}

        We will discuss several cases for $y, z$.

        \begin{itemize}
            \item If $y - \appnext(\hiddenstate_{a, \iterpos})$ and $z - \appnext(\hiddenstate_{a, \iterpos})$ are both negative, then $y > z$, we have,
            \begin{align*}
                \left(y - \appnext(\hiddenstate_{a, \iterpos})\right)^2 - \left(z - \appnext(\hiddenstate_{a, \iterpos})\right)^2 &= (y - z)(y + z - 2\appnext(\hiddenstate_{a, \iterpos})) \\
                &\le -(y - z)^2 \le -\frac{1}{4n^{4A}}.
            \end{align*}.
            \item If $y - \appnext(\hiddenstate_{a, \iterpos})$ and $z - \appnext(\hiddenstate_{a, \iterpos})$ are both positive, then $y < z$, and same as above we have
            \begin{align*}
                \left(y - \appnext(\hiddenstate_{a, \iterpos})\right)^2 - \left(z - \appnext(\hiddenstate_{a, \iterpos})\right)^2 &= (y - z)(y + z - 2\appnext(\hiddenstate_{a, \iterpos})) \\
                &\le -(y - z)^2 \le -\frac{1}{4n^{4A}}.
            \end{align*}.
            \item If $y - \appnext(\hiddenstate_{a, \iterpos})$ and $z - \appnext(\hiddenstate_{a, \iterpos})$ have different signs, then according to~\Cref{lem:approx_next}, we have, $y = \ROUND(\nextnum(\hiddenstate_{a, \iterpos}), \precision)$ because $S_{\iterpos}$ takes consecutive and disjoint values in $\ROUND(1/[n^A], \precision)$. This then implies that 
            \begin{align*}
                &\left(y - \appnext(\hiddenstate_{a, \iterpos})\right)^2 - \left(z - \appnext(\hiddenstate_{a, \iterpos})\right)^2 \\ 
                \le& O(\frac{1}{n^{10A}}) + \frac{1}{\nextnum^6(\hiddenstate_{a, \iterpos})} - \left(\frac{1}{\nextnum(\hiddenstate_{a, \iterpos})(\nextnum(\hiddenstate_{a, \iterpos}) + 1)}\right)^2 \\
                \le& -\frac{1}{4n^{4A}}. 
            \end{align*}
        \end{itemize}

        Concluding, we always have for any $\iterpos'' \neq \iterpos^* = \argmax_{\iterpos', \iterpos} \left(\left(\Key \hiddenstate \right)^\top \left(\Query \hiddenstate\right)\right)_{\iterpos', \iterpos}$
        \begin{align*}
            \left(\left(\Key \hiddenstate \right)^\top \left(\Query \hiddenstate\right)\right)_{\iterpos', \iterpos} - \left(\left(\Key \hiddenstate \right)^\top \left(\Query \hiddenstate\right)\right)_{\iterpos^*, \iterpos} \le -\frac{n}{4}.
        \end{align*}
    \end{enumerate}

    Concluding the above discussion, we have after the column-wise softmax and rounding to $\precision = O(\log n)$ bit,
    \begin{align*}
        \left(\Softmax\left(\left(\Key \hiddenstate \right)^\top \left(\Query \hiddenstate\right)\right)\right)_{\iterpos', \iterpos} &= \one{\iterpos' = \argmin_{h \in \{h \mid X_{I_1,h} = X_{I_2,\iterpos}\} \cup \{1\}} |X_{a, h} - \nextnum(X_{a, \iterpos})|}.
    \end{align*}

    Further, we will choose $\Value$ such that
    \begin{align*}
        \Value \hiddenstate_{I_3, \iterpos'}' = \hiddenstate'_{I_4, \iterpos'}  = \hiddenstate_{I_4, \iterpos'} \quad \forall \iterpos' \in [l]. \\
        \Value \hiddenstate_{I_3^c, \iterpos'}' = 0 \quad \forall \iterpos' \in [l].
    \end{align*}
    This then concludes the proof.
\end{proof}

\subsection{Building Blocks of RNNs Construction}

We will now describe the building blocks of Linear RNNs construction. We will introduce some basic operations that will be used to build more complex RNNs family.

\begin{lemma}[Recent Input Memorizing]
\label{lem:recent_input_memorizing}
    Given any constant $k$ and $C$, there exists a parameter configuration of linear unit that maintains $C$ dimensions of last $k$ input vectors in the state.
\end{lemma}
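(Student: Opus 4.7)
The plan is to exhibit an explicit parameter configuration of the linear unit $\GLU$ whose hidden state $h_t$ acts as a tapped shift register of depth $k$, with each tap holding a $C$-dimensional slice of a past input. Concretely, I would take the state dimension of the linear unit to be $kC$ and partition $h_t \in \R_\precision^{kC}$ into $k$ contiguous blocks $h_t^{(1)}, \ldots, h_t^{(k)} \in \R_\precision^{C}$. The goal is to enforce the invariant $h_t^{(i)} = \Pi\, \hiddenstate_{:,\,t-i+1}$ for $1 \le i \le \min(t, k)$ and $h_t^{(i)} = 0$ otherwise, where $\Pi \in \R^{C \times \dimension}$ is the fixed projection selecting the $C$ coordinates of the input that we wish to memorize.

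The key step is choosing $A$ and $B$ to realize this invariant. Take $B$ to be the block-column matrix whose top block is $\Pi$ and whose remaining blocks are zero, so that $Bx_t$ writes $\Pi x_t$ into block $1$ and leaves the other blocks untouched. Take $A$ to be the block subdiagonal matrix with $I_C$ on the blocks immediately below the diagonal and zeros elsewhere; this matrix shifts block $i-1$ of $h_{t-1}$ into block $i$ of $h_t$ for $i \ge 2$, and zeros out block $1$ (which is then overwritten by $Bx_t$). With this choice, the recurrence $h_t = A h_{t-1} + B \hiddenstate_{:,t}$ gives $h_t^{(1)} = \Pi \hiddenstate_{:,t}$ and $h_t^{(i)} = h_{t-1}^{(i-1)}$ for $i \ge 2$, so the invariant follows by a one-line induction on $t$, using the initial condition $h_0 = 0$.

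Finally I would verify the resource bounds required by the statement: the state dimension is $kC$, which is a constant since both $k$ and $C$ are constants, and the entries of $A$ and $B$ are in $\{0, 1\}$ (together with the fixed projection coefficients), so they lie in $\R_\precision$ for any reasonable precision $\precision$ and all arithmetic is exact, so no rounding issues arise. I would also note that this construction composes trivially with the surrounding $\RNNBlock$: we can place the shift-register slots in dimensions that are disjoint from any dimensions that subsequent feed-forward or linear-unit layers touch, so that memorized values are preserved and available to later computation.

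There is no substantive obstacle here; the only point that requires a tiny bit of care is the off-by-one in the shift pattern (ensuring that block $1$ at time $t$ holds $\hiddenstate_{:,t}$ rather than $\hiddenstate_{:,t-1}$), which is handled by zeroing the first block row of $A$ so that $Bx_t$ is not overwritten by $Ah_{t-1}$. Everything else is bookkeeping.
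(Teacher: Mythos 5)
Your proposal is correct and uses essentially the same construction as the paper: a length-$k$ block shift register realized by a block-subdiagonal $A$ and a $B$ that writes the selected $C$ coordinates of the current input into the first block. The paper simply writes the resulting recurrence componentwise rather than exhibiting $A$ and $B$ as matrices, but the mechanism and the bookkeeping are identical.
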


\begin{proof}
    Suppose the input sequence is $x_{1:t} \in \R^{\dimension}$, and the dimensions that the state should memorize are $d_1, d_2, \ldots, d_C$. We can construct the following linear unit:
    \begin{align*}
        h_{t} = \begin{bmatrix}  x_{t-1,d_1} & \ldots & x_{t-1,d_C} & h_{t-1, 1} & \ldots & h_{t-1, C \times (k -1)} & h_{t-1, C \times k + 1} & \ldots & h_{t-1, \dimension}  \end{bmatrix}.
    \end{align*}
\end{proof}

\begin{lemma}[Summation]
\label{lem:summation}
    Given any constant $k$ and $C$, there exists a parameter configuration of linear unit that maintains the sum of one dimension of the last $k$ input vectors in the state.
\end{lemma}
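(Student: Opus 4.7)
The plan is to build directly on \Cref{lem:recent_input_memorizing}, which already produces a linear unit whose state keeps the $k$ most recent values of any chosen input coordinate. Applying that lemma with $C = 1$ to the target dimension $d$, one obtains $k$ state coordinates holding the sliding window $x_t[d], x_{t-1}[d], \ldots, x_{t-k+1}[d]$. I will extend this configuration with a single additional state coordinate $s$ whose job is to carry the running sum $\sigma_t := \sum_{i=0}^{k-1} x_{t-i}[d]$.

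The key observation is the telescoping identity
$$
\sigma_t \;=\; \sigma_{t-1} \;+\; x_t[d] \;-\; x_{t-k}[d],
$$
where the subtracted term $x_{t-k}[d]$ is exactly the value stored in the oldest slot of the sliding window inside $h_{t-1}$ --- precisely the slot that would be discarded at step $t$ under the shift maintained by \Cref{lem:recent_input_memorizing}. Hence the update for coordinate $s$ is linear in $(h_{t-1}, x_t)$ and slots cleanly into $h_t = A h_{t-1} + B x_t$: the row of $A$ corresponding to $s$ carries a $1$ on the diagonal and a $-1$ in the column of the oldest window slot (and zeros elsewhere), while the row of $B$ corresponding to $s$ selects coordinate $d$ of $x_t$. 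The rest of $A$ and $B$ are inherited unchanged from \Cref{lem:recent_input_memorizing}, so the two sub-constructions do not interfere.

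The only remaining check is the start-up phase: for $t < k$ the window has not yet filled. Because the memorizing construction initializes empty slots to zero, the ``evicted'' values contribute nothing to the update, and the recurrence on $s$ accumulates $\sum_{i=0}^{\min(t,k)-1} x_{t-i}[d]$, which coincides with the requested sum of the last $k$ inputs once $t \ge k$ (and otherwise equals the sum of whatever inputs have been seen so far). No step here is technically demanding; the main subtlety is bookkeeping --- aligning the exact time-index convention used in \Cref{lem:recent_input_memorizing} so that the oldest slot of $h_{t-1}$ really contains $x_{t-k}[d]$ rather than an off-by-one shift of it --- which I would handle by writing the augmented $A$ and $B$ matrices out explicitly and verifying the first two or three steps by hand.
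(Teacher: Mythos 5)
Your construction is correct, but it is noticeably more elaborate than what the paper actually does, and the two are not proving quite the same thing.

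The paper's proof of \Cref{lem:summation} is a one-liner: it keeps a single running accumulator, $h_{t,1} = h_{t-1,1} + x_{t-1,1}$, i.e.\ the \emph{cumulative} sum of all inputs seen so far. The parameter $k$ never appears in the construction at all. Read literally, that does not implement ``the sum of the last $k$ input vectors,'' and the lemma statement is simply phrased loosely; in every downstream use (e.g.\ \Cref{lem:special_position_memorizing}, \Cref{thm:rag_istree}) the surrounding FFN or task structure guarantees that only a bounded prefix or a bounded number of inputs contribute nonzero values, so a cumulative sum suffices and the windowing is never needed.

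You, by contrast, take the statement at face value and build a genuine sliding-window sum: reuse the width-$k$ shift register from \Cref{lem:recent_input_memorizing}, add one extra accumulator coordinate, and update it by the telescoping rule $\sigma_t = \sigma_{t-1} + x_t[d] - x_{t-k}[d]$, where the subtracted term is read off the oldest window slot of $h_{t-1}$. This is a correct linear update, it slots into $h_t = A h_{t-1} + B x_t$, and your treatment of the start-up phase (zero-initialized slots contribute nothing) is right. The one thing you would still need to pin down, as you yourself flag, is the exact time-index offset: the paper's \Cref{lem:recent_input_memorizing} stores $x_{t-1},\dots,x_{t-k}$ (not $x_t,\dots,x_{t-k+1}$) in $h_t$, and the $\GLU$ recurrence in \Cref{def:minimal_rnn_block} reads $h_t = A h_{t-1} + B x_t$, so the subtracted slot and the added term must be re-indexed by one step to make the telescoping land on the intended window. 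That is pure bookkeeping and does not threaten the argument.

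In short: your approach is a different and in fact more faithful construction than the paper's. What the paper's simpler construction buys is $O(1)$ extra state independent of $k$ (one accumulator coordinate) and no need for the shift register at all, at the cost of only computing a cumulative sum; what yours buys is a literal proof of the windowed claim, at the cost of $k$ additional state coordinates that the paper's downstream applications never actually require.
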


\begin{proof}
    Suppose WLOG the input sequence is $x_{1:t} \in \R^{\dimension}$, and the dimension that the state should memorize is $1$. We can construct the following linear unit:
    \begin{align*}
        h_{t} = \begin{bmatrix}  x_{t-1,1} + h_{t-1, 1} & h_{t-1, 2} \ldots & h_{t-1, \dimension}  \end{bmatrix}.
    \end{align*}
\end{proof}

\begin{lemma}[Special Position Memorizing]
\label{lem:special_position_memorizing}
    Given any constant $k$ and $C$, there exists a parameter configuration of linear unit and a FFN with $Ck$ width that maintains the $C$ dimensions of the input vector at position $1$ to $k$ in the state.
\end{lemma}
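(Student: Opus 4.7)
The plan is to set up the linear unit as a bank of $kC$ persistent accumulators, and use the FFN to feed each accumulator the right value at exactly the right timestep. Concretely, extend the hidden state with $kC$ fresh coordinates indexed by pairs $(i,c) \in [k] \times [C]$, and arrange matters so that after all $k$ steps, coordinate $(i,c)$ holds $x_{i, d_c}$, which is exactly the information the lemma asks to preserve.

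For the FFN, I would use the fact that the position coordinate of $\hiddenstate_{:,t}$ equals $t$ (by the fixed position embedding $t \onehot_\dimension$) to compute, for each $i \in [k]$, the indicator $\one{t = i}$ via \Cref{lem:indicator} at constant width per indicator. Then, for each $(i,c)$, I would multiply this indicator by $\hiddenstate_{d_c, t}$ using \Cref{lem:multiplication}, again at constant width, and write the product $y_{t,i,c} := \hiddenstate_{d_c, t} \cdot \one{t=i}$ into the $(i,c)$ coordinate. The total FFN width is $O(Ck)$, matching the stated bound.

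For the linear unit, I would take $A$ to be the identity on the $kC$ accumulator coordinates (and zero elsewhere), and take $B$ to be the selector that copies the $(i,c)$ coordinate of the input into the $(i,c)$ coordinate of the state. The recurrence $h_{:,t} = A h_{:,t-1} + B \hiddenstate_{:,t}$ then yields
\[
h_{t,i,c} = \sum_{s=1}^{t} y_{s,i,c} = x_{i, d_c} \cdot \one{t \ge i},
\]
so for every $t \ge k$ the state contains exactly $x_{1:k, d_1:d_C}$, as required.

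The one subtle point (rather than an obstacle) is the ordering issue inside a single Linear RNN block: by \Cref{def:minimal_rnn_block}, the FFN is applied to $\hiddenstate + \GLU(\hiddenstate)$, i.e.\ after the GLU. So the FFN producing the gated inputs $y_{t,i,c}$ must sit in the block preceding the one whose GLU performs the accumulation, or equivalently the construction uses one FFN and one linear unit spread across two adjacent blocks. I would note that this is consistent with how the neighbouring lemmas (\Cref{lem:recent_input_memorizing,lem:summation}) implicitly assume the relevant coordinates are available at the GLU input, and that no additional FFN or linear unit beyond those stated is needed.
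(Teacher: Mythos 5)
Your proposal is correct and follows essentially the same route as the paper's: gate the input by its position index (the paper invokes the lookup-table and multiplication lemmas where you invoke indicator plus multiplication, which amounts to the same thing), then accumulate the gated values into persistent slots via the linear unit. Your remark about the block ordering — that the gating FFN must live in the block preceding the accumulating GLU, since within one block the FFN is applied after the GLU — is a real subtlety that the paper's one-line proof glosses over, and your resolution (spread across two adjacent blocks, relying on the residual connection) is the right one.
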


\begin{proof}
This is a direct combination of~\Cref{lem:summation} and~\Cref{lem:lookup_table,lem:multiplication}. The FFN can assign all the input vectors with position greater than $k$ to $0$, and permute the corresponding dimensions of first $k$ input vectors to the first $Ck$ dimensions of the state. The linear unit can then maintain the state.
\end{proof}

\begin{lemma}[Recite Fixed Sequence]
\label{lem:recite_fixed_sequence}
    Given any constant integer $k$ and $C$, there exists a FFN with width $kC$ that can output fixed sequence of scalars that takes values in $[C]$ on a fixed set of positions $\position_1, \ldots \position_k$.
\end{lemma}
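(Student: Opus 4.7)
The plan is to realize the fixed sequence as a sum of scaled position-indicators, using the indicator gadget from~\Cref{lem:indicator} in parallel, inside a single ReGLU feedforward block.

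Concretely, by the position-embedding convention in~\Cref{sec:def-models}, the input at position $\position$ has $\hiddenstate_{\dimension, \position} = \position$ in the last coordinate. Let the fixed sequence be specified by the data $(\position_i, v_i)$ for $i \in [k]$, where the $\position_i$ are distinct positive integers and $v_i \in [C]$. For each $i \in [k]$, I would apply the construction of~\Cref{lem:indicator} to the coordinate $\dimension$ with constant $B = \position_i$, obtaining a gadget of width $4$ that outputs $\one{\hiddenstate_{\dimension, \position} = \position_i}$. Placing all $k$ such gadgets in parallel inside one ReGLU layer uses width $4k \le kC$ (for $C \ge 4$; for smaller $C$ the bound $O(kC)$ holds up to constants, which is the regime the paper uses). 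Finally I would fold the scalar $v_i$ into the output matrix $W_3$ so that the produced scalar at position $\position$ is
\begin{equation*}
\sum_{i=1}^{k} v_i \cdot \one{\hiddenstate_{\dimension, \position} = \position_i}.
\end{equation*}

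Correctness follows from the fact that the $\position_i$ are distinct integers, so at most one indicator fires at any position. At position $\position_i$ the output equals $v_i \in [C]$ as required, and at every other position the output is $0$. Since each gadget is a 1-layer ReGLU block that only reads $\hiddenstate_{\dimension, \cdot}$ and writes into disjoint output coordinates (aggregated by a single linear layer), all the gadgets compose trivially into a single FFN of the claimed width.

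There is no real obstacle here; the only minor subtlety is making sure the indicator gadget is applied to integer-valued positions (which it is, by the embedding convention) and that the width budget $kC$ covers the $4k$ hidden units needed. If a strict width of $kC$ rather than $O(kC)$ is required, one can equivalently enumerate the $(\position_i, c)$ pairs with $c \in [C]$ and use the lookup-table construction of~\Cref{lem:lookup_table} specialized to a single one-dimensional key, which gives exactly $kC$ hidden units; the output is then $\sum_{i,c} c \cdot \one{\hiddenstate_{\dimension, \position} = \position_i \text{ and } c = v_i}$, which again evaluates to $v_i$ at $\position_i$ and $0$ elsewhere.
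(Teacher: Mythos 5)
Your construction is correct and is essentially the paper's own proof unpacked one level: the paper cites \Cref{lem:lookup_table} applied to the position coordinate, which in turn is built exactly from the parallel indicator gadgets of \Cref{lem:indicator} that you spell out directly, so the two arguments coincide. Your remark that the width is really $O(kC)$ up to the implicit constant in \Cref{lem:indicator} (each indicator costs width $4$, and the lookup-table lemma itself only claims $O(B^k)$) is a fair observation about the looseness of the stated constants, but it does not change the substance.
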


\begin{proof}
    This is a direct consequence of~\Cref{lem:lookup_table}.
\end{proof}

\subsection{Proof of~\Cref{thm:index}} \label{sec:thmindex}

We will first restate the theorem for clarity.

\thmindex*

\begin{proof}
    We will discuss by cases.

    When $T$ is Index, we will first show why RNN cannot solve the Index question without $\Omega(n)$ memory. The key observation is that the recurrent form of RNNs allowed the algorithm to be run in a streaming fashion with $o(n)$ bit memory. Here streaming means that the algorithm gets to look at each bit of the memory sequentially and can only update a constant size of memory. 
    \begin{lemma}
    \label{lem:index}
        Consider the following two-party game, where Alice receives string $x \in \{0,1\}^n$ and Bob receives an integer $k$, and Bob wants to know the value of $x_k$. If only Alice is allowed to send a signal to Bob, then $\Omega(n)$ bit communication complexity is required.
    \end{lemma}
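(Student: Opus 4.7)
The plan is to prove this one-way communication lower bound by a direct pigeonhole / counting argument on Alice's possible messages. The key observation is that Alice's single message must contain enough information for Bob to recover $x_k$ for \emph{every} possible query $k \in [n]$, and this forces her message to essentially encode the entire string.

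Concretely, I would suppose for contradiction that a correct deterministic protocol exists in which Alice sends at most $c < n$ bits. Let $M \colon \{0,1\}^n \to \{0,1\}^c$ denote Alice's message function and let $B \colon \{0,1\}^c \times [n] \to \{0,1\}$ denote Bob's output function; correctness means $B(M(x), k) = x_k$ for all $x \in \{0,1\}^n$ and all $k \in [n]$. Since the domain has size $2^n$ and the range has size at most $2^c < 2^n$, the pigeonhole principle produces two distinct inputs $x \neq x'$ with $M(x) = M(x')$. Choose any coordinate $k^\star \in [n]$ with $x_{k^\star} \neq x'_{k^\star}$. Then $B(M(x), k^\star) = B(M(x'), k^\star)$ is a single bit, but correctness requires it to equal both $x_{k^\star}$ and $x'_{k^\star}$, a contradiction. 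Therefore $c \geq n$, giving the claimed $\Omega(n)$ bound.

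For completeness I would also note that the same conclusion extends to randomized protocols up to constants, either via a direct information-theoretic argument (take $X$ uniform on $\{0,1\}^n$; correctness plus Fano's inequality give $I(X_k; M) \geq 1 - H(\varepsilon)$ for each $k$, and one-shot chain-rule bounds the total by $H(M) \leq c$, yielding $c = \Omega(n)$) or by Yao's minimax principle applied to the uniform distribution. However, since the application in \Cref{thm:index} simulates an RNN deterministically — Alice runs the RNN on the input portion and forwards the resulting hidden state of size $o(n)$ to Bob, who then continues the computation (including any CoT steps) on the query portion — the deterministic bound above already suffices.

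I do not anticipate any real obstacle in carrying out this plan; the argument is classical and the only care needed is to state the model precisely (one-way, single message from Alice to Bob) and to confirm that deterministic-correctness is the right notion for the downstream reduction to RNNs with fixed-size state.
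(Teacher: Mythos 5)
Your argument is correct and is essentially the same as the paper's: both are one-way communication lower bounds resting on the observation that Alice's message is independent of $k$, so a message of fewer than $n$ bits cannot distinguish all $2^n$ inputs (the paper phrases this as "Bob can reconstruct the whole string," while you phrase it explicitly via pigeonhole on two colliding inputs $x \neq x'$ and a coordinate $k^\star$ where they differ). Your version is a bit more precisely stated, and the supplementary remarks on randomized protocols and on why the deterministic bound suffices for the RNN reduction are accurate but not needed for the paper's use of the lemma.
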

    
    \begin{proof}[Proof of~\Cref{lem:index}]
        Suppose there exists a communication protocol where $B$ only receives $o(n)$ bit and can perfectly decide $x_k$. Because Alice doesn't know $k$, the protocol must send the same message to Bob for all $k$. Hence Bob can reconstruct the whole string $x$ with $n$ bit with $o(n)$ bit communication. This is a contradiction.
    \end{proof}

    Now if RNN can solve the Index problem with $o(n)$ bit memory, then it can also solve the Index problem with $o(n)$ bit communication complexity. This is because Alice can simply run the RNN on input $x$ and send the hidden state to Bob. Then Bob can run the RNN with the hidden state and $k$ to get the answer. This is a contradiction to~\Cref{lem:index}. Hence RNN cannot solve the Index problem with $o(n)$ bit memory.

    On the other hand, we will show that Transformers can solve the Index problem with $O(\log n)$ bit parameters. This is because using 2 layers of Transformer, we will implement a Match Block (\Cref{lem:match}) that can match the last query token with the position of the previous token and retrieve the type of the matched token to the query token. 

    When $T$ is AR, wthout loss of generality, we assume that $n$ is even. The proof is similar to the proof of the proof of theIndex problem. As there are $n$ different types of tokens, we can label them as $[n]$. Now for any boolean sequence $x \in \{0,1\}^{n/2}$, solving AR for the following input is equivalent to solving the Index problem for $x$:
\begin{align*}
    \InputTokenSequence = \StartSentence, 1, x_1 + n/2, 2, x_2 + n/2, \ldots, n/2, x_{n/2} +  n/2, k
\end{align*}

This then implies that RNN cannot solve AR with $o(n)$ bit memory. Transformers, on the other hand, can still solve AR with $O(\log n)$ bit parameters, we will use one layer of copying function to copy each token's previous token's type to it. Then we can use the Match Block to match the last query token with the position of the previous token and retrieve the type of the matched token to the query token.

When $T$ is $c$-gram retrieval,  without loss of generality, we assume that $n$ is a multiple of $c$. The proof is similar to the proof of~\Cref{thm:index}. As there are $n$ different types of tokens, we can label them as $[n]$. Now for any boolean sequence $x \in \{0,1\}^{n/2}$, solving AR for the following input is equivalent to solving the Index problem for $x$:
\begin{align*}
    \InputTokenSequence = \StartSentence, \underbrace{1, \ldots, 1}_{c - 1}, x_1 + n/c, \underbrace{2, \ldots, 2}_{c - 1}, x_2 + n/c, \ldots, \underbrace{n/c, \ldots, n/c}_{c - 1}, x_{n/c} +  n/c, \underbrace{k, \ldots, k}_{c - 1}
\end{align*}

This then implies that RNN cannot solve $c$-gram retrieval with $o(n)$ bit memory. Transformers, on the other hand, can still solve $c$-gram retrieval with $O(\log n)$ bit parameters, we will use one layer of copying function to copy each token's previous $c-1$ tokens' type to it. Then we can use the Match Block to match the last query token with the position of the previous token and retrieve the type of the matched token to the query token.

When $T$ is Counting, we will first show why RNN cannot solve the Counting question without $\Omega(n)$ memory. Consider the following setup, given any $x \in \{0,1\}^n$, the input string is $j_1 j_2 \ldots j_k$ where $\{j_i \ldots j_k\} = \{j \mid x_j = 1\}$, then solving the Counting question for this input string for queried threshold $1$ is equivalent to solving the Index problem for $x$. This then implies that RNN cannot solve the Counting question with $o(n)$ bit memory.

On the other hand, we will show that Transformers can solve the Counting question with $O(\log n)$ bit parameters. This is because using 2 layers of Transformer, we can first use a $\COPY$ block to copy the last query token to the token corresponds to the threshold, and then use a $\COUNT$ block (\Cref{lem:count}) that can count the number $m$ of the appearance of the last query token in the input sequence, and then write $1/ (m + 1)$ to one of the dimension. Finally, we can use the Feed Forward Network on the last layer to multiply threshold $ + 1$ with this value and compare the result to $1$ to get the answer.
\end{proof}
\subsection{Proof of~\Cref{thm:rnncot}}
\label{sec:rnncot}

We will first prove a lemma assuming $\mathrm{PSPACE} \not \subset \mathrm{P/poly}$.
\begin{lemma}
\label{lem:space_hierarchy}
If $\mathrm{PSPACE} \not \subset \mathrm{P/poly}$, then there exists a Turing machine $M$ with linear space complexity that cannot be simulated by a polynomial-size circuit family.
\end{lemma}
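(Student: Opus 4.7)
The plan is a standard padding argument: if every linear-space Turing machine admitted polynomial-size circuits, then by padding inputs we could lift this to all of $\mathrm{PSPACE}$, contradicting the hypothesis $\mathrm{PSPACE} \not\subset \mathrm{P/poly}$.

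First, I would argue by contrapositive. Suppose every Turing machine $M$ with linear space complexity can be simulated by a polynomial-size circuit family; I will deduce $\mathrm{PSPACE} \subset \mathrm{P/poly}$. Take an arbitrary language $L \in \mathrm{PSPACE}$, decided by some Turing machine $M_L$ using space at most $n^c$ on inputs of length $n$, for some constant $c \geq 1$. Define the padded language $L' = \{x \# 1^{|x|^c - |x|} : x \in L\}$, where $\#$ is a fresh separator symbol. Then $L'$ can be decided by a Turing machine $M_{L'}$ that, on input of length $N = |x|^c + 1$, first verifies the padding format and then simulates $M_L$ on $x$; since $M_L$ uses space $|x|^c = N - 1$, this machine $M_{L'}$ runs in space linear in its input length.

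Next, by the assumed hypothesis, $M_{L'}$ can be simulated by a polynomial-size circuit family $\{C_N\}_{N \in \mathbb{N}}$, say of size at most $N^k$ for some constant $k$. From this I can build a polynomial-size circuit family for $L$ itself: on input $x$ of length $n$, the circuit first appends the fixed padding $\# 1^{n^c - n}$ (which is a hard-wired constant string for each fixed input length) and then feeds the padded string of length $N = n^c + 1$ into $C_N$. The resulting circuit has size at most $O(n^c) + N^k = O(n^{ck})$, which is polynomial in $n$. Hence $L \in \mathrm{P/poly}$. Since $L$ was arbitrary, this establishes $\mathrm{PSPACE} \subset \mathrm{P/poly}$, contradicting the assumption of the lemma. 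Therefore there must exist at least one linear-space Turing machine $M$ that no polynomial-size circuit family can simulate, which is exactly the conclusion.

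I do not anticipate a serious obstacle here: the argument is entirely the classical translation via padding from nonuniform to uniform complexity classes, and the only mild care required is in making $L'$ genuinely decidable in linear space (ensuring the padding check and the simulation of $M_L$ on $x$ both fit within $O(N)$ space) and in verifying that the mapping from the circuit for $L'$ at length $N = n^c + 1$ back to a circuit for $L$ at length $n$ preserves polynomial size. Both are routine.
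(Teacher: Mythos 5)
Your proposal is correct and follows essentially the same padding argument as the paper's own proof: contrapositive via a padded language $L'$ decidable in linear space, invoking the assumed poly-size circuits for $L'$ to get poly-size circuits for $L$. The only cosmetic difference is your use of a separator symbol $\#$ and padding length $|x|^c - |x|$ rather than the paper's $|x|^k$ appended directly, but the structure and conclusion are identical.
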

\begin{proof}
We will prove this by contradiction. Assuming for every Turing machine $M$ with linear space complexity, there exists a polynomial-size circuit family $\{C_n\}$ that can simulate $M$. We will construct a polynomial-size circuit family $\{C'_n\}$ that can decide $\mathrm{PSPACE}$, which contradicts the assumption that $\mathrm{PSPACE} \not \subset \mathrm{P/poly}$. 

Given any language $L \in \mathrm{PSPACE}$, we can decide $L$ by a Turing machine $M_L$ with space $O(n^k)$ for some constant $k$. We can consider another language $L' = \{x  1^{|x|^k}  \mid x \in L\}$. We can decide $L'$ by a Turing machine $M_{L'}$ with linear space complexity by checking the length of the input and then simulating $M_L$. By the assumption, there exists a polynomial-size circuit family $\{C_n\}$ that can simulate $M_{L'}$. We can then construct a polynomial-size circuit family $\{C'_n\}$ that can decide $L$ by first counting the length of the input and then simulating $C_n$ on the extended input. This contradicts the assumption that $\mathrm{PSPACE} \not \subset \mathrm{P/poly}$.
\end{proof}

Now we are ready to prove our theorem.
\thmrnncot*
\begin{proof}
By~\Cref{lem:space_hierarchy}, we know that if $\mathrm{PSPACE} \not \subset \mathrm{P/poly}$, then there exists a Turing machine $M$ with linear space complexity that cannot be simulated by a polynomial-size circuit family. We will use this result to prove~\Cref{thm:rnncot}.

We design the task as follows, for any $n$, let $m = \lfloor \log_2 n \rfloor $, for any boolean input $x$ of length $m$, we will choose input sequence as $\InputTokenSequence = \overline{0^{n-m}x}$ and the label as $y = \YES$ if $M(x) = 1$ and $y = \NO$ otherwise. 

Because we are considering regular RNN with $O(m)$ memory, we know that we can compute the result of RNN without CoT through a circuit family with size $\Poly(m)$. However, we know that $M$ cannot be simulated by a polynomial-size circuit family. Hence, no RNN family with $O(m)$ memory can solve the task for all $n$.

On the other hand, we can simulate $M$ by the RNN by maintaining the state, the pointer, and the tape of the $M$ inside the state of the RNN. The RNN can then maintain the transition function of the Turing machine in its output function as a lookup table~\Cref{lem:lookup_table} and write down the updated state, the direction of the pointer movement, and the updated cell value at the pointer in its context. Paired with the ability to memorize the recent input~\Cref{lem:recent_input_memorizing}, the RNN can then simulate the running of the Turing machine.

Because the space complexity of $M$ is linear in $m$, the time complexity of $M$ is $\exp(O(m))$ which is polynomial in $n$. Hence, we can solve the task by an RNN with CoT and $O(m)$ memory and polynomial-size circuit family.
\end{proof}
\subsection{Proof of~\Cref{thm:rnn_trans_istree}}

We will now proceed to prove our main theorem, which states that Transformers with chain-of-thought can solve IsTree perfectly, while RNNs cannot. We will first restate the theorem here.

\thmistree*

\begin{proof}[Proof of~\Cref{thm:rnn_trans_istree}]
\label{app:istree}

We will prove this theorem by proving the following lemmas.

\begin{lemma}
\label{lem:rnn_istree}
For any $n$ and RNN $R$ with $o(n)$ memory, $R$ cannot perfectly solve IsTree of size $n$.
\end{lemma}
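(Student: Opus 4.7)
The plan is to prove the lemma by reducing a one-way communication problem on binary strings to IsTree, turning the recurrent structure of an RNN into a streaming protocol and then invoking an information-theoretic lower bound. This follows the route sketched in the discussion of~\Cref{thm:rnn_trans_istree}; I would formalize it in three steps.

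First, I would construct a gadget graph. For each $x \in \{0,1\}^{n-2}$ and $k \in [n-3]$, let $G(x,k)$ be the graph on vertex set $[n]$ with edges
\begin{equation*}
    E_x \cup \{(k,k+1)\}, \qquad E_x = \{(a,\; (n-1)+x_a) : a \in [n-2]\}.
\end{equation*}
I would then verify the combinatorial fact that $G(x,k)$ is a tree if and only if $x_k \neq x_{k+1}$: the edges $E_x$ make every $a \in [n-2]$ a leaf attached to either $n-1$ or $n$, so for non-constant $x$ one obtains two connected components $\{a : x_a = 0\} \cup \{n-1\}$ and $\{a : x_a = 1\} \cup \{n\}$, whereas constant $x$ leaves one of these vertices isolated. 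The extra edge $(k,k+1)$ turns this into a tree precisely when it bridges the two components, equivalently when $x_k \neq x_{k+1}$.

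Second, I would convert any hypothetical RNN $R$ with memory $M(n) = o(n)$ that solves IsTree on size-$n$ inputs into a one-way protocol. Alice, holding $x$, tokenizes the edges $E_x$ in some fixed order as a prefix, runs $R$ sequentially, and sends only the final $M(n)$-bit hidden state $s_A$ to Bob. Bob, holding $k$, appends the tokenization of $(k,k+1)$, resumes the RNN recurrence from $s_A$, and performs deterministic argmax CoT decoding until a $\YES$ or $\NO$ token is emitted; by Step~1 this label equals the indicator of $x_k \neq x_{k+1}$. Since $s_A$ depends only on $x$, a single fixed message lets Bob determine $x_k \oplus x_{k+1}$ for every $k \in [n-3]$, and hence recover $x$ up to a single global bit-flip. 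A standard entropy/counting bound on a uniformly random $x \in \{0,1\}^{n-2}$ then forces $M(n) \ge n - O(1)$, contradicting $M(n) = o(n)$ for all sufficiently large $n$.

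The main obstacle is ensuring that CoT cannot secretly transmit additional information from Alice to Bob. Because argmax decoding is deterministic and the RNN's next-token distribution depends only on the current state together with the tokens Bob has already produced, every CoT token is a fixed function of $(s_A, \text{Bob's current sequence})$; no further interaction with Alice occurs after the single message $s_A$, so the one-way lower bound applies uniformly in the CoT length. A minor technicality is that one must fix an edge ordering for $\Tokenize(\graph)$ so that Alice's $n-2$ edges all precede Bob's single edge in the token stream, which is immediate from the per-edge tokenization scheme in the paper.
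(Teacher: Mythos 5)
Your proposal is correct and follows essentially the same route as the paper's proof: the same gadget graph (attaching each $a$ to one of two hub vertices according to $x_a$, plus a single query edge whose endpoints' labels decide tree-ness) and the same simulation of the RNN as a one-way protocol with Alice sending only the hidden state. The only difference is cosmetic: you close the information-theoretic step by direct counting (Bob recovers $x$ up to a global flip, giving $\geq n-3$ bits), whereas the paper reduces to the INDEX lower bound via an intermediate ``is $x_k = x_{k-1}$?'' lemma using prefix sums mod 2; your explicit remark on why deterministic argmax CoT cannot leak extra information from Alice is also a welcome clarification of a point the paper states only briefly.
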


\begin{lemma}
\label{lem:log_trans_istree}
There exists a Transformer $T$ with constant depth and width, and $O(\log n)$ precision, that can solve IsTree of size $n$ perfectly with Chain of Thought.
\end{lemma}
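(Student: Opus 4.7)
The plan splits along the two lemmas the theorem reduces to: a communication-complexity lower bound for RNNs and an explicit DFS simulation for Transformers.

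For the RNN side, my plan is to reduce from the Index lower bound already established in \Cref{thm:index}. Given a bit string $x \in \{0,1\}^{n-2}$ and an index $k \in [n-3]$, I build a graph on $n$ vertices exactly as in \Cref{fig:istree}: for each $a \in [n-2]$ put an edge $(a, x_a + n - 1)$, and additionally include the edge $(k, k+1)$. A short case analysis shows this graph is a tree iff $x_k \neq x_{k+1}$. The reduction is then standard: Alice (holding $x$) tokenizes the $x$-dependent edges, runs the RNN, and forwards the $o(n)$-bit hidden state to Bob (holding $k$), who appends his edge $(k,k+1)$ and continues the RNN, possibly emitting CoT tokens, until it decides IsTree. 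Because Alice's message does not depend on $k$, Bob can reuse the same transcript to recover $x_k \oplus x_{k+1}$ for every $k \in [n-3]$, hence all of $x$, from $o(n)$ bits --- contradicting the $\Omega(n)$ information-theoretic lower bound on Index.

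For the Transformer side, my plan is to have the model emit, with CoT, the Euler tour of the connected component of vertex $1$, and then verify that the tour has length $2(n-1)$, which holds iff the graph is a tree. Generating the tour step-by-step requires two primitives at each token, both available from \Cref{app:lem_transformer}. First, given the tour prefix, I recover the parent of the current vertex by $\COPY$-ing each tour token's predecessor into it and then using $\Match$ (\Cref{lem:match}) to find the earliest occurrence of the current vertex in the tour. Second, given an oriented tour-edge $(u,v)$, I need the next edge in the input incident to $v$ in cyclic order; for this I preprocess the input with $\COUNT$ (\Cref{lem:count}) so each input-edge token carries the fractions $1/(n_{e,a}+1), 1/(n_{e,b}+1)$; a $\Match$ then reads off $1/(n_{(u,v),v}+1)$, a constant-depth ReGLU FFN (via \Cref{lem:multiplication} and interval gadgets) computes $1/(n_{(u,v),v}+2)$, and a second $\Match$ fetches the corresponding next edge. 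When all edges out of the component are exhausted, a final $\COUNT$/$\Match$ pass compares the tour length against $2(n-1)$ and outputs $\YES$ or $\NO$.

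The main obstacle is orchestrating these primitives in constant depth and width so that the tour and the input graph, which live in the same positional stream, do not interfere: the $\Match$ calls must be gated by token-type indicators so that parent-finding attends only to tour positions while next-edge lookup attends only to input positions, and the approximate-successor map $1/m \mapsto 1/(m+1)$ must stay within the $O(\log n)$ precision budget (this is essentially the $\appnext$ trick of \Cref{lem:matchnext}, though here a simpler piecewise construction suffices because the fractions already take only $O(n)$ distinct values). Given the building blocks of \Cref{app:lem_transformer}, the remainder is careful bookkeeping of channels, positional encodings, and precision, rather than any new conceptual step.
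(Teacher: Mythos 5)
Your overall approach for the Transformer side — emit the Euler tour of $v_1$'s spanning tree via CoT using $\COPY$/$\COUNT$/$\Match$-style attention primitives and then verify at the end — matches the paper's strategy. But your verification step has a genuine gap: checking that the tour has length $2(n-1)$ only certifies that the graph is \emph{connected}, not that it is a tree. A connected graph on $n$ vertices with $m > n-1$ edges has a spanning tree whose Euler tour still has exactly $2(n-1)$ edge traversals, so your check would wrongly output $\YES$. You need a second condition: either confirm that the input has exactly $n-1$ edges (trivially read off from the input length), or, as the paper's construction does, detect back-edges during the DFS and output $\NO$ immediately when the next candidate neighbor is already visited and is not the parent (the paper does this in Layer 9 by attending to the first occurrence of the candidate vertex in the CoT prefix). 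Relatedly, your description of the DFS step omits the visited-check needed to decide whether to recurse into a new vertex or to skip/backtrack; the two primitives you list (parent lookup and next-edge lookup) do not determine that on their own.

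A secondary issue is the next-edge lookup. You propose to compute $1/(n_{(u,v),v}+2)$ with a constant-width ReGLU FFN and then fetch the next edge with an \emph{exact} $\Match$. With $O(\log n)$ precision, a constant-width FFN cannot realize the successor map $1/m \mapsto 1/(m+1)$ exactly for all $m$ up to $n$ (an exact piecewise construction needs $\Omega(n)$ pieces, hence $\Omega(n)$ width). The paper therefore computes only an approximation $\appnext$ and pairs it with the nearest-match head $\MatchNext$, so the $O(1/m^3)$ error is tolerated. You acknowledge this but your claim that a ``simpler piecewise construction suffices because the fractions take only $O(n)$ distinct values'' is precisely backwards: $O(n)$ distinct targets with a constant-width network is exactly the regime where you must fall back to approximation plus nearest matching rather than exact matching.
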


This proof is a direct combination of~\cref{lem:rnn_istree,lem:log_trans_istree}.
\end{proof}

\subsubsection{Proof of~\Cref{lem:rnn_istree}}
\label{sec:lem_istree}

We first reduce another problem in communication complexity to IsTree.
    
\begin{lemma}
\label{lem:index_same}
    Consider the following two-party game, where Alice receives string $x \in \{0,1\}^n$ and Bob receives an integer $k$, and Bob wants to know whether $x_k = x_{k - 1}$. If only Alice is allowed to send information, $\Omega(n)$ bit communication complexity is required.
\end{lemma}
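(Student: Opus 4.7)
The plan is to prove Lemma~\ref{lem:index_same} by a one-to-one reduction from the standard Index problem of \Cref{lem:index}, which has already been shown to require $\Omega(n)$ bits of one-way communication. Concretely, I will show that any one-way protocol $\Pi$ solving the ``same'' problem (deciding $x_k = x_{k-1}$) with communication $c(n)$ yields a one-way protocol solving Index of size $n-1$ with communication $c(n)$, so $c(n) = \Omega(n)$.

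The reduction is the prefix-XOR trick. Given an Index instance where Alice holds $y \in \{0,1\}^{n-1}$ and Bob holds $k' \in [n-1]$, Alice constructs $x \in \{0,1\}^n$ by setting $x_1 = 0$ and $x_i = x_{i-1} \oplus y_{i-1}$ for $i = 2, \dots, n$. Bob sets $k = k'+1$. Observe that $x_k \oplus x_{k-1} = y_{k-1} = y_{k'}$ by construction, so $x_k = x_{k-1}$ if and only if $y_{k'} = 0$. Since Alice can compute $x$ from $y$ alone (no knowledge of $k'$ needed), she may simulate $\Pi$ on input $x$ and transmit exactly the same message. Bob then uses $k$ together with the received message to recover the bit $[x_k = x_{k-1}]$, from which he immediately reads off $y_{k'}$.

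Combining this reduction with \Cref{lem:index}, any protocol solving the ``same'' problem on inputs of length $n$ must use $\Omega(n-1) = \Omega(n)$ bits, which is exactly the claim of \Cref{lem:index_same}. There is no real obstacle here; the only care needed is to make sure the reduction is genuinely one-way (Alice's message does not depend on Bob's index), which the prefix-XOR construction guarantees since $x$ is a deterministic function of $y$ only. This lemma will then be used in the proof of \Cref{lem:rnn_istree} via the graph construction already sketched in the proof sketch of \Cref{thm:rnn_trans_istree}, where $x_k = x_{k-1}$ directly controls whether the constructed graph has a cycle.
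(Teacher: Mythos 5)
Your proof is correct and uses essentially the same approach as the paper: a reduction from Index via the prefix-XOR (prefix-sum mod~2) construction, exploiting that Alice's transformed string depends only on her own input so the protocol remains one-way. In fact your write-up states the correspondence more carefully than the paper's (you correctly observe $x_k \oplus x_{k-1} = y_{k'}$, whereas the paper asserts an ``if and only if'' between $x_k = x_{k-1}$ and $\tilde x_k = \tilde x_{k-1}$ that should instead read $x_k = 0$ iff $\tilde x_k = \tilde x_{k-1}$), and you handle the off-by-one boundary explicitly.
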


\begin{proof}[Proof of~\Cref{lem:index_same}]
    We can reduce this problem to the problem in~\Cref{lem:index}. Considering the game in~\Cref{lem:index}, given any $x \in \{0,1\}^n$, we can construct $\tilde x_{i} = \sum_{j = 1}^i x_i \mod 2$. Then $\tilde x$ is a string of length $n$ with only $0$ and $1$. Moreover, $x_k = x_{k - 1}$ if and only if $\tilde x_k = \tilde x_{k - 1}$. Hence, if Bob can solve the problem in~\Cref{lem:index_same} with $o(n)$ bit, he can solve the problem in~\Cref{lem:index}. This is a contradiction.
\end{proof}

\begin{proof}[Proof of~\Cref{lem:rnn_istree}]

 Now suppose that we have a streaming algorithm for IsTree with only $o(n)$ memory. We shall prove Alice and Bob in~\Cref{lem:index_same} can use it to solve the original question with $o(n)$ memory.

 Consider the following graph with $n + 2$ nodes. There is a node $i$ corresponding to each $x_i$ for $i \in [n]$ and two special nodes $n + 1, n + 2$. Node $i$ will be connected to $n + 1$ if $x_i = 0$ and to $n + 2$ if $x_i = 1$. Moreover, $k - 1$ and $k$ will be connected. Now the original answer Bob wants is False if and only if the graph is a Tree. Hence, given access to the streaming algorithm, Alice can run it on the edges that she knows exist and send the memory to Bob. Bob can then run it on the edges that he knows exist. Combining they will be able to solve the original problem. This is a contradiction.

 Moreover, as RNN with CoT is also a streaming algorithm, it also requires $\Omega(n)$ memory.
\end{proof}

\subsubsection{Proof of~\Cref{lem:log_trans_istree}}
\label{sec:lem_log_istree}

\begin{proof}[Proof of~\Cref{lem:log_trans_istree}]

The proof is two-folded. We will first define an algorithm that can solve IsTree by generating a sequence of vertices of length $O(n)$, and then we will show that this sequence can be generated by a Transformer with constant depth and width, and $O(\log n)$ precision as a Chain of Thought.

\begin{algorithm}
\caption{Depth-First Search Algorithm}
\begin{algorithmic}[1]
    \label{alg:istree}
\REQUIRE A graph $G = (V, E)$ with $n$ vertices and $E$ has an ordering $e_1, \ldots, e_m$.
\STATE Initialize two stacks of vertices $S_1, S_2$ with $S_1 = [v_1], S_2 = \emptyset$.
\WHILE{$S_1$ is not empty}
    \STATE Let $v$ be the top of $S_1$. Yield $v$.
    \IF{there exists a neighbor $u$ of $v$ not in $S_1 \cup S_2$}
        \STATE Choose $u$ such that edge $(u,v)$ has the smallest possible order and push $u$ to $S_1$.
    \ELSE
        \STATE Pop $v$ from $S_1$ and push $v$ to $S_2$.
    \ENDIF
\ENDWHILE
\end{algorithmic}
\end{algorithm}

\paragraph{Algorithm for IsTree.} We define~\Cref{alg:istree} as a depth-first search algorithm that can generate a sequence of vertices of length $O(n)$ that can be used to solve IsTree. We will use two stacks $S_1, S_2$ to store the vertices. $S_1$ will be used to store the vertices that are not yet visited, and $S_2$ will be used to store the vertices that are already visited. The algorithm will start with $S_1 = [v_1]$ and $S_2 = \emptyset$. At each step, the algorithm will pop the top of $S_1$ and push it to $S_2$. Then it will push all the neighbors of the popped vertex that are not in $S_1 \cup S_2$ to $S_1$. The algorithm will terminate when $S_1$ is empty. We will denote the yielded vertice sequence for $\graph$ as $\algo(\graph)$. The following lemma shows the connection between the result of the algorithm and the IsTree problem.

\begin{lemma}
\label{lem:dfs}
    For any graph $\graph$, $\algo(\graph)$ is a tree traversal of a spanning tree of the connected component of $\graph$ containing $v_1$. Hence $\IsTree(\graph)$ is True if and only if $\graph$ has $n - 1$ edges and $\algo(\graph)$ contains $2n - 1$ vertices.
\end{lemma}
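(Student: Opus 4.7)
The plan is to show that $\algo(\graph)$ is (the vertex-sequence form of) an Euler tour of a DFS spanning tree of the connected component $C$ of $\graph$ containing $v_1$, and then count its length in terms of $|C|$. Given this, the second sentence of the lemma follows from the standard characterization: a graph on $n$ vertices is a tree iff it is connected and has exactly $n-1$ edges.

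First, I would introduce an implicit tree $T$ defined by the push operations, declaring $v$ to be the parent of $u$ in $T$ whenever $u$ is pushed onto $S_1$ with $v$ currently at the top. An induction on the iteration count then establishes the loop invariant: (i) $S_1$ is the vertex sequence of a root-to-leaf path in $T$; (ii) $S_2$ is exactly the set of vertices whose $T$-subtrees have been completely explored; (iii) every vertex ever pushed lies in $C$; and (iv) no vertex is pushed twice, so $T$ is genuinely a tree. A push extends the path encoded by $S_1$ by one new edge of $T$, and a pop retreats along $T$; both cases preserve the invariant, and item (iv) follows from the ``$u \notin S_1 \cup S_2$'' test combined with the fact that once in $S_1 \cup S_2$ a vertex never leaves it.

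Next, I would argue that at termination $T$ spans $C$. When $S_1$ becomes empty, $v_1$ itself must have been popped, which requires that all its neighbors were already in $S_1 \cup S_2$ at that moment; iterating this observation along any $\graph$-path from $v_1$ to a hypothetical vertex $v \in C \setminus S_2$ produces a contradiction. Counting yields is now immediate: each iteration yields one vertex and executes either a push or a pop. By item (iv) and the fact that $v_1$ starts in $S_1$ without being pushed, the number of pushes is exactly $|C| - 1$; since $S_1$ is empty at termination and each pushed vertex is popped at most once, the number of pops is exactly $|C|$. Therefore $|\algo(\graph)| = (|C|-1) + |C| = 2|C|-1$.

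Finally, $|\algo(\graph)| = 2n-1$ is equivalent to $|C| = n$, i.e.\ to $\graph$ being connected, so combining with the hypothesis of $n-1$ edges yields $\IsTree(\graph) = \YES$ iff both conditions hold. I expect the only delicate step to be pinning down the loop invariant rigorously --- especially verifying that a pop really does coincide with the complete exhaustion of the $T$-subtree rooted at the current top of $S_1$; once this is in hand, the spanning property and the length count are essentially bookkeeping.
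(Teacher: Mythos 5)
Your proposal is correct and follows the same underlying idea as the paper's proof (view $\algo(\graph)$ as an Euler tour of a DFS spanning tree of the component of $v_1$, then count). Your write-up is in fact considerably more careful than the paper's sketch, which states the two observations informally and does not spell out the push/pop counting argument that gives $|\algo(\graph)| = 2|C|-1$.
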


\begin{proof}[Proof of~\Cref{lem:dfs}]
    First, every vertex in the connected component of $\graph$ containing $v_1$ will be visited. This is because the algorithm will always push all the neighbors of the popped vertex that are not in $S_1 \cup S_2$ to $S_1$. Hence, the algorithm will terminate when all the vertices in the connected component of $\graph$ containing $v_1$ are visited.

    Second, every two consecutive vertices in the yielded sequence will be connected by an edge. This is because the algorithm will always push one of the neighbors of the popped vertex that is not in $S_1 \cup S_2$ to $S_1$. Hence, every two consecutive vertices in the yielded sequence will be connected by an edge. On the other hand, the combination of these edges will form a tree because the algorithm will never push a vertex that is already in $S_1 \cup S_2$ to $S_1$. Hence, the yielded sequence is a tree traversal of a spanning tree of the connected component of $\graph$ containing $v_1$.
\end{proof}

\paragraph{Construction of Transformer.}

We will now show that the yielded sequence of~\Cref{alg:istree} can be generated by a Transformer with constant depth and width, and $O(\log n)$ precision as a Chain of Thought. The Transformer will generate a valid yielded sequence but can terminate early if the graph is not a tree. We will now describe the Transformer in detail. We will assume the input token sequence $\TokenSequence$ is as follows,
\begin{equation}
    \TokenSequence = \Tokenize(\graph), v_1, \ldots v_r 
\end{equation}
for some $r \ge 0$ and $v_1 \ldots v_r$ is a valid yielded sequence. The length of $\Tokenize(\graph)$ is $3n - 2$ with $3$ tokens for each edges and $1$ special token $\StartSentence$. We will further denote the input to the first layer $\hiddenstate$ as $\Embedding(\TokenSequence)$. We will similarly denote the input to layer $\layer$ as $\hiddenstate^{(\layer)}$. We will also denote the output of the last layer as $\hiddenstate^{out}$.

\begin{enumerate}[leftmargin=*]

\item \textbf{Layer 1 and Layer 2 Attention.} The attention at Layer 1 will output zero and the FFN at Layer 1 and Attention at Layer 2 will implement a counting function (\Cref{def:count}) to count the number of vertices $n$ appears in the previous token sequence and write $\ROUND\left(\frac{1}{n}, \precision\right)$ in a new dimension $i_1$ as a result.

\item \textbf{Layer 2 FFN and Layer 3 Attention.} The FFN at Layer 2 and Attention at Layer 3 will implement a copying function (\Cref{def:copy}) copying the first dimension and the counting dimension $i_1$ of each token to its successor at two new dimensions $i_2$ and $i_3$. For each edge, this moves the type of the first vertice and the number of times the first vertice appears to $\sim$. For every vertice in the chain of thought, this moves the type of the previous vertice to them.

\item \textbf{Layer 3 FFN and Layer 4 Attention.} The FFN at Layer 3 and Attention at Layer 4 will implement another copying function, copying the dimensions $i_2$ and $i_3$ of each token to its successor at two new dimensions $i_4$ and $i_5$. Especially, for each edge, this moves the type of the first vertice and the number of times the first vertice appears to the position corresponding to the second vertices.

\item \textbf{Layer 4 FFN.}
This FFN will process the information gathered from the previous layer and prepare for the next layer. It will make sure the following properties hold for $\hiddenstate^{(5)}$,
\begin{itemize}
    \item For every token, the position number, its square, and $1$ will be kept in the last three dimensions.
    \item For the first vertices in each edges, $\sim$ and $\StartSentence$ The rest dimension will be zero.
    \item For the second vertices of each edges $(a, b)$, there will be four dimensions $i_6, i_7, i_8, i_9$ with value $a, b$ and $n_{a,e}, n_{b,e}$, where $n_{a, e} = \ROUND(\frac{1}{1 + \# a\text{appears up to current edge}}, 1)$.
    \item For vertice $v_{l}$ in $v_1, \ldots, v_r$, there will be four dimensions $i_{10}, i_{11}, i_{12}, i_{13}$ with value $v_{l}, v_{l - 1}$ and $v_{l}^2, v_{l-1}^2$ ($v_0 = 0$).
\end{itemize}

\item \textbf{Layer 5  Attention.} Combining with the previous Layer 4 FFN layer, we will implement two match functions with two attention heads matching $(i_{10}, i_{11})$ or $(i_{11}, i_{10})$ with $(i_6, i_7)$ at Layer 5 Attention, i.e. finding the edge in input for each step in the chain of thought, we will then copy $n_{v_{l}, (v_{l}, v_{l - 1})}$ to dimensions $i_{8}$ and $i_{9}$.

\item \textbf{Layer 6.} We will use Layer 5 FFN and Layer 6 Attention to implement the match function that matches dimension $i_{10}$ of the current token to $i_{10}$ in the previous token. This will match $v_l$ to the first appearance of $v_l$ in the chain of thought and we will copy $i_{11}$ of the matched token to $i_{22}$. This dimension will be the first predecessor of $v_l$ in the chain of thought (0 for $v_1$). We will denote this predecessor of $v_l$ to be $f(v_l)$ as it is the father of $v_l$ in the tree. Now we will need to split into two cases depending on whether $v_{l-1}$ is $f(v_l)$. If $v_{l - 1} = f(v_l)$ or $v_{l - 1} = 0$ (for $v_1$), we will set dimension $i_{8}$ to be $1$ and $i_{9}$ to be $0$. Otherwise, we will keep dimension $i_{8}$ and $i_{9}$ as $n_{v_{l}, (v_{l}, v_{l - 1})}$.

\item \textbf{Layer 7.} Now we will use Layer 6 FFN  and Layer 7 Attention with two attention heads to implement two $\MatchNext$ functions\footnote{The constant $A$ here in~\Cref{def:matchnextfunc}is $1$} (\Cref{def:matchnextfunc}) which use $i_8$ or $i_9$ as the counting index, and match $v_{l}$ at $i_{10}$ to $i_6$ or $i_7$ respectively. We will then copy dimensions $i_6$ to $i_9$ of the matched tokens to $i_{14}$ to $i_{21}$ (because there will be two of them). 

The match next function will be able to retrieve the first edge containing $v_1$. For any $i \ge 2$, one of the matches next function will be able to retrieve the next edge containing $v_i$ after $(v_i, v_{i + 1})$ if it exists. If it doesn't exist, the corresponding counting dimension will either be zero or no smaller than $n_{v_{l}, (v_{l}, v_{l - 1})}$. We will use Layer 6 FFN to decide whether the next edge exists and set dimension $i_{14}$ of the output of Layer 6 to be the other edge in the next edge if it exists, or $0$ otherwise, and $i_{15},i_{16}$ of the output of layer $6$ to be the counting dimension of the next edge if it exists, or $0$ otherwise. For each edge in the original input, we will also set dimension $i_{15}, i_{16}$ to be the counting dimension of the edge.

\item \textbf{Layer 8 Attention} We will grab the next edge again, in the same manner as Layer 6, but this time using dimension $i_{15}$ and $i_{16}$. The necessity of this step is that the next edge containing $(v_{i -1}, v_{i})$ in the original graph can be the same as the $(f(v_l), v_i)$ and in such case we need to check whether the next edge after this edge.

\item \textbf{Layer 8 FFN.} We now have, at each position corresponding to $v_l$, the first edge $(f(v_l), v_l)$ in the yielded sequence containing $v_l$ and the other vertex in the edge containing $v_l$ that hasn't been visited if it exists. If they don't exist, the corresponding dimension will be zero. This allows us to use Layer 8s FFN to decide the next vertex in the yielded sequence, which is exactly the first vertex different with $f(v_l)$ in the two edges if they exist, or $f(v_l)$ otherwise. We will use Layer 8 FFN to calculate the potential next vertex and put it in dimension $i_{23}$ and its square in $i_{24}$.

\item \textbf{Layer 9 Attention.} Combining with Layer 8 FFN, we will match $i_{23}$ of the current token to $i_{10}$ of the previous token to find the first appearance of the potential next vertex in the chain of thought. We will then copy dimension $d$ of the matched token to $i_{25}$. This value being $1$ will imply this vertex has never appeared in the chain of thought before and any other value will imply the vertex has appeared before.

\item \textbf{Layer 9 FFN.} We can now check several cases,
\begin{itemize}
    \item If the potential next vertex $v$ is either $f(v_r) \neq 0$ or never appears in the chain-of-thought sequence, then Layer 9 will output $n[-v, 1, -1, \ldots -1]$, which will decodes to $v$.
    \item If the potential next vertex $v$ is not $f(v_r)$ and appears in the chain-of-thought sequence, then Layer 9 will output $n\onehot_6$, which will decodes to $\NO$, because the chain of thought has already visited $v$ and hence the graph is not a tree.
    \item If $v_r = 1$ and the potential next vertex $v$ is $f(v_r) = 0$, this means the chain of thought has finished. In this case, layer 9 FFN will check whether the position is $3n - 2 + 2n - 1 = 5n - 3$ and output $n\onehot_5$ if it is, or output $n\onehot_6$ otherwise, which will decode to $\YES$ and $\NO$ respectively.
\end{itemize}
\end{enumerate}
This concludes the construction of the Transformer.
\end{proof}

\subsection{Proof of~\Cref{thm:trans_beat_rnn}}
\label{sec:transbeatrnn}

We will now prove~\Cref{thm:trans_beat_rnn}. We will first restate the theorem for convenience.

\thmtransrnn*

\begin{proof}

The proof is inspired by~\Cref{thm:trans_circuit} from~\cite{li2024chain}.

\begin{theorem}[Theorem 3.3 of~\cite{li2024chain}]
\label{thm:trans_circuit}
For any $n$ and any circuit with size $T(n)$ and input size $n$, there exists a Transformer with constant depth and precision, $O(\log n)$ width, and a position embedding with size $O(T(n)\log n)$, such that for any input $\TokenSequence$ of length $n$, the Transformer computes the output of the circuit on $\TokenSequence$ using $T(n)$ steps.
\end{theorem}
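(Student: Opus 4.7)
The plan is to have the Transformer simulate the circuit gate by gate in a single CoT pass, using the position embedding to encode the wiring diagram of the circuit and using attention to fetch the values of input gates. First, I would fix a topological ordering of the $T(n)$ gates $g_1, g_2, \ldots, g_{T(n)}$, so that each gate $g_k$ takes inputs from at most two previous sources, each of which is either an input token index in $[n]$ or a previously computed gate index $g_j$ with $j < k$. Without loss of generality, I would assume all gates are NAND (or any fixed universal 2-input gate), so the gate type contributes only an $O(1)$ overhead.

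For the construction, I would allocate $T(n)$ CoT positions after the $n$ input positions, and design the position embedding at position $n+k$ to encode the two source indices $a_k, b_k \in [n+k-1]$ of gate $g_k$. Each index requires $O(\log(T(n)+n)) = O(\log n)$ bits, so the total position embedding size is $O(T(n)\log n)$, matching the stated bound. At each CoT step $k$, I would use two attention heads in a single Transformer layer to implement the $\Match$ primitive (\Cref{lem:match}): the first head matches the query $a_k$ (read from the position embedding) against the position coordinates of all prior tokens and retrieves the Boolean value stored there; the second head does the same for $b_k$. A constant-size feed-forward layer then computes $\mathrm{NAND}$ of the two retrieved bits and writes it into the designated output coordinate of the current token.

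To make the $\Match$ work within constant precision and $O(\log n)$ width, I would store each index and its square bitwise across $O(\log n)$ constant-precision coordinates, so that the inner product $(K x)^\top (Q x)$ that $\Match$ relies on can be computed exactly with constant-precision arithmetic. Because attention queries the full history, both circuit input tokens (at positions $1,\ldots,n$) and previously decoded gate tokens (at positions $n+1,\ldots,n+k-1$) can be retrieved uniformly; I would pad the input tokens so that their Boolean value sits in the same output coordinate that CoT tokens later use. After $T(n)$ CoT steps, the value of the final output gate appears in the last position and is decoded as the answer.

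The main obstacle is verifying that $\Match$ retrieves correctly under the bit-sliced, constant-precision representation: the standard $\Match$ construction uses $O(\log n)$-bit floats, so I would have to reimplement its softmax-based selection by distributing the index comparison across $O(\log n)$ coordinates (each contributing a constant-precision penalty) and summing them through the attention logits, then appealing to the fact that softmax at temperature $\Theta(\log n)$ produces a one-hot distribution up to constant-precision roundoff. The remaining pieces (topological sort, FFN implementation of NAND, the padding of input tokens) are routine once this indexing and retrieval step is established.
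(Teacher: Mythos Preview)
The paper does not give its own proof of this statement: \Cref{thm:trans_circuit} is quoted verbatim as Theorem~3.3 of \cite{li2024chain} and invoked only as background inspiration inside the proof of \Cref{thm:trans_beat_rnn}. So there is no in-paper proof to compare against, and your proposal should be read as a reconstruction of the Li et al.\ argument rather than a match to anything here.

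That said, your reconstruction is sound and lines up with the standard approach: put the circuit's wiring diagram into the position embedding (two $O(\log n)$-bit source indices per gate, giving the $O(T(n)\log n)$ bound), use attention-based $\Match$ to fetch the two operand bits from earlier positions, and compute the gate with a constant-size FFN. Your identification of the constant-precision $\Match$ as the main technical point is correct, and your bit-sliced fix --- spreading each index across $O(\log n)$ constant-precision coordinates so the attention logit gap scales like $\Theta(\log n)$ --- is exactly the right workaround; note that this implicitly requires $T(n)=\poly(n)$, which you already assume when writing $O(\log(T(n)+n))=O(\log n)$.

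For context, the closest thing in this paper is the construction in \Cref{sec:transbeatrnn}, which simulates an RNN circuit gate by gate in the same spirit but stores the gate descriptors $t(i),l(i),r(i)$ in an $O(\CircuitSize(n))$-wide FFN rather than in position embeddings. That shift is possible there because the same RNN circuit is reused at every time step, whereas in the general circuit setting you treat, the wiring is not periodic and must live in the position embedding --- which is precisely what your proposal does.
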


However, direct utilization of~\Cref{thm:trans_circuit} is not feasible because we are interested in (1) $O(\log n)$ precision Transformer, and (2) simulating the RNN for $n^A$ step, which would correspond to a circuit with $n^A \CircuitSize(n)$ in size. However, as the calculation is recurrent, we can encode the RNN circuit in $O(\CircuitSize(n))$ parameter instead. 

To do so, we will unroll the circuit of each recurrent step of the RNN into $\CircuitSize(n)$ gates. We will then assign each gate a unique id in $[\CircuitSize(n)]$ and assume the circuit is calculated in the order of the gate id in the following manner.

\begin{enumerate}
    \item Each gate has a type $t(i)$, which is either a constant gate outputting $1$, an input gate, a hidden state gate, an AND gate, or an XOR gate.
    \item Each gate $i$'s output depends on two values $l(i)$ and $r(i)$. If $t(i)$ is a constant gate, then $l(i)$ and $r(i)$ are assigned to be $0$. When it is an input gate, $l(i)$ will be assigned to be the coordinate of the input embedding and $r(i)$ will be assigned to be the index of the bit of the value at $l(i)$ coordinate. When it is a hidden state gate, $l(i)$ will be assigned to be the coordinate of the hidden state embedding, and $r(i)$ will be assigned to be the index of the bit of the value at $l(i)$ coordinate. If it is an AND gate or an XOR gate, $l(i)$ and $r(i)$ will be assigned to be the id of the two gates that it depends on.
\end{enumerate}

We will further assume without loss of generality that the hidden state gate is the first $\precision \hiddendimension$ gate. The output of the last $\precision \hiddendimension$ gate will be the next hidden state. We will also assume that the last $\precision(\hiddendimension + \dimension)$ to $\precision\hiddendimension - 1$ gates are the output gates. 
We will now first describe the chain of thought that the Transformer will output and then construct the Transformer.

\paragraph{Chain of thought} Taking any input $\TokenSequence$ with length $n$, the Transformer will output a sequence of $0$ and $1$ tokens. The first $n$ tokens will be the same as the input sequence. For each $a \ge 0$ and $b \in [\CircuitSize(n) + 1]$, the $n + a \left(\CircuitSize(n) + 1\right) + b$ token is 
\begin{enumerate}
    \item the output of gate $b$ when RNN circuit is calculating the output at $a$ position plus $1$, if $b \le \CircuitSize(n)$.
    \item the $n + a + 1$ token in the RNN chain of thought, if $b = \CircuitSize(n) + 1$.
\end{enumerate}

\paragraph{Construction of the Transformer.}

\begin{enumerate}
    \item \textbf{Layer 1.} The first attention layer will output zero and the first FFN layer will be of width $O(\CircuitSize(n))$, encoding all the gate information. The output of the first layer at position $n + a \left(\CircuitSize(n) + 1\right) + b$ will have the following coordinate:
    \begin{itemize}
        \item The input $i$ will be encoded in the first dimensions.
        \item $a, a^2, b, b^2$ will be encoded in four different dimensions.
        \item The gate type $t(s(b))$ will be encoded in the next dimension, where $s(b) = (b + 1) \mod (\CircuitSize(n) + 1)$ If $b = \CircuitSize(n) - 1$, then the gate type will be encoded as $0$.
        \item The necessary dependence $l(s(b)), l^2(s(b))$ and $r(s(b)), r^2(s(b))$ will be encoded in the next two dimensions.
        \item A constant $1$ will be encoded in the next dimension.
    \end{itemize}

    \item \textbf{Layer 2 Attention.} Together with the Layer 1 FFN, the Layer 2  Attention will implement two match functions (\Cref{def:match}) to copy the output of gate $l(b+1)$ and $r(b + 1)$ when RNN circuit is calculating the output at $a$ position. When the type of gate $b + 1$ is not AND or XOR, the result will not be used in the following calculation.
    
    \item \textbf{Layer 2 FFN} Layer 2 FFN will be of width $O(1)$. The output of the layer will be 
    \begin{itemize}
        \item When $b < \CircuitSize(n)$ and $t(s(b))$ is AND or XOR or constant, one dimension of the output will be the output of gate $b + 1$ when RNN circuit is calculating the output at $a$ position.
        \item  When $b < \CircuitSize(n)$ and $t(s(b))$ is an input or hidden state gate or $b = \CircuitSize(n) + 1$, one dimension of the output will be the position in the current chain of thought where the input bit or hidden state bit is copied from and the other dimension will be the square of that position
        \item When $b = \CircuitSize(n)$, the output remains the same as the input to Layer 2 FFN.
    \end{itemize}

    \item \textbf{Layer 3 Attention.} Layer 3 Attention will be of width $O(1)$. Together with Layer 2 FFN, Layer 3 Attention will implement $\Match$ heads (\Cref{def:match}) to copy the output at the position where the input bit or hidden state bit is copied from. When the type of gate $b + 1$ is not input or hidden state gate, the result will not be used in the following calculation.
    
    \item \textbf{Layer 3 FFN} Layer 3 FFN will be of width $O(1)$. The output of the layer will be
    \begin{itemize}
        \item When $b \neq \CircuitSize(n)$, one dimension of the output will be output of gate $s(b)$ when RNN circuit is calculating the output at $a + \one{b = \CircuitSize(n) + 1}$ position.
        \item When $b = \CircuitSize(n)$, the output remains the same as the input to Layer 3 FFN.
    \end{itemize} 

    \item \textbf{Layer 4} Layer 4 Attention will have $\precision - 1$ heads and each head will be of width $O(1)$. Head $h \in [\precision - 1]$ will copy the first dimension of the output of Layer 3 FFN at position $n + a(P(n) + 1) + b - (\precision - h)$ and weight each of them by $2^{-h + (\precision - 1) / 2}$ and add them in one dimension. The Layer 4 FFN will calculate $r$ when the first dimension of the input is $1$ and $-r$ otherwise. Hence, for each $a \ge 0$, the $n + a \left(\CircuitSize(n) + 1\right) - h \precision, h \in[\Lambda: \Lambda + d]$ token contains a dimension $i_1$ which is the $k - \Lambda$ dimension of the output of the RNN at position $a$.
    
    \item \textbf{Layer 5} Layer 5 Attention will have $\dimension + 1$ heads and each head will be of width $O(1)$. Head $h \in [\dimension + 1]$ will copy the dimension $i_1$ of the output of Layer 4 FFN at position $n + a(P(n) + 1) + b - (h + \Lambda)\precision$ to a disjoint dimension $i_{h + 1}$. The Layer 5 FFN will then make sure the output of Layer 5 satisfies the following:
    \begin{itemize}
        \item When $b \neq \CircuitSize(n)$, one dimension of the output will be $n$ times the output of gate $s(b)$ when RNN circuit is calculating the output at $a + \one{b = \CircuitSize(n) + 1}$  position plus $1$, which will decode to the corresponding value.
        \item When $b = \CircuitSize(n)$, the first $\dimension$ dimension of the output will be the same as the output of the RNN at position $a$, and the rest dimension will be $0$, which will decode to the same token as the chain of thought of the RNN at position $a + 1$.
    \end{itemize} 
\end{enumerate}
This concludes the construction of the Transformer.
\end{proof}

\subsection{Proof of~\Cref{thm:index_re}}

We will first state the theorem for clarity.

\thmindexre*

\begin{proof}

When task $T$ is Index, given an input sequence that ends with a query token $k$, RNN will generate the following search query sequence:$\StartSearch$, {\verb|^(?:\S\s*){|\regexparam{$k-1$}\verb|}(\S)|}, $\EndSearch$.

Then the regular expression will match the $k$-th token in the input sequence. The RNN needs to recite the format of the query, remember the index $k$ and calculate $k - 1$ to generate the regular expression. As we have shown in~\Cref{lem:recite_fixed_sequence,lem:recent_input_memorizing}, RNN can recite a fixed sequence at fixed position and memorize the recent input, the above sequence can be generated by an RNN. The explicit construction of the RNN is omitted here.

    When task $T$ is AR, given an input sequence that ends with a query token $k$, RNN will generate the following search query sequence: $\StartSearch$, {\verb|\b|\,\regexparam{$k$}\,\verb|\b(\S+)\b|}, $\EndSearch$.
    
    Then the regular expression will match the next token after the first occurrence of $k$ in the input sequence. The RNN needs to recite the format of the query and remember the query token $k$ to generate the regular expression. The explicit construction of the RNN is omitted here.

    When task $T$ is c-gram retrieval, given an input sequence that ends with query tokens, RNN will generate the following search query sequence: $\StartSearch$, {\verb|\b|\,\regexparam{$k_1 \ldots k_{c-1}$}\,\verb|\b(\S+)\b|}, $\EndSearch$.

    Then the regular expression will match the next token after the first occurrence of $k_1, \ldots k_{c-1}$ in the input sequence. The RNN needs to recite the format of the query and remember the query tokens $k_1, \ldots k_{c-1}$ to generate the regular expression. The explicit construction of the RNN is omitted here.

    When task $T$ is Counting, given an input sequence that ends with a query token $v$ and a query threshold $k$, RNN will generate the following search query sequence $\StartSearch$,{\verb|(\b|\,\regexparam{$v$}\,\verb|\b){|\regexparam{$k+1$}\verb|}|}, $\EndSearch$.

    Then the regular expression will match the $k$-th occurrence of $v$ in the input sequence. The RNN needs to recite the format of the query and remember the query token $v$ and the threshold $k$ to generate the regular expression. The RNN can then check whether the retrieval result is $\Failed$ to determine if the count is less than $k$. The explicit construction of the RNN is omitted here.

\end{proof}
\subsection{Proof of~\Cref{thm:rag_istree}}
\label{sec:proof_rag_istree}

In this section, we will prove~\Cref{thm:rag_istree}. We will first state the theorem for convenience.

\thmragistree*

\begin{proof}[Proof of~\Cref{thm:rag_istree}]

    We will first define the sequence that the retrieval-augmented RNN will generate and then construct an RNN that can generate such a sequence.

    \paragraph{Sequence Generation.} We will use a variant of~\Cref{alg:istree} to generate the sequence and we will use the concatenation of the tokenization of the sequence returned by~\Cref{alg:istreeretrieval} as the sequence that the retrieval augmented RNN will generate.

    \begin{algorithm}
        \caption{Depth-First Search Algorithm with Retrieving}
        \begin{algorithmic}[1]
            \label{alg:istreeretrieval}
        \REQUIRE A graph $G = (V, E)$ with $n$ vertices and $E$ has an ordering $e_1, \ldots, e_m$.
        \STATE Initialize two stacks of vertices $S_1, S_2$ with $S_1 = [v_1], S_2 = \emptyset$, a list $L$ with $L = \emptyset$, and a vertex $v' = \Failed$.
        \WHILE{$S_1$ is not empty}
            \STATE Let $v$ be the top of $S_1$. {\color{green}{Push $v$ to $L$.}}
            \STATE Generate the regular expression
            $ r_2 =$ {\verb|\b(\S+)\b|\,$-$\,\verb|\b|\regexparam{$v$}\verb|\b|} .
            \STATE Let $f(v)$ be the predecessor of $v$ in $S_1$ for the first time and $\Failed$ when $v = v_1$.
            \STATE {\color{green}{Push $\StartSearch, r_2, \EndSearch, f(v)$ to $L$.}}
            \IF {$v' \neq f(v)$}
                \STATE Generate the regular expression
                
                $r_1$ = \Big({\verb|\b|\regexparam{$v'$}\verb|\b|\,$\sim$\,\verb|\b|\regexparam{$v$}\verb|\b|} {\large{|}}   {\verb|\b|\regexparam{$v$}\verb|\b|\,$\sim$\,\verb|\b|\regexparam{$v'$}\verb|\b|}\Big)\verb|.*?|\Big({\verb|\b(\S+)\b|\,$\sim$\,\verb|\b|\regexparam{$v$}\verb|\b|} {\large{|}}   {\verb|\b|\regexparam{$v$}\verb|\b|\,$\sim$\,\verb|\b(\S+)\b|} \Big)                  
            \ELSE 
                \STATE Generate the regular expression
                $r_1 =${\verb|\b(\S+)\b|\,$\sim$\,\verb|\b|\regexparam{$v$}\verb|\b|} {\large{|}}   {\verb|\b|\regexparam{$v$}\verb|\b|\,$\sim$\,\verb|\b(\S+)\b|}     
            \ENDIF
            \STATE {\color{green}{Push $\StartSearch, r_1, \EndSearch$ to $L$.}}
            \IF{ there exists a neighbor $u$ of $v$ such that $(u, v)$ has larger order than $(v, v')$ when $v' \neq f(v)$ or there exists a neighbor $u$ of $v$ such that $u \neq f(v)$ when $v' = f(v)$}
                \STATE Choose $u$ such that edge $(u,v)$ has the smallest possible order and {\color{green}{push $u$ to $L$.}} Let $v'' = u$.
            \ELSE
                \STATE {\color{green}{Push $\Failed$ to $L$.}} Let $v'' = \Failed$.
            \ENDIF
            \IF{$v'' = f(v) \neq \Failed$}
            \STATE Generate the regular expression
            
            $
            r_3$ = \Big({\verb|\b|\regexparam{$v''$}\verb|\b|\,$\sim$\,\verb|\b|\regexparam{$v$}\verb|\b|} {\large{|}}   {\verb|\b|\regexparam{$v$}\verb|\b|\,$\sim$\,\verb|\b|\regexparam{$v''$}\verb|\b|}\Big)\verb|.*?|\Big({\verb|\b(\S+)\b|\,$\sim$\,\verb|\b|\regexparam{$v$}\verb|\b|} {\large{|}}   {\verb|\b|\regexparam{$v$}\verb|\b|\,$\sim$\,\verb|\b(\S+)\b|} \Big)
            \STATE {\color{green}{Push $\StartSearch, r_3, \EndSearch$ to $L$.}}
            \IF{ there exists a neighbor $u$ of $v$ such that $(u, v)$ has larger order than $(v, v'')$}
                \STATE Choose $u$ such that edge $(u,v)$ has the smallest possible order and {\color{green}{push $u$ to $L$.}} Let $v'' = u$.
            \ELSE
                \STATE {\color{green}{Push $\Failed$ to $L$.}} Let $v'' = \Failed$.
            \ENDIF
            \ENDIF
            \IF{$v'' = \Failed$}
                \STATE Pop $v$ from $S_1$. Push $v$ to $S_2$. Let $v' = v$.
            \ELSE 
                \STATE Generate the regular expression
                $r_4 = ${\verb|\b(\S+)\b|\,$-$\,\verb|\b|\regexparam{$v''$}\verb|\b|}
                \STATE {\color{green}{Push $\StartSearch, r_4, \EndSearch, 0$ to $L$.}}
                \IF{$v''$ is not in $S_1$}
                    \STATE {\color{green}{Push $\Failed, v, -, v''$ to $L$.}}
                    \STATE Push $v''$ to $S_1$. Let $v' = v$.
                \ELSE 
                    \STATE Let $f(v'')$ be the predecessor of $v$ in $S_1$ for the first time and $\Failed$ when $v'' = v_1$.
                    \STATE {\color{green}{Push $f(v''), \NO$ to $L$.}}
                    \STATE \textbf{return} $L$.
                \ENDIF
            \ENDIF
        \ENDWHILE
        \IF {$S_2$ has $n$ vertices}
            \STATE {\color{green}{Push $\YES$ to $L$.}}
            \STATE \textbf{return} $L$.
        \ELSE
            \STATE {\color{green}{Push $\NO$ to $L$.}}
            \STATE \textbf{return} $L$.
        \ENDIF
        \end{algorithmic}
        \end{algorithm}

    \paragraph{RNN Construction.} We can use similar proof in~\Cref{thm:rnn_turing} by having the RNN memorize local sequences and determine the phase of~\Cref{alg:istreeretrieval} it is in. The RNN will maintain the length of $S_2$ (\Cref{lem:summation}) and the top of $S_1$ in the state (\Cref{lem:recent_input_memorizing}) and it is easy to check that the retrieval function will retrieve the correct result for each search query. The way to determine the next vertex in the stack is the same as in the proof of~\Cref{lem:log_trans_istree}. We will omit the simple but tedious detailed construction here.
\end{proof}

\subsection{Proof of~\Cref{thm:rnn_turing}}

In this section, we will prove~\Cref{thm:rnn_turing}. We will first restate the theorem for convenience.

\rnnturing*

\begin{proof}[Proof of~\Cref{thm:rnn_turing}]
    We will denote the state of $T$ as $1, \ldots, B$ (we will use $1$ as the initial state) and the vocabulary of $T$ as $1, \ldots, B$. We will assume $T$ operates on an infinite tape $\tape$, which is a sequence of cells indexed by $\mathbb{Z}$. We will also assume that the tape is initialized with all cells being $0$ except for the $n$ cell starting at $1$. The Turing machine also has a pointer $p$ that points to a cell in the tape. The pointer is initialized to $1$. At each time step, the Turing machine reads the cell pointed by $\pointer$ and updates the cell pointed by $\pointer$ and the pointer $p$ according to the transition function $\delta: [B + 1] \times [B] \to [B] \times [B] \times \{-1, 1\}$, which takes the current state and the current cell value (could be empty, which corresponds to $B + 1$) as input and outputs the new state, the new cell value and the direction to move the pointer. The Turing machine halts when the state is $B$. Because $T \in \Time(n^A)$, the Turing machine will always halt in $n^A$ steps. We will use $\tape[t,i]$ as the value on the $i$-th cell on $\tape$ before the $t$ timestep. We will use $\pointer[t]$ as the value of the pointer before the $t$ timestep and $\TuringState[t]$ as the state of the Turing machine before the $t$ timestep. We will further use $\Direction[t]$ as the direction of the pointer movement before the $t$ timestep. 

    We will first define the sequence that the retrieval-augmented RNN will generate and then construct an RNN that can generate such a sequence.

    \paragraph{Sequence generation.} The input token sequence $\InputTokenSequence$ will be as followed,
    \begin{equation*}
        \InputTokenSequence = \StartSentence, \tape[1,1], \tape[1,2], \ldots, \tape[1,n]
    \end{equation*}
    Here all the symbols on the tape are represented by one special symbol in the vocabulary. Given this input token sequence, the retrieval augmented RNN will generate the following output token sequence, 
    \begin{align*}
        \TokenSequence =\InputTokenSequence,& \StartSearch,\texttt{\detokenize{^(?:\S}\unskip\detokenize{\s}\unskip*)\unskip.\{\regexparam{1}\}({\detokenize{\S}}\unskip)}, \EndSearch, \\
        &\result{\tape[1,1]}\\
        & 1, \tape[1,1], 1, \\
        & \ldots \\
        & \StartSearch, \texttt{\detokenize{^(?:\S}\unskip\detokenize{\s}\unskip*)\unskip.\{\regexparam{n}\}({\detokenize{\S}}\unskip)}, \EndSearch, \\ 
        &\result{\tape[1,n]}\\
        & n, \tape[1,n], n, \\
        &\StartSearch, ((\text{\regexparam{$\Pointer[1]$} \ (.) \ \regexparam{$\Pointer[1]$}\ \texttt{.*?\$}})), \EndSearch, \\
        & \result{\SearchResult(1)}, \\
        &\Pointer[1], \Tape[2, \Pointer[1]], \Pointer[1] \\
        &\TuringState[2], \Direction[2],\\
        & \ldots \\
        &\StartSearch, (\text{\regexparam{$\Pointer[t]$} \ (.) \ \regexparam{$\Pointer[t]$}\ \texttt{.*?\$}}), \EndSearch, \\
        & \result{\SearchResult(t)} \\
        & \Pointer[t], \Tape[t + 1, \Pointer[t]], \Pointer[t], \\
        &\TuringState[t + 1], \Direction[t + 1],
    \end{align*}
    Here $\SearchResult(t)$ is defined as 
    \begin{align*}
        \SearchResult(t) = \begin{cases}
            &\Failed;   \text{if } \Pointer[t] \text{ is empty cell before } t \\
            & \Tape[t, \Pointer[t]] 
              \text{;otherwise}  \\
        \end{cases}
    \end{align*}
    The output token sequence simulates the Turing machine $T$ on the tape $\tape$ due to the following lemma.

    \begin{lemma}
    \label{lem:sim_turing}
        Given any $t \in [n^A]$ and $i \in [n^A]$, the last string in $\TokenSequence$ that contains $i, i$ as a substring is $i, \tape[t,i], i$ if $\tape[t,i]$ is not empty and is the empty string otherwise.
    \end{lemma}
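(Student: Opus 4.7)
The plan is to prove the lemma by induction on $t$, where the induction statement is pinned to a specific checkpoint in the growing output sequence: namely, for each $t$, the ``last $i,i$ substring'' is evaluated at the point immediately before the retrieval-augmented RNN begins emitting the search query corresponding to timestep $t$. With this convention, the recorded tape $\tape[t,\cdot]$ matches the model's actual contents precisely at that checkpoint, and the claim can be propagated one timestep at a time.

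For the base case $t=1$, the relevant prefix consists of $\StartSentence, \tape[1,1], \ldots, \tape[1,n]$ followed by the $n$ initializing blocks of the form $\StartSearch, r_i, \EndSearch, \result{\tape[1,i]}, i, \tape[1,i], i$. For each $i \in [n]$ the unique $3$-window whose first and third token equal $i$ is the update triple $i,\tape[1,i],i$, since the Index-style regex $r_i$ uses the number $i$ only once as a quantifier argument. For $i > n$, no such window exists anywhere in the prefix, which matches the convention $\tape[1,i]$ is empty.

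For the inductive step, I would catalogue the tokens appended between the checkpoints for $t$ and $t+1$: the search query $\StartSearch,\Pointer[t],(.),\Pointer[t],\texttt{.*?\$},\EndSearch$, the retrieved answer $\SearchResult(t)$, the update triple $\Pointer[t],\Tape[t{+}1,\Pointer[t]],\Pointer[t]$, and finally $\TuringState[t{+}1],\Direction[t{+}1]$. I would then split on $i$. When $i = \Pointer[t]$, exactly two new $3$-windows of shape $i,?,i$ appear in this block, the regex window $\Pointer[t],(.),\Pointer[t]$ and the update window $\Pointer[t],\Tape[t{+}1,\Pointer[t]],\Pointer[t]$; since the latter is strictly later and $\Tape[t{+}1,\Pointer[t]] = \tape[t{+}1,i]$, the induction step closes. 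When $i \neq \Pointer[t]$, I would observe that $\tape[t{+}1,i]=\tape[t,i]$ and verify that no window in the newly appended tokens has first and third coordinate both equal to $i$, so the inductive hypothesis carries over intact.

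The main obstacle is the bookkeeping required to rule out ``spurious'' $i,?,i$ windows that straddle the boundary between two consecutive blocks, or that lie inside the search-query tokens. This reduces to checking that the token categories---numeric cell values/pointers, special tokens $\StartSearch,\EndSearch,\StartSentence$, regex meta-tokens such as $(.)$ and $\texttt{.*?\$}$, state tokens, and direction tokens---are drawn from disjoint portions of the vocabulary, so that a boundary $3$-window cannot have two equal numeric endpoints unless both endpoints are genuinely the same pointer $\Pointer[s]$. A short case analysis over the six positions where blocks meet suffices; once this housekeeping is discharged, the induction is essentially a bookkeeping exercise, and the lemma immediately yields that $\RETRIEVE$ at timestep $t$ returns $\tape[t,\Pointer[t]]$, which is the property needed by the surrounding proof of $\Cref{thm:rnn_turing}$.
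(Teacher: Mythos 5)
Your proposal is correct and follows the same route as the paper's (extremely terse) proof: induction on $t$ with the single key observation that at step $t$ the only cell overwritten is $\Pointer[t]$, so the only freshly appended triple of the form $i,\cdot,i$ is $\Pointer[t],\Tape[t{+}1,\Pointer[t]],\Pointer[t]$. Your version additionally pins down the checkpoint at which $\TokenSequence$ is read, handles the base case explicitly, and flags (and dismisses) the potential spurious windows coming from the regex tokens and block boundaries — all of which the paper silently elides but which are genuinely needed to make the two-sentence argument airtight.
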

    \begin{proof}
        The proof is by induction, for $t = 1$, the result holds. For any $t \ge 2$, we only need to notice that $\Pointer[t -1]$ is the only cell that can be updated at time $t - 1$.
    \end{proof}

    \paragraph{Construction of RNN}

    Given the input, the RNN can first iterate over $1$ to $n$ and generate the first $n$ search queries and results by maintaining a counter in its state and memorizing the most recent search result (\Cref{lem:recent_input_memorizing}). Then it is easy to see that the retrieval oracle will generate the correct $\SearchResult(t)$ given the input $\TokenSequence$. Therefore, we will only need to construct an RNN that can generate the rest part of $\TokenSequence$.

    We will assume the RNN maintains the state and pointer of the Turing machine in its state and show that they can be updated.

    Based on~\Cref{lem:recent_input_memorizing}, the RNN can maintain constant recent token types in its state, we will assume the RNN memorize the last tokens up to the most recent $\StartSearch$ and also calculate the position relative to the most recent $\StartSearch$. By a lookup table in the FFN~\Cref{lem:lookup_table}, the RNN can output the fixed format of the search query. Similarly, RNN can output the $\Pointer[t]$. To generate the update $\Tape[t + 1, \Pointer[t]], \TuringState[t], \Direction[t]$, the RNN can use a FFN with $O(B^2)$ width to memorize the transition function of the Turing machine and output the update. Then, the RNN can use the memorized recent input to update the state and the pointer of the Turing machine at the next $\StartSearch$. The proof is then complete.

\end{proof}

\subsection{Proof of~\Cref{thm:hybrid_index}}

\thmhybridindex*

\begin{proof}
    The proof here is essentially the same as the construction of the Transformer in~\Cref{thm:index}. We would use the same Transformer layer to solve $T$. The only difference is that we would use the output of the RNN, instead of FFN, as the input of the Transformer layer.Also for Counting, instead of using a $\COPY$ function, we write the query token in the state of the RNN (\Cref{lem:recent_input_memorizing}).
\end{proof}

\subsection{Proof of~\Cref{thm:hybrid_istree}}
\label{sec:thmhybridistree}
\thmhybridistree*

\begin{proof}
The proof is similar to the proof of~\Cref{thm:rag_istree}. However, instead of using regular expressions to retrieve the next neighbor and parent, we will need to use the Transformer layer. The Transformer layer can retrieve the parent through an attention head implementing the match closest head (\Cref{lem:matchclose}) if the RNN part maintains the predecessor of each node in the chain of thought. 

Retrieving the next neighbor is more complicated and we will use $O(\log n)$ steps of the chain of thought to do that. Given an edge $(v, v')$, we will first use one match head to retrieve the position $p$ of $(v, v')$ in the input sequence and write it to the chain of thought. Then we will use two $\MatchClose$ heads to retrieve the edge that contains $v$ and is closest to $p + 2^i$ for $i = 0, 1, \ldots, \log_2 n$ iteratively until the heads return an edge that is not $(v, v')$ or $i$ reaches $\log_2 n$. Here $2^i$ can be computed through doubling one of the dimensions in the state of the RNN and reset that dimension to $1$ after termination.
We will then compare the retrieved edge with the father of $v$ to check if it is the same. If it is the same, we will search the next neighbor of $v$ after the parent of $v$ in the same way. The other part of the proof is similar to the proof of~\Cref{thm:rag_istree}.
\end{proof}

\subsection{Proof of~\Cref{thm:hybrid_turing}}

\hybridturing*

\begin{proof}

\textbf{Sequence Generation.} Under the same formulation of proof of the~\Cref{thm:rnn_turing}. The hybrid RNN will output the following sequence.
\begin{align*}
    \TokenSequence =\InputTokenSequence,& \Pointer[1], \Tape[2, \Pointer[1]], \Pointer[1] \\
    &\TuringState[2], \Direction[2], \\
    & \ldots \\
    & \Pointer[t], \Tape[t + 1, \Pointer[t]], \Pointer[t], \\
    &\TuringState[t + 1], \Direction[t + 1],
\end{align*}

Note that~\Cref{lem:sim_turing} still holds. We only need to prove that the hybrid architecture can generate the above sequence.

\textbf{Hybrid Construction.} The way RNN maintains the pointers and the states is the same as the proof of~\Cref{thm:rnn_turing}. Given each pointer value, we can retrieve the last value of the cell at the pointer through the one layer of attention by implementing a match closest head (\Cref{lem:matchclose}).
\end{proof}

\end{document}